\newtheorem{proposition}{Proposition}
\newtheorem{lemma}[proposition]{Lemma} 
\newtheorem{remark}{Remark}
\newtheorem{corollary}{Corollary}
\newcommand{\BlackBox}{\rule{1.5ex}{1.5ex}}
\newenvironment{proof}{\par\noindent{\bf Proof\ }}{\hfill\BlackBox\\[2mm]}
\newcommand{\sq}[2][0]{% "sq" for "squeeze"
  \mbox{$\medmuskip=#1mu\displaystyle#2$}%
}
\DeclareMathAlphabet\mathbfcal{OMS}{cmsy}{b}{n}
\newcommand{\mat}[1]{\mathbf{#1}}
\begin{document}

\title{Self-Healing Robust Neural Networks via Closed-Loop Control}

% \author{Zhuotong Chen, Qianxiao Li and Zheng Zhang}
%\author{Zhuotong Chen, Qianxiao Li and Zheng Zhang}

\author{\name Zhuotong Chen \email ztchen@ucsb.edu \\
      \addr Department of Electrical and Computer Engineering\\
      University of California at Santa Barbara, \\
      Santa Barbara, CA, USA
      \AND
      \name Qianxiao Li \email qianxiao@nus.edu.sg \\
      \addr Department of Mathematics, \\ National University of Singapore \\ Singapore, 119076
      \AND
      \name Zheng Zhang \email zhengzhang@ece.ucsb.edu \\
      \addr Department of Electrical and Computer Engineering\\
      University of California at Santa Barbara, \\
      Santa Barbara, CA, USA}

% \editor{}

\maketitle

\begin{abstract}
Despite the wide applications of neural networks, there have been increasing concerns about their vulnerability issue. 
While numerous attack and defense techniques have been developed, this work investigates the robustness issue from a new angle: can we design a self-healing neural network that can automatically detect and fix the vulnerability issue by itself? 
A typical self-healing mechanism is the immune system of a human body. This biology-inspired idea has been used in many engineering designs, but is rarely investigated in deep learning.  
This paper considers the post-training self-healing of a neural network, and proposes a closed-loop control formulation to automatically detect and fix the errors caused by various attacks or perturbations.
We provide a margin-based analysis to explain how this formulation can improve the robustness of a classifier.
To speed up the inference of the proposed self-healing network, we solve the control problem via improving the Pontryagin’s Maximum Principle-based solver.
Lastly, we present an error estimation of the proposed framework for neural networks with nonlinear activation functions.
We validate the performance on several network architectures against various perturbations. 
Since the self-healing method does not need {\it a-priori} information about data perturbations/attacks, it can handle a broad class of unforeseen perturbations.
\footnote{A Pytorch implementation can be found in:\url{https://github.com/zhuotongchen/Self-Healing-Robust-Neural-Networks-via-Closed-Loop-Control.git}}. 
\end{abstract}

\begin{keywords}
Closed-loop Control, Neural Network Robustness, Optimal Control, Self-Healing, Pontryagin's Maximum Principle
\end{keywords}

\section{Introduction}
Despite their success in massive engineering applications, deep neural networks are found to be vulnerable to perturbations of input data, to fail to ensure fairness across sub-groups of data, and experience performance degradation when implemented on nano-scale integrated circuits with process variations. In order to ensure trustworthy AI, numerous techniques have been reported, including defense techniques such as adverserial training~\citep{ganin2016domain,allen2022feature} to improve robustness, fairness-aware training~\citep{mary2019fairness,zhang2019fairness}, and fault-tolerant computing~\citep{qiao2019fault, wang2020scalable} or hardware design~\citep{moon2019enhancing,reagen2018ares}. Most techniques optimize neural network models and computing hardware in the design phase based on some assumptions about attacks, data imbalance or hardware imperfections, but the performance of a neural network may degrade significantly when these assumptions do not hold in practical deployment.

A fundamental question is: {\it can we design a self-healing process for a given neural network to handle a broad range of unforeseen data or hardware imperfections}? 
In a figurative sense, self-healing properties can be ascribed to systems or processes, which by nature or design tend to correct any disturbances brought into them. 
For instance, in psychology, self-healing often refers to the recovery of a patient from a psychological disturbance guided by instinct only. In 
physiology, the most well-known self-healing mechanism is probably the human's immune system: B cells and T cells can work together to identify and kill many external attackers (e.g., bacteria) to maintain the health of a human body \citep{rajapakse2011emerging}. 
This idea has been applied in semiconductor chip design, where self-healing integrated circuit can automatically detect and fix the errors caused by imperfect nano-scale fabrication, noise or electromagnetic interference~\citep{tang2012low, lee2012self,goyal2011new,liu2011v,chien2012dual,keskin2010statistical,sadhu2013linearized,sun2014indirect}. 
In the context of machine learning, a self-healing process is expected to fix or mitigate some undesired issues by itself, either with or without a performance detector.

In this paper, we show that it is possible to build a self-healing neural network to achieve better robustness.
Specifically, we realize this proposal via a closed-loop control method. It has been well known that an imperceptible perturbation of an input image can cause misclassification in a well-trained neural network \citep{szegedy2013intriguing, goodfellow2014explaining}. 
Many defense methods have been proposed to address this issue, including training-based defense~\citep{madry2017towards, zhang2019theoretically, gowal2020uncovering} (such as adverserial training) which focuses on the classifier itself,
and data-based defense~\citep{song2017pixeldefend, samangouei2018defense, venugopalancountering} that exploits the underlying data information. A main drawback of the training defense techniques is that they assume a specific type of attack/perturbation {\it a-priori},
and existing data-based methods are vulnerable against specifically designed attacks \citep{athalye2018obfuscated}.
In a practical setting, it is often hard (or even impossible) to foresee the possible attacks/perturbations in advance. 
Furthermore, the input attacks/perturbations could be a combination of many types. Significantly differing from the attack-and-defense methods, self-healing does not need attack/perturbation information, and it focuses on detecting and fixing the possible errors by the neural network itself. 
This allows a neural network to handle many types of attacks/perturbations simultaneously. 

\paragraph{Contribution Summary.} The specific contributions of this paper are summarized below:
\begin{itemize}[leftmargin=*]
    \item {\bf Closed-loop control formulation and margin-based analysis for post-training self-healing.} We consider a closed-loop control formulation to achieve self-healing in the post-training stage, with a goal to improve the robustness of a given neural network under a broad class of unforeseen perturbations/attacks. This self-healing formulation has two key components: embedding functions at both input and hidden layers to detect the possible errors, and a control process to adjust the neurons to fix or mitigate these errors before making a prediction.     We investigate the working principle of the proposed control loss function,
    and reveal that it can modify the decision boundary and increase the margin of a classifier.
    %Based on this, we introduce three types of margins that are of relevance, which involve an interplay of the ground truth classifier, the trained classifier and, the data manifold.Armed with this understanding, we build embedding functions that exploit both model information and data structure.
    
    \item {\bf Fast numerical solver for the control objective function.} The self-healing neural network is implemented via closed-loop control, and this implementation causes computing overhead in the inference. 
    In order to reduce the computing overhead, 
    we solve the Pontryagin's Maximum Principle via the method of successive approximations.
    This numerical solver allows us to handle both deep and wide neural networks.
    
    \item {\bf Theoretical error analysis.}
    We provide an error analysis of the proposed framework in the most general form by considering nonlinear dynamics with nonlinear embedding manifolds.
    The theoretical setup aligns with our algorithm implementation without simplification.
    
    \item {\bf Empirical validation on several datasets.}
    On two standard and one challenging datasets,
    we empirically verify that the proposed closed-loop control implementation of self healing can consistently improve the robustness of the pre-trained models against various perturbations.
\end{itemize}
Our preliminary result was reported in~\citep{chen2021towards}. This extended work includes the following additional contributions: a broader vision of closed-loop control, the margin-based analysis of the loss function, accelerated PMP solver, and more generic error analysis in the nonlinear setting. 

\section{An Optimal Control-based Self-Healing Neural Network Framework}
This section introduces the shared robustness issue in integrated circuits (IC) and in neural networks. 
We show that the self-healing techniques widely used in IC design can be used to improve the robustness of neural networks due to the theoretical similarities of these two seemingly disconnected domains.

\subsection{Self-Healing in IC Design}
In this work, we use ``self-healing" to describe the capability of automatically correcting (possibly after detecting) the possible errors in a neural network. 
This idea has been well studied in the IC design community to fix the errors caused by nano-scale fabrication process variations in analog, mixed-signal and digital system design~\citep{tang2012low, lee2012self,goyal2011new,liu2011v,chien2012dual,keskin2010statistical,sadhu2013linearized,sun2014indirect}.
In practice, it is hard to precisely control the geometric or material parameters in IC fabrication, which causes lots of circuit chips under-performing or even failing to work. 
To address this issue, two techniques are widely used: yield optimization and self healing. 
Yield optimization~\citep{zhang2013yield,wang2017efficient,li2006robust,cui2020chance,he2021pobo} is similar to adversarial training: it chooses the optimal circuit parameters in the design phase to minimize the failure probability assuming that an exact probability density function of the process variation is given. Self-healing, on the other hand, intends to fix the possible circuit errors in the post-design phase, without knowing the distribution of process variations. 
We have similar challenges in trustworthy neural network design: it is hard to foresee what types of attacks/perturbations will occur in the practical deployment of a neural network model, therefore post-training correction can be used to fix many potential errors beyond the capability of adversarial training.
 
 \begin{figure}
    \centering
    \includegraphics[width=\linewidth]{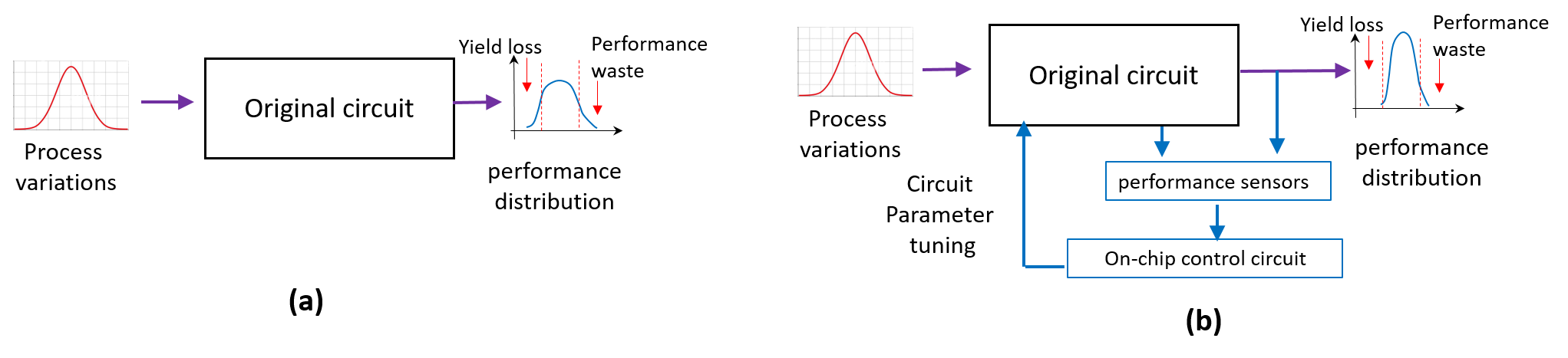}
    \caption{(a) Standard circuit design without self healing. The result can have significant yield loss and performance waste; (b) Self-healing circuit with on-chip performance monitor and control, resulting higher yield and less performance waste.}
    \label{fig: self-healing}
\end{figure}

%  \begin{figure}
%     \centering
%     \includegraphics[width=\linewidth]{}
%     \caption{(a) Standard circuit design without self healing. The result can have significant yield loss and performance waste; (b) Self-healing circuit with on-chip performance monitor and control, resulting higher yield and less performance waste.}
%     \label{fig: self-healing}
% \end{figure}

Among various possible self-healing implementations, closed-loop control has achieved great success in practical chip design~\citep{tang2012low,lee2012self}. The key idea is shown in Fig.~\ref{fig: self-healing}. In Fig.~\ref{fig: self-healing} (a), a normal circuit, possibly after yield optimization (which tries to maximize the success rate of a circuit under various uncertainties), may still suffer from significant yield loss or performance waste due to the unpredictability of practical process variations. As shown in Fig.~\ref{fig: self-healing} (b), in order to address this issue, some on-chip global or local sensors can be added to monitor critical performance metrics. A control circuit is further added on chip to tune some circuit parameters (e.g., bias currents, supply voltage or variable capacitors) to fix the possible errors, such that the output performance distribution is adjusted to center around the desired region with higher circuit yield and less performance waste.

The same idea can be employed to design self-healing neural networks due to the following similarities between electronic circuits and neural networks:
\begin{itemize}[leftmargin=*]
    \item Electronic circuits have similar mathematical formulation with certain types of neural networks. Specifically, an electronic circuit network can be described by an ordinary differential equation (ODE) $d\mat{x}/dt=\mat{f}(\mat{x},t)$ based on modified nodal analysis~\citep{ho1975modified}, where the time-varying state variables $\mat{x}$ denote nodal voltages and branch currents. Recent studies have clearly shown that certain types of neural networks (such as residual neural networks, recurrent neural networks) can be seen as a numerical discretization of continuous ODEs~\citep{weinan2017proposal,li2017maximum,haber2017stable,chen2018neural}, and the hidden states at layer $t$ can be regarded as a time-domain snapshot of the ODE at time point $t$.  
    \item Both integrated circuits and neural networks suffer from some uncertainty issues. In IC design, the circuit performance is highly influenced by noise and process variations $\boldsymbol{\epsilon}$, resulting in a modified governining ODE $d\mat{x}(\boldsymbol{\epsilon})/dt=\mat{f}(\mat{x}(\boldsymbol{\epsilon}),\boldsymbol{\epsilon}, t)$. In neural network design, the prediction accuracy is highly influenced by data corruptions and attacks. As a result, robust design/training become important in both domains.  This issue have been handled in the design phase via robust or stochastic optimization [e.g., yield optimization in IC design~\citep{antreich1994circuit, li2004robust,cui2020chance,he2021pobo} or adversarial training in neural network design~\citep{madry2017towards, zhang2019theoretically, gowal2020uncovering}] which gets $\boldsymbol{\epsilon}$ involved in the optimization process.  Meanwhile, the reachable set computation~\citep{dang2004verification,althoff2011formal} and SAT solvers~\citep{gupta2006sat} that were widely used in circuit verification recently have achieved great success neural in network verification~\citep{gehr2018ai2,jia2020efficient}. However, many self-healing techniques (including post-design self-healing) in circuit design have not been explored in trustworthy neural network design.  
\end{itemize}
Table~\ref{table:eda_nn_analogy} has summarized the analogy of neural networks and electronic IC design. 

\begin{table}[t]
\small
 \caption{Analogy between IC design and neural networks.}
\begin{tabular}{|l l|} \toprule
%\hline

Concepts in IC design                                               & Analogy in neural networks          \\ %\hline
\midrule
circuit equation (ODE) via modified nodal analysis   & ordinary neural networks            \\ %\hline
circuit state variables (voltages and currents)   & features at each layer  \\ %\hline
circuit uncertainties (e.g., process variations, noise) & data corruptions, noise and attacks \\ %\hline
circuit yield                                                  & neural network robustness           \\ %\hline
circuit yield optimization                                     & adverserial training                \\ %\hline
circuit verification                                    & neural network verification                \\ %\hline
self-healing circuit                                           & self-healing neural networks       \\ %\hline
\bottomrule
\end{tabular}
 \label{table:eda_nn_analogy}
\end{table}

\subsection{Self-Healing Robust Neural Network via Closed-Loop Control}
This work will implement post-training self healing via closed-loop control to achieve better robustness of neural networks. 
Similar to the self-healing circuit design~\citep{lee2012self,goyal2011new,liu2011v,chien2012dual,keskin2010statistical,sadhu2013linearized,sun2014indirect}, 
some performance monitors and control blocks can be added to a given $T$-layer neural network as shown in Fig.~\ref{fig: SH-NN} (a).  
Specifically, we consider residual neural networks, because they can be regarded as a forward-Euler discretization of a continuous ODE with the $t$-th layer as a time-domain snapshot at time point $t$. 
At every layer, an embedding function $\mathcal{E}_t (\cdot)$ is used to monitor the performance of a hidden layer, and controller $\boldsymbol\pi_t$ is used to adjust the neurons, such that many possible errors can be eliminated or mitigated before they propagate to the output label. 
Note that our proposed neural network architecture in Fig.~\ref{fig: SH-NN} (a) should not be misunderstood as an open-loop control. As shown in Fig.~\ref{fig: SH-NN} (b), in dynamic systems $\mat{x}_0$ (input data of a neural network) is an initial condition, and the excitation input signal is $\mat{u}_t$ 
(which is $0$ in a standard feed-forward network). 
The forward signal path is from $\mat{u}_t$ to internal states $\mat{x}_t$ and then to the label $\mat{y}$. 
The path from $\mat{x}_t$ to the embedding function $\mathcal{E}_t(\mat{x}_t)$ and then to the excitation signal $\mat{u}_t$
forms a feedback and closes the whole loop.

\begin{figure*}[t]
\centering
\includegraphics[width=\linewidth]{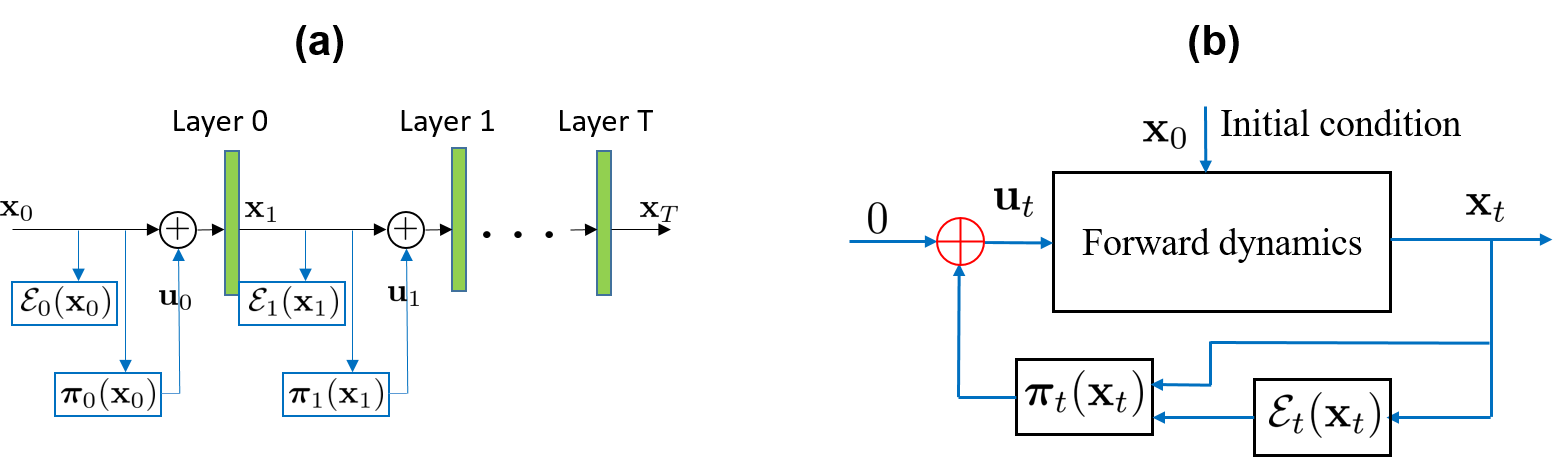}
\caption{(a) Proposed self-healing neural network; (b) Similar to the self-healing IC design shown in Fig.~\ref{fig: self-healing} (b), the proposed network forms a close-loop control from the perspective of dynamic systems.}
\label{fig: SH-NN}
\end{figure*}

Due to the closed-loop structure, the forward propagation of the proposed self-healing neural network at layer $t$ can be written as $\mat{x}_{t+1} = F_t(\mat{x}_t + \mat{u}_t)$.
Compared with standard neural networks, the proposed network needs to compute the control signals $\overline{\mat{u}} = \{ \mat{u}_{t} \}_{t=0}^{T-1}$ during inference by solving an optimal control problem:
\begin{gather}
    \min \limits_{\overline{\mat{u}}} \mathbb{E}_{(\mat{x}_0, \mat{y}) \sim \mathcal{D}} \left[ J(\mat{x}_0, \mat{y}, \overline{\mat{u}}) \right] \vcentcolon = \min \limits_{\overline{\mat{u}}} \mathbb{E}_{(\mat{x}_0, \mat{y}) \sim \mathcal{D}} ~ \Phi(\mat{x}_T, \mat{y}) + \sum_{t=0}^{T-1} {\cal L}(\mat{x}_t, \mat{u}_t, \cdot),
     \nonumber \\
    {\rm s.t.} \; \mat{x}_{t+1} = F_t(\mat{x}_t + \mat{u}_t), \; t = 0,\cdots,T-1.
    \label{eq:closed-loop control cost function}
\end{gather}
where $\Phi$ is the terminal loss, 
and ${\cal L}$ denotes a running loss that possibly depends on state $\mat{x}_t$, control $\mat{u}_t$ and some external functions.

In order to achieve better robustness via the above self-healing closed-loop control, several fundamental questions should be answered:
\begin{itemize}
    \item How shall we design the control objective function \eqref{eq:closed-loop control cost function}, such that the obtained controls can indeed correct the possible errors and improve model robustness?
    \item How can we solve the control problem efficiently, such that the extra latency is minimized in the inference?
    \item What is the working principle and theoretical performance guarantees of the self-healing neural network?
\end{itemize}
These key questions will be answered through Section~\ref{sec:control objective and margin analysis} to Section~\ref{sec:Theoretical Analysis}.

% It is worth noting that closed-loop control is not the only option to achieve self healing, and self healing does not limit to neural network robustness. We will provide a more detailed description in SectionXXX.

% \begin{remark}
% It is worth noting that self healing may be implemented in different stages (e.g., during or after training of a neural network) to improve various properties (e.g., fairness, privacy, hardware reliability) as well. For instance, \cite{kang2021stable} built a robust neural network by enforcing Lyapunov stability in the training. Their neural network offers better robustness via automatically attracting some unforeseen perturbed trajectories to a region around the desired equilibrium point associated with unperturbed data, thus it can be regarded to have some ``self-healing" capabilities. 
% \end{remark}

\section{Design of Self-Healing via Optimal Control}
\label{sec:control objective and margin analysis}
In this section, we propose a control objective function for self-healing robust neural networks in solving classification problems. 
With a margin-based analysis, we demonstrate that this control objective function enlarges the classification margin of decision boundary. 

\subsection{Towards Better Robustness: Control Loss via Manifold Projection}
% By solving the optimal control problem~\eqref{eq:closed-loop control cost function}, a self-healing neural network can adjust its hidden variables automatically to fix potential errors. Different control objective functions may be designed to achieve various self-healing goals (e.g., robustness, fairness).  Here we explain how to design a proper control objective function to improve robustness. 

In general, the control objective function Eq.~\eqref{eq:closed-loop control cost function} should have two parts: a terminal loss and a running loss:
\begin{itemize}[leftmargin=*]
    \item In traditional optimal control, the terminal loss $\Phi(\mat{x}_T, \mat{y})$ can be a distance measurement between the terminal state of the underlying trajectory and some destination set given beforehand.
In supervised learning, this corresponds to controlling the underlying hidden states such that the terminal state $\mat{x}_T$ (or some transformation of it) matches the true label.
This is impractical for general machine learning applications since the true label $\mat{y}$ is unknown during inference. Therefore, we ignore the terminal loss by setting it as zero.

\item When considering a deep neural network as a discretization of continuous dynamic system,
the state trajectory (all input and hidden states) governed by this continuous transformation forms a high dimensional structure embedded in the ambient state space.
The set of state trajectories that leads to ideal model performance, in the discretized analogy, can be represented as a sequence of embedding manifolds $\{ \mathcal{M}_t \}_{t=0}^{T-1}$.
The embedding manifold is defined as $\mathcal{M}_t = f_t^{-1}(\mat{0})$ for a submersion $f(\cdot):\mathbb{R}^d \rightarrow \mathbb{R}^{d-r}$.
We can track a trajectory during neural network inference and enforce it onto the desired manifold $\mathcal{M}_t $ to improve model performance.
This motivates us to design the running loss of Eq.~\eqref{eq:closed-loop control cost function} as follows,
\begin{equation}
    {\cal L}(\mat{x}_t, \mat{u}_t, f_t(\cdot)) \vcentcolon = \frac{1} {2} \lVert f_t(\mat{x}_t + \mat{u}_t) \rVert_2^2 + \frac{c} {2} \lVert \mat{u}_t \rVert_2^2.
    \label{eq:running loss}
\end{equation}
The submersion $f_t(\mat{x})$ measures the distance between a state $\mat{x}$ to the embedding manifold $\mathcal{M}_t$,
$f_t(\mat{x}) = \mat{0}$ if $\mat{x} \in \mathcal{M}_t$.
This can be understood based on the ``manifold hypothesis" \citep{fefferman2016testing},
which assumes that real-world high-dimensional data (represented as vectors in $\mathbb{R}^d$) generally lie in a low-dimensional manifold $\mathcal{M} \subset \mathbb{R}^d$. The first term in Eq.~\eqref{eq:running loss} serves as a ``performance monitor" in self healing: it measures the discrepancy between the state variable $\mat{x}_t$ and the desired manifold $\mathcal{M}_t$.
The regularization term with a hyper-parameter $c$ prevents using large controls, which is a common practice in control theory.
\item
The performance monitor can be realized by a manifold projection $\mathcal{E}_t(\cdot)$,
\begin{equation}
    \label{eq:manifold projection}
    \mathcal{E}_t(\mat{x}_t) \vcentcolon = \arg \min_{\mat{z} \in \mathcal{M}_t} \frac{1} {2} \lVert \mat{x}_t - \mat{z} \rVert_2^2.
\end{equation}
The manifold projection can be considered as a constrained optimization.
Given that $\mathcal{M}_t$ is a compact set, the solution of Eq.~\eqref{eq:manifold projection} always exists.
The submersion satisfies $\lVert f_t(\mat{x}_t) \rVert_2 = \lVert \mathcal{E}_t(\mat{x}_t) - \mat{x}_t \rVert_2$.
In practice, 
the manifold projection is realized as an auto-encoder. Specifically,
an encoder embeds a state snapshot into a lower-dimensional space,
then a decoder reconstructs this embedded data back to the ambient state space.
The auto-encoder can be obtained via minimizing the reconstruction loss on a given dataset,
\begin{gather}
    \label{eq:embedding function implementation}
    \mathcal{E}_t^{\ast}(\mathcal{M}_t, \cdot) = \arg \min \limits_{\mathcal{E}_t} \frac{1} {N} \sum_{i=1}^{N} 
    \underbrace{ {\rm CE}(\mat{x}_{i,T}, \mat{y}_i) }_{\rm model~information} + \underbrace{ \lVert \mathcal{E}_t(\mathcal{M}_t, \mat{x}_{i, t}) - \mat{x}_{i, t} \rVert_2^2 }_{\rm data~information}, \\ \nonumber
    {\rm s.t.}\;  \mat{x}_{i, t+1} = F_t(\mat{x}_{i, t}, \boldsymbol\theta_t, \mat{u}_{i, t}), ~ \mat{u}_{i, t} = \mathcal{E}_t(\mat{x}_{i, t}) - \mat{x}_{i, t},
\end{gather}
where ${\rm CE}(\cdot, \cdot)$ denotes cross-entropy loss function.
The objective function Eq.~\eqref{eq:embedding function implementation} defines a attack-agnostic setting, where only clean data and model information are accessible to the control system.
\end{itemize}

% \begin{figure*}[t]
% \centering
% \includegraphics[width=0.7\linewidth]{}
% \caption{The low-dimensional manifold $\mathcal{M}_t$ encodes the features of desired trajectories. The close-loop self-healing method monitor the distance from $\mat{x}_t$ to  $\mathcal{M}_t$ at each layer, and tries to push $\mat{x}_t$ onto $\mathcal{M}_t$.}
% \label{fig: manifold_intuition}
% \end{figure*}

Considering the zero terminal loss and non-zero running loss, the overall  control objective function for self healing can be designed as below,
\begin{gather}
    \min \limits_{\overline{\mat{u}}} \mathbb{E}_{(\mat{x}_0, \mat{y}) \sim \mathcal{D}} ~ \sum_{t=0}^{T-1} \lVert f_t(\mat{x}_t + \mat{u}_t) \rVert_2^2 + \frac{c} {2} \lVert \mat{u}_t \rVert_2^2, \nonumber \\
    {\rm s.t.} \; \mat{x}_{t+1} = F_t(\mat{x}_t, \boldsymbol\theta_t, \mat{u}_t), \; t = 0,\cdots,T-1.
    \label{eq:final cost function}
\end{gather}
In neural network inference, the resulting control signals will help to attract the (possibly perturbed) trajectory towards the embedding manifolds.
% as shown in Fig.~\ref{fig: manifold_intuition}.

% Since $\mathcal{M}$ is compact, the Weierstrass theorem guarantees the existence of a solution.

% \begin{lemma}[Weierstrass theorem]
% \label{weierstrass theorem}
% Let $f(\mat{x})$ be a continuous function defined on a compact set $\mathcal{S} \subset \mathbb{R}^d$.
% Then there exist $m$ and $M$ real numbers such that $m \leq f(x) \leq M, ~ \forall \mat{x} \in \mathcal{S}$.
% Moreover, there exist $\mat{x}_1, \mat{x}_2 \in \mathcal{S}$, such that $f(\mat{x}_1) = m$, $f(\mat{x}_2) = M$.
% \end{lemma}
% In addition, we restrict our analysis to sufficiently close out-of-manifold state $\mat{x}$ w.r.t. the local curvature of the embedding manifold $\mathcal{M}$,
% such that the solution of Eq.~\eqref{eq:manifold projection} is unique.
% We denote $\mat{u}^{\ast}$ as the optimal solution of the running loss in Eq.~\eqref{eq:running loss} for $c > 0$ (regularized control),
% and $\hat{\mat{u}}$ as the optimal solution when $c = 0$ (non-regularized control).
% The control $\mat{u}^{\ast}$ is equivalent to a scaled non-regularized solution $\hat{\mat{u}}$,
% such that $\mat{u}^{\ast} = a \cdot \hat{\mat{u}}$, for some $0 < a < 1$.
% Due to both existence and uniqueness of $\hat{\mat{u}}$,
% the solution of running loss in Eq.~\eqref{eq:running loss} exists and is unique. 

\subsection{A Margin-based Analysis On the Running Loss}
\label{sec:margin analysis robustness}
We discuss the effectiveness of the running loss in Eq.~\eqref{eq:running loss} by considering robustness issue in deep learning.
To simplify the problem setting,
we consider a special case of the control objective function in Eq.~\eqref{eq:final cost function} where control is only applied at one layer.
Specifically,
we assume that the control is applied to the input ($T = 1$) and the applied control is not penalized ($c = 0$). The analysis in a generic $t$-th layer can be done similarly by seeing $\mat{x}_{t-1}$ as the input data.  
In this simplified setting, 
by choosing $\mathcal{M}$ as the embedding manifold in $\mathbb{R}^d$,
the optimal control results in the solution of the constrained optimization in Eq.~\eqref{eq:manifold projection}.

\paragraph{Manifold projection enlarges decision boundary}

\begin{figure}[t]
	\centering
	\begin{minipage}{.24\linewidth}
	\centering
	    \includegraphics[width = \linewidth]{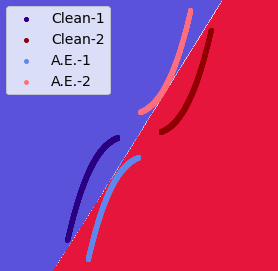}
	    (a) %{\small Clean and perturbed data}
	\end{minipage}
		\begin{minipage}{.24\linewidth}
		\centering
	    \includegraphics[width=\linewidth]{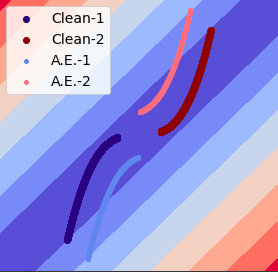}
	    (b) %{\small Embedded data manifold}
	\end{minipage}
	\begin{minipage}{.24\linewidth}
	\centering
		\includegraphics[width=\linewidth]{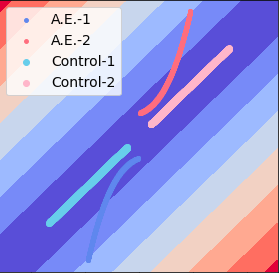}
		(c) %{\small Controlled counterparts of perturbed data}
	\end{minipage}
	\begin{minipage}{.24\linewidth}
	\centering
		\includegraphics[width=\linewidth]{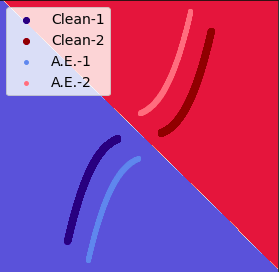}
		(d) %{\small Decision boundary with increased margin}
	\end{minipage}
    \caption{(a): Clean-$1$ and Clean-$2$ represent clean data of class $1$ and $2$, respectively. A.E.-$1$ and A.E.-$2$ are their adversarially perturbed counterparts. (b): the reconstruction loss field. (c): the controlled counterpart. (d): Manifold projection modifies decision boundary.}
    \label{fig:Linear reactive defense}
\end{figure}

For any given input $\tilde{\mat{x}}$,
an optimal control process solves the constrained optimization Eq.~\eqref{eq:manifold projection} by
reconstructing a nearest counterpart $\mat{x} \in \mathcal{M}$.
This seemingly {\it adaptive} control process essentially forms some {\it deterministic} decision boundaries that enlarge the margin of given classifier.
In general,
an accurate classifier can have a small ``classifier margin'' measured by an $\ell_p$ norm, i.e. the minimal perturbation in $\mathbb{R}^d$ required to change the model prediction label.
This small margin can be easily exploited by adversarial attacks, such as PGD \citep{madry2017towards}.
We illustrate these phenomena via an numerical example.
Fig.~\ref{fig:Linear reactive defense} shows a binary classification problem in $\mathbb{R}^1$, where blue and red regions represent the classification predictions (their joint line represents decision boundary of the underlying classifier).
In Fig.~\ref{fig:Linear reactive defense} (a), the given classifier has accuracies of $100 \%$ and $0 \%$ on clean data and against adversarial examples respectively.
Fig.~\ref{fig:Linear reactive defense} (b) shows the reconstruction loss field, computed by $\lVert ( \mat{I} - \mat{P}) \mat{x} \rVert_2^2, \forall \mat{x} \in \mathbb{R}^2$, where $\mat{P}$ is the $\ell_2$ orthogonal projection onto the $1$-d embedding subspace $\mathcal{M}$.
As expected, clean data samples are located in the low loss regions, and 
adversarial examples fall out of $\mathcal{M}$ and have larger reconstruction losses.
In Fig.~\ref{fig:Linear reactive defense} (c), our control process adjusts adversarially perturbed data samples towards the embedding subspace $\mathcal{M}$, and the classifier predicts those with $100 \%$ accuracy.
Essentially, the manifold projection enforces those adjacent out-of-manifold samples to have the same prediction as the clean data in the manifold, and the margin of the decision boundary has been increased as shown in Fig.~\ref{fig:Linear reactive defense} (d).

\begin{remark}
In this simplified linear case, the embedding manifold $\mathcal{M}$ is the $1$-D linear subspace highlighted as the darkest blue in Fig.~\ref{fig:Linear reactive defense} (b) (c).
Specifically, any data point in this subspace incurs zero reconstruction loss.
Therefore,
the constrained optimization problem in Eq.~\eqref{eq:manifold projection} is the orthogonal projection onto a linear subspace $\mathcal{M}$,
The manifold projection reduces the pre-image of a classifier $F(\cdot)$ from $\mathbb{R}^2 \mapsto \mathbb{R}^1$.
Given a data point $\mat{x}$ sampled from this linear subspace, any out-of-manifold data $\tilde{\mat{x}}$ satisfies $\lVert \mat{P} \tilde{\mat{x}} - \mat{x} \rVert_2^2 \leq \lVert \tilde{\mat{x}} - \mat{x} \rVert_2^2$.
Consequently, the margin of $F(\cdot)$ is enlarged.
\end{remark}

\paragraph{A margin-based analysis on the manifold projection.}
Now we formally provide two definitions for margins related to classification problems.
Specifically,
we consider a classification dataset $\mathcal{D}$ belonging to the ground-truth manifold $\mathcal{M}^{\ast}$, $\mathcal{D} \subset \mathcal{M}^{\ast}$,
this enables the formal definitions of different types of margins.

\begin{itemize}[leftmargin=*]
    \item {\bf Manifold margin:}
    We define $\mathcal{R}_{\mathcal{M}}$ as the geodesics
    \begin{equation*}
        \mathcal{R}_{\mathcal{M}}(\mat{a}, \mat{b})
        \vcentcolon = 
        \inf\limits_{
            \gamma \in \Gamma_{\mathcal{M}}(\mat{a}, \mat{b}),
        }
        \int_0^1 \sqrt{ \langle\, \gamma'(t),\gamma'(t) \rangle_{\gamma(t)} } dt,
    \end{equation*}
    where $\gamma \in \Gamma_{\mathcal{M}}(\mat{a}, \mat{b})$ is a continuously differentiable curve 
    $\gamma : [0, 1] \rightarrow \mathcal{M}$ such that $\gamma(0) = \mat{a}$ and $\gamma(1) = \mat{b}$.
    Here, $\langle,\rangle_{p}$ is the positive definite inner product on the tangent space $\mathcal{T}_p \mathcal{M}$ at any point $p$ on the manifold $\mathcal{M}$.
    In other words, the distance $\mathcal{R}_{\mathcal{M}}(\mat{a}, \mat{b})$ between two points $\mat{a}$ and $\mat{b}$ of $\mathcal{M}$ is defined as the length of the shortest path connecting them.
    Given a manifold $\mathcal{M}$ and classifier $F(\cdot)$,
    the manifold margin $d_{\mathcal{M}}(F(\cdot))$ is defined as the shortest distance along $\mathcal{M}$ such that an instance of one class transforms to another.
    \begin{equation}
        \label{eq:manifold margin equation}
        d_{\mathcal{M}}(F(\cdot)) \vcentcolon = \frac{1}{2} \inf\limits_{\mat{x}_1, \mat{x}_2 \in \mathcal{D}} \mathcal{R}_{\mathcal{M}} (\mat{x}_1, \mat{x}_2), \hspace{0.3cm} {\rm s.t.} ~F(\mat{x}_1) \neq F(\mat{x}_2).
    \end{equation}
    \item {\bf Euclidean margin:}
    In practice, data perturbations are any perturbations of a small Euclidean distance (or any equivalent norm).
    The classifier margin $d_e(F(\cdot))$ is the smallest magnitude of a perturbation in $\mathbb{R}^d$ that causes the change of output predictions.
    \begin{equation}
    \label{eq:euclidean margin equation}
    d_e(F(\cdot)) \vcentcolon = \inf\limits_{\mat{x} \in \mathcal{D}} \inf\limits_{\boldsymbol\delta \in \mathbb{R}^d} \lVert \boldsymbol\delta \rVert_2, \;\;\;\; {\rm s.t.}~F(\mat{x}) \neq F(\mat{x} + \boldsymbol\delta).
    \end{equation}
\end{itemize}

In addition,
we introduce ground-truth margin and manifold projection margin from the definitions of manifold and euclidean margins respectively.
\begin{itemize}[leftmargin=*]
    \item {\bf Ground-truth margin:}
    For the ground-truth manifold $\mathcal{M}^{\ast}$ and ground-truth classifier $F^{\ast}(\cdot)$ (population risk minimizer),
    the ground-truth margin $d_{\mathcal{M}^{\ast}} (F^{\ast}(\cdot))$ [according to Eq.~\eqref{eq:manifold margin equation}] is the largest classification margin.
    \item {\bf Manifold projection margin:}
    The manifold projection Eq.~\eqref{eq:manifold projection} modifies a classifier from $F(\cdot)$ to $F\circ \mathcal{E}(\mathcal{M},\cdot)$.
    Therefore, its robustness depends on the ``manifold projection margin'' [according to Eq.~\eqref{eq:euclidean margin equation}] as
    \begin{equation*}
    d_e(F \circ \mathcal{E}(\cdot)) \vcentcolon = \inf\limits_{\mat{x} \in \mathcal{D}} \inf\limits_{\boldsymbol\delta \in \mathbb{R}^d} \lVert \boldsymbol\delta \rVert_2, \;\;\;\; {\rm s.t.}~F(\mathcal{E}(\mat{x})) \neq F(\mathcal{E}(\mat{x} + \boldsymbol\delta)).
    \end{equation*}
    A manifold projection essentially constraints the data space $\mathbb{R}^d$ into a smaller subset according to the embedding manifold $\mathcal{M} \subset \mathbb{R}^d$.
\end{itemize}

In $\mathbb{R}^d$, a binary linear classifier forms a $(d-1)$-dimensional hyperplane that partitions $\mathbb{R}^d$ into two subsets.
Let the range of $\mat{V} \in \mathbb{R}^{d \times (d-1)}$ be this hyperplane,
$\hat{\mat{n}}$ as a $d$-dimensional normal vector such that $\mat{V}^T \hat{\mat{n}} = \mat{0}$.
In general, a linear classifier with random decision boundary can be defined as setting the normal vector $\hat{\mat{n}} \sim \mathcal{N}(\mat{0}, \frac{1} {d} \mat{I})$.
In this simplified linear setting,
the following proposition provides a relationship between the euclidean margin $d_e(F (\cdot))$ and manifold margin $d_{\mathcal{M}}(F(\cdot))$.

\begin{restatable}{proposition}{linearclassifiermargin}
\label{prop: reactive defense condition}
Let $\mathcal{M} \subset \mathbb{R}^d$ be a $r$-dimensional ($r \leq d$) linear subspace that contains the ground-truth manifold $\mathcal{M}^{\ast}$, 
such that $\mathcal{M}^{\ast} \subset \mathcal{M}$,
$F(\cdot)$ a linear classifier with random decision boundary,
then $\mathbb{E} \bigg[ \frac{d_e(F(\cdot))} {d_{\mathcal{M}}(F(\cdot))} \bigg] \leq \sqrt{\frac{r} {d}}$.
\end{restatable}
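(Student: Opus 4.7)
The plan is to reduce the ratio $d_e(F(\cdot))/d_{\mathcal{M}}(F(\cdot))$ to the purely geometric quantity $\|\mat{P}\hat{\mat{n}}\|_2/\|\hat{\mat{n}}\|_2$, where $\mat{P}$ is the orthogonal projection onto $\mathcal{M}$, and then to invoke rotational invariance of the isotropic Gaussian together with Jensen's inequality on the concave square root.

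First I would write the linear classifier as $F(\mat{x}) = \mathrm{sign}(\hat{\mat{n}}^T \mat{x} + b)$ and use the fact that $\mathcal{M}$ is linear, so its geodesics are Euclidean segments: $\mathcal{R}_{\mathcal{M}}(\mat{x}_1,\mat{x}_2) = \|\mat{x}_1-\mat{x}_2\|_2$ for $\mat{x}_1, \mat{x}_2 \in \mathcal{D} \subset \mathcal{M}^{\ast} \subset \mathcal{M}$. Restricted to $\mathcal{M}$, the affine function $\mat{x} \mapsto \hat{\mat{n}}^T \mat{x} + b$ has effective normal $\mat{P}\hat{\mat{n}}$, so for any $\mat{x} \in \mathcal{M}$ the in-$\mathcal{M}$ distance to the restricted boundary $B_{\mathcal{M}} := \{\hat{\mat{n}}^T \mat{z} + b = 0\} \cap \mathcal{M}$ is $|\hat{\mat{n}}^T\mat{x}+b|/\|\mat{P}\hat{\mat{n}}\|_2$, whereas the distance in $\mathbb{R}^d$ to the full hyperplane is $|\hat{\mat{n}}^T\mat{x}+b|/\|\hat{\mat{n}}\|_2$.

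Next I would combine these with the definitions \eqref{eq:manifold margin equation} and \eqref{eq:euclidean margin equation}. Immediately $d_e(F(\cdot)) = \inf_{\mat{x} \in \mathcal{D}} |\hat{\mat{n}}^T\mat{x}+b|/\|\hat{\mat{n}}\|_2$. For the manifold margin, the triangle inequality applied to signed distances shows that whenever $F(\mat{x}_1)\neq F(\mat{x}_2)$ the two points lie on opposite sides of $B_{\mathcal{M}}$, so
\[
\|\mat{x}_1 - \mat{x}_2\|_2 \;\geq\; \mathrm{dist}_{\mathcal{M}}(\mat{x}_1, B_{\mathcal{M}}) + \mathrm{dist}_{\mathcal{M}}(\mat{x}_2, B_{\mathcal{M}}) \;\geq\; 2\inf_{\mat{x}\in\mathcal{D}} \frac{|\hat{\mat{n}}^T\mat{x}+b|}{\|\mat{P}\hat{\mat{n}}\|_2},
\]
yielding $d_{\mathcal{M}}(F(\cdot)) \geq \inf_{\mat{x}\in\mathcal{D}} |\hat{\mat{n}}^T\mat{x}+b|/\|\mat{P}\hat{\mat{n}}\|_2$. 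Since both infima are attained at the same minimizer (the data point with smallest $|\hat{\mat{n}}^T\mat{x}+b|$), dividing produces the sample-wise bound $d_e(F(\cdot))/d_{\mathcal{M}}(F(\cdot)) \leq \|\mat{P}\hat{\mat{n}}\|_2/\|\hat{\mat{n}}\|_2$ almost surely.

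The remaining step is probabilistic. The ratio $\|\mat{P}\hat{\mat{n}}\|_2/\|\hat{\mat{n}}\|_2$ depends only on the direction $\mat{v} := \hat{\mat{n}}/\|\hat{\mat{n}}\|_2$, which is uniform on $S^{d-1}$ by rotational invariance of $\mathcal{N}(\mat{0}, \tfrac{1}{d}\mat{I})$. Hence
\[
\mathbb{E}\!\left[\frac{\|\mat{P}\hat{\mat{n}}\|_2^2}{\|\hat{\mat{n}}\|_2^2}\right] = \mathbb{E}[\mat{v}^T \mat{P} \mat{v}] = \mathrm{tr}\!\bigl(\mat{P}\,\mathbb{E}[\mat{v}\mat{v}^T]\bigr) = \frac{\mathrm{tr}(\mat{P})}{d} = \frac{r}{d},
\]
and Jensen's inequality applied to $\sqrt{\cdot}$ gives $\mathbb{E}[d_e(F(\cdot))/d_{\mathcal{M}}(F(\cdot))] \leq \sqrt{r/d}$. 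The main obstacle is the geometric translation in the manifold-margin step: one must verify that the restricted boundary is an affine hyperplane of $\mathcal{M}$ with normal $\mat{P}\hat{\mat{n}}$ (well-defined a.s.\ since $\mat{P}\hat{\mat{n}}=\mat{0}$ has probability zero) and that the triangle inequality on signed distances tightly reduces the pairwise infimum to a single-point one. Once this reduction is pinned down, the trace identity $\mathbb{E}[\mat{v}\mat{v}^T] = \mat{I}/d$ and Jensen's inequality finish the argument immediately.
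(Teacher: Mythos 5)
Your proof is correct and follows the same core strategy as the paper's: reduce the ratio $d_e(F)/d_{\mathcal{M}}(F)$ to $\lVert\mat{P}\hat{\mat{n}}\rVert_2/\lVert\hat{\mat{n}}\rVert_2$, compute the second moment $\mathbb{E}[\lVert\mat{P}\mat{v}\rVert_2^2]=\mathrm{tr}(\mat{P})/d=r/d$ by rotational invariance, and close with Jensen on the concave square root. Where you differ is in the geometric reduction: the paper routes through an intermediate identity $d_{\mathcal{M}}(F)=d_e(F\circ\mathcal{E})$ (the manifold margin equals the Euclidean margin of the projected classifier) and then expresses that as $\inf_{\mat{x}}\lVert\mat{x}^T\hat{\mat{n}}\rVert/\lVert\mathcal{E}(\hat{\mat{n}})\rVert$, whereas you compute the in-manifold distance directly from the effective normal $\mat{P}\hat{\mat{n}}$ of the restricted hyperplane, obtaining the one-sided bound $d_{\mathcal{M}}(F)\ge\inf_{\mat{x}}|\hat{\mat{n}}^T\mat{x}+b|/\lVert\mat{P}\hat{\mat{n}}\rVert_2$ via the triangle inequality on geodesic segments. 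Your version is arguably tighter bookkeeping: it tracks the normalization by $\lVert\hat{\mat{n}}\rVert_2$ explicitly (the paper's proof is a bit loose, writing $d_e(F)=\inf_{\mat{x}}\lVert\mat{x}^T\hat{\mat{n}}\rVert_2$ as if $\hat{\mat{n}}$ were unit-norm while sampling it Gaussian), it accommodates an affine bias $b$, and it needs only an inequality for $d_{\mathcal{M}}$ rather than the paper's claimed chain of equalities. The one sentence to tighten is the remark that ``both infima are attained at the same minimizer'': what you actually need, and what your own displayed bound provides, is that $d_e$ and the lower bound on $d_{\mathcal{M}}$ share the same numerator $\inf_{\mat{x}\in\mathcal{D}}|\hat{\mat{n}}^T\mat{x}+b|$, so the ratio cancels — no claim about a common minimizer of the pairwise infimum is required.
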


The detailed proof is shown in Appendix \ref{proof: Manifold Projection Enlarges Classifier Margin In Linear Case}.

%\zz{From the definitions, I still didn't see why the classifier margin is increased.}

\paragraph{A demonstration of margin increase.}
\begin{figure}[t]
	\centering
	\begin{minipage}{.235\linewidth}
	\centering
	    \includegraphics[width = \linewidth]{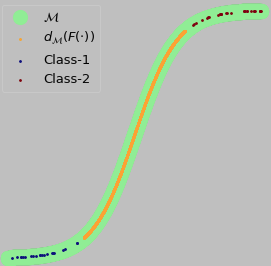}
	    (a): $2\cdot d_{\mathcal{M}}(F(\cdot))$  %{\small Clean and perturbed data}
	\end{minipage}
	\begin{minipage}{.235\linewidth}
	\centering
	    \includegraphics[width = \linewidth]{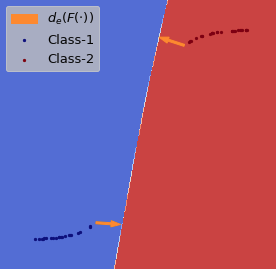}
	    (b): $d_e(F(\cdot))$  %{\small Clean and perturbed data}
	\end{minipage}
		\begin{minipage}{.235\linewidth}
		\centering
	    \includegraphics[width=\linewidth]{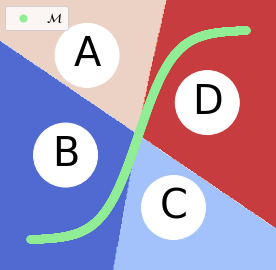}
	    (c): $\mathcal{E}(\cdot)$  %{\small Embedded data manifold}
	\end{minipage}
	\begin{minipage}{.235\linewidth}
	\centering
		\includegraphics[width=\linewidth]{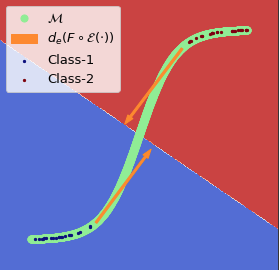}
		(d): $d_e(F \circ \mathcal{E}(\cdot))$ %{\small Controlled counterparts of perturbed data}
	\end{minipage}
    \caption{(a): a binary classification dataset embedded inside of a manifold $\mathcal{M}$.
    The manifold margin with a classifier $F$ is shown.
    (b): the euclidean margin with a classifier $F(\cdot)$.
    (c): the manifold projection and classifier form four partitions,
    regions A and B are projected onto the top portion of $\mathcal{M}$,
    regions C and D are projected onto lower portion of $\mathcal{M}$.
    (d): the manifold projection margin.
    }
    \label{fig:margins}
\end{figure}

Fig.~\ref{fig:margins} (a) shows a binary classification dataset embedded in a $1$-dimensional manifold ($\mathcal{M}$ is shown as green curve).
Given a classifier $F(\cdot)$,
the manifold margin $d_{\mathcal{M}} (F(\cdot))$ (orange curve shows $2 \cdot d_{\mathcal{M}} (F(\cdot))$) is shown as the shortest distance that an instance of one class transforms to another.
The underlying classifier results in a small euclidean margin as shown in Fig.~\ref{fig:margins} (b).
In Fig.~\ref{fig:margins} (c),
subsets-A and B are prediction of class-$1$, subsets-C and D are predictions of class-$2$.
The manifold projection $\mathcal{E} (\cdot)$ projects subsets-A and D onto the top portion of $\mathcal{M}$,
subsets B and C onto lower portion of $\mathcal{M}$.
The decision boundary of classifier and manifold projection form four partitions of $\mathbb{R}^2$.
For the composed classifier $F \circ \mathcal{E} (\cdot)$,
any sample in regions A and D are predicted as class-$2$,
and samples from regions B and C are predicted as class-$1$.
As a result,
Fig.~\ref{fig:margins} (d) shows the decision boundary of $F \circ \mathcal{E} (\cdot)$,
the manifold projection margin (shown in orange) is significantly improved than the euclidean margin.

\section{An Optimal Control Solver for Self-Healing}
\label{sec:control solver}
In this section, we present a general optimal control method to solve the proposed objective function in Eq.~\eqref{eq:final cost function}.
An more efficient method is proposed to reduce the inference overhead caused by generating the controls.

\subsection{Control Solver Based on the Pontryagin's Maximum Principle}
\label{sec:optimal control description}
The proposed self-healing neural network needs to solve the optimal control problem in order to predict its output label $\mat{y}$ given an input data sample $\mat{x}$. We first describe a general solver for the optimal control problem in Eq.~\eqref{eq:closed-loop control cost function} based on the Pontryagin's Maximum Principle~\citep{pontryagin1987mathematical}.

% We define a value function $V: \mathcal{T} \times \mathbb{R}^d \rightarrow \mathbb{R}$ (where $\mathcal{T}:=\{0,1,\dots,T-1\}$).
% Here $V(t,\mat{x})$ represents the optimal cost-to-go function of Eq.~\eqref{eq:closed-loop control cost function} incurred from time $t$ at state $\mat{x}$.
% One can show that $V(t,\mat{x})$ satisfies the dynamic programming principle,
% \begin{equation}
%     V(t, \mat{x}) = \inf\limits_{\boldsymbol\pi \in \boldsymbol\Pi}
%     \left[
%         V(t+1,  \mat{x} + F_t(\mat{x}, \boldsymbol\theta_t, \boldsymbol\pi(\mat{x}))) + {\cal L}(\mat{x}, \boldsymbol\pi(\mat{x}), \mathcal{E}_t(\cdot))
%     \right],
%     \label{eq:closed-loop control value_function}
% \end{equation}
% where $\boldsymbol\pi \in \boldsymbol\Pi$ is a control function sampled from a set of feasible control functions $\boldsymbol\Pi$.
% The optimum of value function and control function can be obtained by solving the Hamilton-Jacobi-Bellman PDE \citep{bellman1952theory},
% which provides both necessary and sufficient conditions.
% However, solving PDEs suffers from the curse of dimensionality.
% This is generally not feasible for high-dimensional problems such as deep learning.

To begin with, we define the Hamiltonian $H(t, \mat{x}_t, \mat{p}_{t+1}, \boldsymbol\theta_t, \mat{u}_t)$ as
\begin{equation*}
    H(t, \mat{x}_t, \mat{p}_{t+1}, \boldsymbol\theta_t, \mat{u}_t) \vcentcolon = \mat{p}_{t+1}^{T} \cdot F_t(\mat{x}_t, \boldsymbol\theta_t, \mat{u}_t) - {\cal L}(\mat{x}_t, \mat{u}_t, f_t(\cdot)).
    \label{eq:hamiltonian}
\end{equation*}
The Pontryagin's maximum principle consists of a two-point boundary value problem,
\begin{align}
    & \mat{x}_{t+1}^{\ast} = \nabla_p H(t, \mat{x}_t^{\ast}, \mat{p}_t^{\ast}, \boldsymbol\theta_t, \mat{u}_t^{\ast}), &(\mat{x}_0^{\ast}, \mat{y}) \sim \mathcal{D},
    \label{eq:pmp forward}\\
    & \mat{p}_t^{\ast} = \nabla_x H(t, \mat{x}_t^{\ast}, \mat{p}_{t+1}^{\ast}, \boldsymbol\theta_t, \mat{u}_t^{\ast}), &\mat{p}_T^{\ast} = \mat{0},
    \label{eq:pmp backward}
\end{align}
plus a maximum condition of the Hamiltonian.
\begin{equation}
    H(t, \mat{x}_t^{\ast}, \mat{p}_t^{\ast}, \boldsymbol\theta_t, \mat{u}_t^{\ast}) \geq H(t, \mat{x}_t^{\ast}, \mat{p}_t^{\ast}, \boldsymbol\theta_t, \mat{u}_t), \; \forall \mat{u}\in\mathbb{R}^{d'} \; \text{ and } \; \forall t \in \mathcal{T}.
    \label{eq:pmp maximization}
\end{equation}
To obtain a numerical solution, one can consider iterating through the forward dynamic Eq.~\eqref{eq:hamiltonian} to obtain all states $\{ \mat{x}_t \}_{t=0}^{T-1}$,
the backward dynamic Eq.~\eqref{eq:pmp backward} to compute the adjoint states $\{ \mat{p}_t \}_{t=0}^{T-1}$,
and updating the Hamiltonian Eq.~\eqref{eq:pmp maximization} with current states and adjoint states via gradient ascent \citep{chen2021towards}.
This iterative process continuous until convergence.

% In this work, we apply the closed-loop control framework to solve the robustness issue of deep learning.
% In which case, the running loss in Eq.~\ref{eq:running loss} measures the distance between the states of adversarially perturbed data and a sequence of embedding manifolds,
% and the optimal control solution results in a terminal state that can be correctly classified by the underlying model.

\subsection{A Fast Implementation of the Closed-loop Control}
Now we discuss the computational overhead caused by the closed-loop control,
and propose an accelerated numerical solver based on the unique condition of optimality in the Pontryagin's Maximum Principle. 

\paragraph{Computational Overhead in Inference.} When the closed-loop control module is deployed for inference,
the original feed-forward propagation is now replaced by iterating through the Hamiltonian dynamics.
For each input data, solving the optimal control problems requires us to propagate through both forward Eq.~\eqref{eq:pmp forward} and backward adjoint Eq.~\eqref{eq:pmp backward} dynamics and to maximize the Hamiltonians Eq.~\eqref{eq:pmp maximization} at all layers.
When maximizing the Hamiltonian $n$ times, running the Hamiltonian dynamics approximately increase the time complexity by a factor of $n$ with respect to the standard inference.
The computational overhead  prevents deploying the closed-loop control module in real-world applications.
% Additionally, since the state-of-the-art adversarial attacks now contain ensemble of different types of perturbations \citep{croce2020reliablef},
% it is almost impossible to test a closed-loop module against the strongest perturbations/attacks.

\paragraph{A Faster PMP Solver.} To address this issue, we consider the method of successive approximation \citep{chernousko1982method} from the optimal condition of the PMP.
For a given input data sample, Eq.~\eqref{eq:pmp forward} and \eqref{eq:pmp backward} generate the state variables and adjoint states respectively for the current controls $\{ \mat{u}_t \}_{t=0}^{T-1}$.
The optimal condition of the objective function in Eq.~\eqref{eq:closed-loop control cost function} is achieved via maximizing all Hamiltonians in Eq.~\eqref{eq:pmp maximization}.
Instead of iterating through all three Hamiltonian dynamics for a single update on the control solutions,
we can consider optimizing the $t^{th}$ Hamiltonian locally  for all $t \in [0,\cdots,T-1]$ with the current state $\mat{x}_t$ and adjoint state $\mat{p}_{t+1}$.
This allows the control solution $\mat{u}_t$ to be updated multiple times within one complete iteration.
Once a locally optimal control $\mat{u}_t^{\ast}$ is achieved by maximizing $H(t, \mat{x}_t, \mat{p}_{t+1}, \boldsymbol\theta_t, \mat{u}_t)$ w.r.t. $\mat{u}_t$,
the adjoint state $\mat{p}_{t + 1}$ is backpropagated to $\mat{p}_t$ via the adjoint dynamic in Eq.~\eqref{eq:pmp backward} followed by maximizing $H(t - 1, \mat{x}_{t-1}, \mat{p}_t, \boldsymbol\theta_{t-1}, \mat{u}_{t-1})$. 
Under this setting, running the Hamiltonian dynamics \eqref{eq:pmp forward}, \eqref{eq:pmp backward} and \eqref{eq:pmp maximization}, $n$ times can be decomposed into $maxItr$ full iterations and $InnerItr$ local updates.
Here $maxItr$ can be significantly smaller than $n$ since the locally optimal control solutions via $InnerItr$ updates can speed up the overall convergence. 
Instead of iterating the full Hamiltonian dynamics $n$ times, the proposed fast implementation iterates $maxItr$ full Hamiltonian dynamics, and $InnerItr$ local updates.

\begin{algorithm}[t]
  \caption{The Method of Successive Approximation.}
  \label{alg:closed_loop control implementation}
  \KwIn{Input $\mat{x}_0$ (possibly perturbed), a trained neural network $F(\cdot)$, embedding functions $\{ \mathcal{E}_t(\cdot) \}_{t=0}^{T-1}$,
  control regularization $c$,
  ${\rm maxItr}$, ${\rm InnerItr}$.} 
  \KwOut{Output state $\mat{x}_T$.}
  Initialize controls $\{ \mat{u}_t \}_{t=0}^{T-1}$ with the greedy solution \;
  \For{$i = 0$ to ${\rm maxItr}$}
  {
  $\mat{x}_0^i = \mat{x}_0 + \mat{u}_0^i$ 
  \tcp*{The controlled initial condition}
  \For{$t=0$ to $T-1$}
  {
  $\mat{x}_{t+1}^i = F_t(\mat{x}_{t}^i + \mat{u}_{t}^i)$
  \tcp*{Controlled forward propagation Eq.~\eqref{eq:pmp forward}}
  }
  $\mat{p}_T^i = \mat{0}$
  \tcp*{The terminal condition of adjoint state is set to $0$}
  \For{$t=T-1$ to $0$}
  {
  \For{$\tau = 0$ to ${\rm InnerItr}$}
  {
  $H(t, \mat{x}_t^i, \mat{p}_{t+1}^i, \boldsymbol\theta_t, \mat{u}_t^{i, \tau}) = \mat{p}_{t+1}^i \cdot F_t(\mat{x}_t^i, \boldsymbol\theta_t, \mat{u}_t^{i, \tau}) - {\cal L}(\mat{x}_t^i, \mat{u}_t^{i, \tau}, \mathcal{E}_t(\mat{x}_t^i))$
  \tcp*{Compute Hamiltonian}
  $\mat{u}_t^{i, \tau + 1} = \mat{u}_t^{i, \tau} + \alpha \cdot \nabla_{\mat{u}} H(t, \mat{x}_t^i, \mat{p}_{t+1}^i, \boldsymbol\theta_t, \mat{u}_t^{i, \tau})$
  \tcp*{Maximize Hamiltonian w.r.t. control $\mat{u}_t$}
  }
  $\mat{p}_t^i = \mat{p}_{t+1}^i \cdot \nabla_{\mat{x}} F(\mat{x}_t^i, \boldsymbol\theta_t, \mat{u}_t^i) - \nabla_{\mat{x}} {\cal L}(\mat{x}_t^i, \mat{u}_t^i, \mathcal{E}_t(\mat{x}_t^i))$
  \tcp*{Backward propagation Eq.~\eqref{eq:pmp backward}}
  }
  }
\end{algorithm}

The detailed implementation is presented in Algorithm~\ref{alg:closed_loop control implementation}.
Here we summarize this efficient implementation.
\begin{enumerate}[leftmargin=*]
    \item To begin with, We initialize all controls with the greedy solution, $\mat{u}_t = \mathcal{E}_t(\mat{x}_t) - \mat{x}_t$, by setting the control regularization $c = 0$.
    This improves the convergence of the Hamiltonian dynamics.
    \item We forward propagate the input data via Eq.~\eqref{eq:pmp forward} to obtain all hidden states.
    \item Since there is no terminal loss, the initial condition of adjoint state $\mat{p}_T = \mat{0}$.
    We backpropagate the adjoint states and maximize the Hamiltonian at each layer as follows:
    \begin{enumerate}[leftmargin=*]
        \item We compute the adjoint state $\mat{p}_t$ from the adjoint dynamics Eq.~\eqref{eq:pmp backward},
        \item Instead of updating control $\mat{u}_t$ once via maximizing the Hamiltonian Eq.~\eqref{eq:pmp maximization},
        we perform {\it multiple updates} ($InnerItr$ iterations) on control $\mat{u}_t$ to achieve the optimal solution $\mat{u}_t^{\ast}$ that satisfies the maximization condition (Notice that any optimization algorithm can be applied).
    \end{enumerate}
    \item The backpropagation terminates when it reaches layer $t=0$.
    This process repeats for an maximum number of iterations ($maxItr$ iterations).
\end{enumerate}

% \begin{remark}
% The PMP-based solver has been applied for open-loop control based adversarial training \citep{zhang2019you},
% where the Hamiltonian is only defined for the input.
% The PMP in closed-loop control framework involves maximizing multiple Hamiltonian functions,
% solving the PMP via the method of successive approximations requires backpropagating adjoint state via the adjoint dynamic.
% Therefore, obtaining accurate adjoint states from maximizing the Hamiltonian is crucial because the error can be propagated to the next Hamiltonian.
% This naturally motivates the fast PMP implementation that maximizes a Hamiltonian with multiple iterations to fasten convergence.
% \end{remark}

\section{Theoretical Error Analysis}
\label{sec:Theoretical Analysis}
In this section, we formally establish an error analysis for the closed-loop control framework. Let $\mat{x}_t$ be a ``clean" state originated from an unperturbed data sample $\mat{x}_0$, and $\mat{x}_{\epsilon, t}$ be the perturbed states originating from a possible attacked or corrupted data sample $\mat{x}_{\epsilon, 0} = \mat{x}_0 + \mat{z}$. In our proposed self-healing neural network, the controlled state becomes $\overline{\mat{x}}_{\epsilon, t} = \mat{x}_{\epsilon, t} + \mat{u}_t$. We ask this question: how large is $ \lVert \overline{\mat{x}}_{\epsilon, t} - \mat{x}_t \rVert $, i.e., the distance between $\mat{x}_t$ and $\overline{\mat{x}}_{\epsilon, t}$?

We consider a general deep neural network $F = F_T \circ F_{T-1} \circ \dots \circ F_{0}$, where each nonlinear transformation $F_t(\cdot)$ is of class $\mathcal{C}^2$,
and each embedding manifold can be described by a $\mathcal{C}^2$ submersion $f(\cdot) : \mathbb{R}^d \rightarrow \mathbb{R}^{d - r}$, such that $\mathcal{M} = f^{-1} (\mat{0})$.
Given an unperturbed state trajectory $\{ \mat{x}_t \in \mathcal{M}_t \}_{t=0}^{T-1}$,
we denote $\mathcal{T}_{\mat{x}_t}\mathcal{M}_t $ as the tangent space of $\mathcal{M}_t$ at $\mat{x}_t$.

This theoretical result is an extension of the linear closed-control setting in our preliminary work \citep{chen2021towards} where an error estimation in linear setting is derived.
We provide the error estimation between $\overline{\mat{x}}_{\epsilon, t}$ and $\mat{x}_{t}$ in the linear and nonlinear cases in Section~\ref{sec:Error Analysis of Linear Closed-loop Control Method} and Section~\ref{sec:Error Analysis of Closed-loop Control Method} respectively.

\subsection{Error Estimation For The Linearized Case}
\label{sec:Error Analysis of Linear Closed-loop Control Method}
Now we analyze the error of the self-healing neural network for a simplified case with linear activation functions.
We denote $\boldsymbol\theta_t$ as the Jacobian matrix of the nonlinear transformation $F_t(\cdot)$ centered at $\mat{x}_t$, such that $\boldsymbol\theta_t = F_t'(\mat{x}_t)$.
In the linear case,
the solution of the running loss in Eq.~\eqref{eq:running loss} is a projection onto the linear subspace, which admits a closed-form solution.
For a perturbed input, $\mat{q}_0 = \mat{x}_0 + \mat{z}$ with some perturbation $\mat{z}$,
we denote $\{ \mat{q}_{\epsilon, t} \}_{t=0}^{T-1}$ as sequence of states of  the linearized system,
and $\{ \overline{\mat{q}}_{\epsilon, t} \}_{t=0}^{T-1}$ as the states adjusted by the linear control. The perturbation $\mat{z} \in \mathbb{R}^d$ admits a direct sum of two orthogonal components, $\mat{z} = \mat{z}^{\parallel} \oplus \mat{z}^{\perp}$. Here $\mat{z}^{\parallel} \in \mathcal{T}_{\mat{x}_0}\mathcal{M}_0$ is a perturbation within the tangent space,
and $\mat{z}^{\perp}$ lies in the orthogonal complement of $\mathcal{T}_{\mat{x}_0}\mathcal{M}_0$. 

The following theorem \citep{chen2021towards} provides an upper bound of $\lVert \overline{\mat{q}}_{\epsilon, t} - \mat{x}_t \rVert_2^2$.

\begin{restatable}{theorem}{maintheoremLinear}
\label{theorem: error estimation of linear system}
For $t \geq 1$, we have an error estimation for the linearized system
\begin{equation*}
    \lVert \overline{\mat{q}}_{\epsilon, t} - \mat{x}_{t} \rVert_2^2
    \leq \lVert \boldsymbol\theta_{t-1} \cdots \boldsymbol\theta_{0} \rVert_2^2 \cdot \bigg( \alpha^{2t} \lVert \mat{z}^{\perp} \rVert_2^2 + \lVert \mat{z}^{\parallel} \rVert_2^2 + \gamma_t \lVert \mat{z} \rVert_2^2 \big( \gamma_t \alpha^2 (1 - \alpha^{t-1})^2 + 2 (\alpha - \alpha^t) \big) \bigg).
\end{equation*}
where $\gamma_t \vcentcolon = \max \limits_{s \leq t} \big(1 +  \kappa (\boldsymbol\theta_{s-1} \cdots \boldsymbol\theta_0)^2 \big) \lVert \mat{I} - (\boldsymbol\theta_{s-1} \cdots \boldsymbol\theta_0)^T (\boldsymbol\theta_{s-1} \cdots \boldsymbol\theta_0) \rVert_2$,
$\kappa(\boldsymbol\theta)$ is condition number of $\boldsymbol\theta$,
$\alpha = \frac{c} {1 + c}$, and $c$ represents the control regularization. 
In particular, the equality
\begin{equation*}
    \lVert \overline{\mat{q}}_{\epsilon, t} - \mat{x}_{t} \rVert_2^2 =  \alpha^{2t} \lVert \mat{z}^{\perp} \rVert_2^2 + \lVert \mat{z}^{\parallel} \rVert_2^2
\end{equation*}
holds when all $\boldsymbol\theta_t$ are orthogonal.
\end{restatable}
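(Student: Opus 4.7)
The plan is to derive a one-step linear recursion for the controlled error $\mat{e}_t \vcentcolon= \overline{\mat{q}}_{\epsilon,t} - \mat{x}_t$ and then unroll it, carefully tracking the decomposition of the error into components lying in $\mathcal{T}_{\mat{x}_t}\mathcal{M}_t$ and its orthogonal complement. First I would observe that in the linearized regime the running loss $\tfrac{1}{2}\lVert f_t(\mat{x}_t+\mat{u}_t)\rVert_2^2 + \tfrac{c}{2}\lVert\mat{u}_t\rVert_2^2$ is a quadratic in $\mat{u}_t$, so the minimizer admits a closed form: letting $\mat{P}_t^\perp$ denote the orthogonal projector onto the normal space of $\mathcal{M}_t$ at $\mat{x}_t$, the optimal one-step control is $\mat{u}_t = -\tfrac{1}{1+c}\mat{P}_t^\perp(\mat{q}_{\epsilon,t})$. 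Substituting gives the recursion $\mat{e}_{t+1} = \bigl(\mat{I} - \tfrac{1}{1+c}\mat{P}_{t+1}^\perp\bigr)\boldsymbol\theta_t \mat{e}_t$, with initial error $\mat{e}_0 = \mat{z}^\parallel + \alpha\mat{z}^\perp$ and $\alpha = c/(1+c)$.

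The second step is to handle the equality case. If every $\boldsymbol\theta_s$ is orthogonal, then $\boldsymbol\theta_s$ maps $\mathcal{T}_{\mat{x}_s}\mathcal{M}_s$ isometrically onto $\mathcal{T}_{\mat{x}_{s+1}}\mathcal{M}_{s+1}$ and preserves its orthogonal complement, so the parallel and perpendicular components decouple along the recursion. An easy induction then gives $\lVert\mat{e}_t\rVert_2^2 = \lVert\mat{z}^\parallel\rVert_2^2 + \alpha^{2t}\lVert\mat{z}^\perp\rVert_2^2$, matching the equality statement (and the prefactor $\lVert\Theta_t\rVert_2^2 = 1$ in the general bound collapses).

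For the general case, the parallel--perpendicular split is not preserved by a non-orthogonal $\boldsymbol\theta_s$, so an excess error must be accounted for. Writing $\Theta_t = \boldsymbol\theta_{t-1}\cdots\boldsymbol\theta_0$, I would unroll the recursion to express $\mat{e}_t$ as the sum of an \emph{ideal} term, obtained by pretending each $\boldsymbol\theta_s$ acts orthogonally on the tangent decomposition, plus a \emph{correction} assembled from the per-layer mismatches between $\mat{P}_{s}^\perp$ and the corresponding projector transported forward by $\Theta_s$. Pulling the cumulative Jacobian $\Theta_t$ in front produces the prefactor $\lVert\Theta_t\rVert_2^2$, while each mismatch can be bounded by $\bigl(1+\kappa(\Theta_s)^2\bigr)\lVert\mat{I} - \Theta_s^T\Theta_s\rVert_2$, since a subspace is preserved by $\Theta_s$ precisely when $\Theta_s^T\Theta_s$ acts as the identity on it, and the condition number governs the cost of inverting to read off the split. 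This is exactly the quantity $\gamma_t$ in the statement (modulo the index shift between $s$ and $s-1$).

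The final step assembles the bound by a Cauchy--Schwarz expansion of the form $\lVert\mat{e}_t\rVert_2^2 \leq \lVert\text{ideal}\rVert_2^2 + 2\lVert\text{ideal}\rVert_2 \lVert\text{correction}\rVert_2 + \lVert\text{correction}\rVert_2^2$. The ideal piece contributes $\alpha^{2t}\lVert\mat{z}^\perp\rVert_2^2 + \lVert\mat{z}^\parallel\rVert_2^2$; the squared correction accumulates dampened mismatches over layers $s \leq t$ as a geometric series in $\alpha$, which collapses to the $\gamma_t^2 \alpha^2(1-\alpha^{t-1})^2$ piece; and the cross term produces $2(\alpha - \alpha^t)\gamma_t$. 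I expect the main obstacle to be the bookkeeping of the subspace mismatches: converting ``$\boldsymbol\theta_s$ fails to preserve the tangent space'' into a scalar bound driven by $\lVert\mat{I} - \Theta_s^T\Theta_s\rVert_2$ and $\kappa(\Theta_s)$ requires choosing bases compatibly across layers, and collapsing the accumulated geometric sums into the precise expression in the theorem involves an index shift between $t$ and $t-1$ that is easy to misplace.
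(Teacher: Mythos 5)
Your overall plan does track the paper's proof: derive the one-step recursion, transport the tangent-space projectors forward along the Jacobians to compare against an ideal orthogonal case, bound each per-layer mismatch by $\bigl(1+\kappa(\Theta_s)^2\bigr)\lVert\mat{I}-\Theta_s^T\Theta_s\rVert_2$, and expand the squared norm. But there is a concrete bookkeeping error that would keep you from landing on the stated theorem. You define $\mat{e}_t := \overline{\mat{q}}_{\epsilon,t}-\mat{x}_t$ yet initialize $\mat{e}_0 = \mat{z}^{\parallel}+\alpha\mat{z}^{\perp}$; by definition $\mat{e}_0 = \mat{z}$, since $\overline{\mat{q}}_{\epsilon,0}=\mat{x}_0+\mat{z}$. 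Likewise, the forward map applies control \emph{before} each layer, so the correct recursion is $\mat{e}_{t+1}=\boldsymbol\theta_t(\mat{I}-\mat{K}_t)\mat{e}_t$; the one you wrote, $\mat{e}_{t+1}=(\mat{I}-\tfrac{1}{1+c}\mat{P}_{t+1}^{\perp})\boldsymbol\theta_t\mat{e}_t$ with $\mat{e}_0=(\mat{I}-\mat{K}_0)\mat{z}$, governs $(\mat{I}-\mat{K}_t)\mat{e}_t$, which carries one extra control application. Since $\lVert(\mat{I}-\mat{K}_t)\mat{e}_t\rVert\le\lVert\mat{e}_t\rVert$, a bound on your quantity does not imply a bound on $\lVert\overline{\mat{q}}_{\epsilon,t}-\mat{x}_t\rVert$, and executing your recursion yields $\alpha^{t+1}$ where the theorem has $\alpha^t$. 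This is not merely the $t$ vs.\ $t-1$ ambiguity you flag at the end — it changes what is being bounded.

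Second, when you say you will "pull the cumulative Jacobian $\Theta_t$ in front," the algebra that actually makes this work is the conjugation identity $\boldsymbol\theta_{t-s-1}\mat{G}_t^{s+1}=\mat{G}_t^{s}\boldsymbol\theta_{t-s-1}$, where $\mat{P}_t^{s+1}:=\boldsymbol\theta_{t-s-1}^{-1}\mat{P}_t^{s}\boldsymbol\theta_{t-s-1}$ are \emph{oblique} projections. The fact that the unrolled product then collapses to the clean form $\alpha^t\mat{I}+(1-\alpha)\sum_{s<t}\alpha^s\mat{P}_s^s$ rests on a non-obvious structural fact your sketch omits: every $\mat{P}_s^s$ has the \emph{same range} $Z_{\parallel}^{0}$, so cross products satisfy $\mat{P}_t^t\mat{P}_s^s=\mat{P}_s^s$ and the sum telescopes. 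Without observing that the transported projectors all project onto the input-layer tangent space, the "correction assembled from per-layer mismatches" you describe does not reduce to a single geometric series in $\alpha$, and the $(1+\kappa^2)\lVert\mat{I}-\Theta_s^T\Theta_s\rVert$ estimate has nothing to hook onto: that lemma compares $\mat{P}_s^s$ to the fixed orthogonal projector $\mat{P}_0$ onto the common range, which is exactly the comparison your "ideal vs.\ correction" split needs to be phrased around.
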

The detailed derivation is presented in Appendix \ref{proof: error estimation of the linear case}.
The error upper bound is tight since it becomes the actual error if all the linear transformations are orthogonal matrices.
Note that the above bound from the greedy control solution is a strict upper bound of the optimal control solution.
The greedy solution does not consider the dynamic, and it optimizes each running loss individually.

\subsection{Error Analysis of Nonlinear Networks with Closed-loop Control}
\label{sec:Error Analysis of Closed-loop Control Method}
Here we provide an error analysis for the self-healing neural network with general nonlinear activation functions.
For a $3$-dimensional tensor, e.g. the Hessian $f''(\mat{x})$, we define the $2$-norm of $f''(\mat{x})$ as
\begin{equation*}
    \lVert f''(\mat{x}) \rVert_{\ast} \vcentcolon = \sup_{\mat{z} \neq \mat{0}} \frac{\lVert f''(\mat{x})^{i,j,k} \mat{z}_j \mat{z}_k \rVert_2} {\lVert \mat{z} \rVert_2^2}.
\end{equation*}
For the nonlinear transformation $F_t(\cdot) \in \mathcal{C}^2$ at layer $t$,
we assume its Hessian $f''_t(\cdot)$ is uniformly bounded, i.e.,
$\sup_{\mat{x} \in \mathbb{R}^d} \lVert F''_t(\mat{x}) \rVert_{\ast} \leq \beta_t$.
Let $f_t \in \mathcal{C}^2 : \mathbb{R}^d \rightarrow \mathbb{R}^{d-r}$ be the submersion of the embedding manifold $\mathcal{M}_t$, we assume its Hessian is uniformly bounded, i.e.,
$\sup_{\mat{x} \in \mathbb{R}^d} \lVert f''_t(\mat{x}) \rVert_{\ast} \leq \sigma_t$.
We use $\mat{x}_t$, $\mat{x}_{\epsilon, t}$ and $\overline{\mat{x}}_{\epsilon, t}$ to denote the clean states, perturbed states without control and the states adjusted with closed-loop control, respectively. 
The initial perturbation $\mat{z} = \epsilon \cdot \mat{v}$, where $\lVert \mat{v} \rVert_2 = 1$ and
$\mat{v} = \mat{v}^{\parallel} \oplus \mat{v}^{\perp}$.
Let
\begin{itemize}[leftmargin=*]
    \item $k_t = 4 \sigma_t \lVert (f_t'(\mat{x}_t) f_t'(\mat{x}_t)^T)^{-1} \rVert_2 \cdot (\lVert f_t'(\mat{x}_t)\rVert_2 + 2 \sigma_t)$,
    \item $\delta_{\mat{x}_t} = \lVert \boldsymbol\theta_{t-1} \cdots \boldsymbol\theta_{0} \rVert_2^2 \cdot
    \bigg( \alpha^{2t} \lVert \mat{v}^{\perp} \rVert_2^2 + \lVert \mat{v}^{\parallel} \rVert_2^2 + \gamma_t \lVert \mat{v} \rVert_2^2 \big( \gamma_t \alpha^2 (1 - \alpha^{t-1})^2 + 2 (\alpha - \alpha^t) \big) \bigg)$.
\end{itemize}

The following theorem provides an error estimation between $\overline{\mat{x}}_{\epsilon, t}$ and $\mat{x}_{t}$.

\begin{restatable}{theorem}{maintheoremNonlinear}
\label{theorem: error estimation of nonlinear system}
If the initial perturbation satisfies
\begin{equation*}
    \epsilon^2 \leq \frac{1} {\bigg( \sum_{i=0}^{T-1} \delta_{\mat{x}_i} (k_{\mat{x}_i} \lVert \boldsymbol\theta_i \rVert_2 + 2 \beta_i) \prod_{j = i+1}^{T-1} (\lVert \boldsymbol\theta_j \rVert_2 + k_{\mat{x}_j} \lVert \boldsymbol\theta_j \rVert_2 + 2 \beta_j) \bigg)}.
\end{equation*}
for $1 \leq t \leq T$, 
we have the following error bound for the closed-loop controlled system
\begin{align*}
    \sq{\lVert \overline{\mat{x}}_{\epsilon, t + 1} - \mat{x}_{t + 1} \rVert_2}
    & \leq \sq{\lVert \boldsymbol\theta_t \cdots \boldsymbol\theta_0 \rVert_2 \bigg( \alpha^{t+1} \lVert \mat{z}^{\perp} \rVert_2 + \lVert \mat{z}^{\parallel} \rVert_2 
    + \lVert \mat{z} \rVert_2 \big( \gamma_{t+1} \alpha (1 - \alpha^{t}) + \sqrt{2 \gamma_{t+1} (\alpha - \alpha^{t+1})} \big) \bigg)} \\
    & \;\;\;\; + \bigg( \sum_{i=0}^t \delta_{\mat{x}_i} (k_{\mat{x}_i} \lVert \boldsymbol\theta_i \rVert_2 + 2 \beta_i) \prod_{j = i+1}^t (\lVert \boldsymbol\theta_j \rVert_2 + k_{\mat{x}_j} \lVert \boldsymbol\theta_j \rVert_2 + 2 \beta_j) \bigg) \epsilon^2.
\end{align*}
\end{restatable}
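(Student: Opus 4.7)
The plan is to reduce the nonlinear error estimate to the already-established linearized bound from Theorem~\ref{theorem: error estimation of linear system} by comparing the closed-loop controlled trajectory $\{\overline{\mat{x}}_{\epsilon,t}\}$ against its linearization $\{\overline{\mat{q}}_{\epsilon,t}\}$. Concretely, write
\[
    \overline{\mat{x}}_{\epsilon,t} - \mat{x}_t = \underbrace{(\overline{\mat{x}}_{\epsilon,t} - \overline{\mat{q}}_{\epsilon,t})}_{\text{linearization defect}} + \underbrace{(\overline{\mat{q}}_{\epsilon,t} - \mat{x}_t)}_{\text{linear control error}},
\]
apply the triangle inequality, and invoke Theorem~\ref{theorem: error estimation of linear system} directly to control the second summand, which already gives the first (linear-looking) line of the target bound after substituting $\mat{z} = \epsilon \mat{v}$ and taking square roots. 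The whole task is therefore to control the linearization defect by $O(\epsilon^2)$ times the explicit product/sum expression appearing in the theorem.

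I would establish this by induction on $t$. The inductive step compares one step of the nonlinear dynamics $\overline{\mat{x}}_{\epsilon,t+1} = F_t(\overline{\mat{x}}_{\epsilon,t} + \mat{u}_t^\star)$ with one step of the linearized dynamics $\overline{\mat{q}}_{\epsilon,t+1} = \boldsymbol\theta_t(\overline{\mat{q}}_{\epsilon,t} + \mat{u}_t^{\rm lin}) + \text{const}$. Two separate Taylor-remainder estimates feed into this: (i) a second-order expansion of $F_t$ around $\mat{x}_t$ yields a remainder bounded by $\tfrac{1}{2}\beta_t \lVert \overline{\mat{x}}_{\epsilon,t} + \mat{u}_t^\star - \mat{x}_t \rVert_2^2$, and (ii) the gap between the nonlinear manifold projection and the linear one must be quantified using the assumed Hessian bound $\sigma_t$ on the submersion $f_t$. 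For (ii), applying the implicit function theorem to $f_t(\mathcal{E}_t(\mat{x})) = 0$ and expanding to second order gives that the nonlinear projection differs from the linear one by at most $\tfrac{1}{2} k_t \lVert \mat{x} - \mat{x}_t\rVert_2^2 / 2$ (up to constants), which is exactly where the constant $k_t = 4 \sigma_t \lVert (f_t' f_t'^{\,T})^{-1} \rVert_2 (\lVert f_t' \rVert_2 + 2\sigma_t)$ enters. These two quadratic contributions combine into the per-layer factor $(k_{\mat{x}_i}\lVert \boldsymbol\theta_i \rVert_2 + 2\beta_i)$ multiplying $\delta_{\mat{x}_i}$, while the subsequent propagation through layers $i+1,\dots,T-1$ produces the multiplicative tail $\prod_j(\lVert \boldsymbol\theta_j\rVert_2 + k_{\mat{x}_j}\lVert \boldsymbol\theta_j\rVert_2 + 2\beta_j)$ coming from linearized propagation plus a further linearization remainder at each subsequent layer.

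Closing the induction requires that the quadratic-in-deviation terms do not outgrow the linear ones; this is precisely what the smallness hypothesis on $\epsilon^2$ in the theorem ensures. I would substitute the inductive hypothesis into the per-layer recursion, use the hypothesis to absorb the cross-terms $(\epsilon^2)\cdot(\text{linear error})$ back into the declared $\epsilon^2$ coefficient, and thereby close the induction with the stated bound. The inductive base $t=0$ is immediate since $\overline{\mat{x}}_{\epsilon,0} - \mat{x}_0 = \mat{u}_0^\star - \mat{z}$, and one step of the combined Taylor estimate (i)--(ii) agrees with the formula for $t=1$.

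The main obstacle I anticipate is step (ii): cleanly bounding the discrepancy between the nonlinear manifold projection $\mathcal{E}_t$ and its linearization in terms of $\sigma_t$, $f_t'(\mat{x}_t)$, and the deviation from $\mathcal{M}_t$, uniformly in a neighborhood large enough to contain the perturbed trajectory. This is where the specific form of $k_t$ is forced: one must differentiate the relation $f_t(\mathcal{E}_t(\mat{x})) = 0$ twice, control the Hessian of $\mathcal{E}_t$ via $(f_t' f_t'^{\,T})^{-1}$ and $\sigma_t$, and keep the neighborhood valid under the $\epsilon$-smallness assumption. Once this projection-Lipschitz-of-order-2 estimate is in hand, the remaining work is careful bookkeeping of the recursion, which is routine.
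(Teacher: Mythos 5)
Your proposal matches the paper's proof almost exactly: the paper introduces the same decomposition (linear control error bounded by Theorem~\ref{theorem: error estimation of linear system} plus a linearization defect $e_{t}$), proves the projection discrepancy bound $\lVert \mat{u}^{\mathcal{M}} - \mat{u}^{P}_{\mat{x}}\rVert_2 \le k_t\lVert\overline{\mat{x}}_{\epsilon,t}-\mat{x}_t\rVert_2^2$ via a Taylor expansion of the running-loss optimality condition (Proposition~\ref{prop: error between manifold and tangent space projection}), derives the per-layer recursion combining this with the $\beta_t$-Hessian bound for $F_t$ (Proposition~\ref{prop: Difference inequality}), and closes the induction under the $\epsilon^2$-smallness hypothesis (Proposition~\ref{proposition:linearization error}). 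The only trivial slip is your base case: $\overline{\mat{x}}_{\epsilon,0}-\mat{x}_0 = \mat{z}$ (the paper anchors the induction at $t=1$ where $e_1 \le (k_{\mat{x}_0}\lVert\boldsymbol\theta_0\rVert_2 + 2\beta_0)\delta_{\mat{x}_0}\epsilon^2$), but this does not affect the argument.
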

The detailed proof is provided in Appendix \ref{proof: error estimation of nonlinear system}.
From Theorem \ref{theorem: error estimation of nonlinear system}, we have the following intuitions:
\begin{itemize}[leftmargin=*]
    \item The error estimation has two main components: an linearization error in order of $\mathcal{O}(\epsilon^2)$,
    and the error of $\mathcal{O}(\epsilon)$ of the linearized system.
    Specifically, the linearization error becomes smaller when the activation functions and embedding manifolds behave more linearily ($k_t$ and $\beta_t$ become smaller).
    \item The closed-loop control minimizes the perturbation components $\mat{z}^{\perp}$ within the orthogonal complements of the tangent spaces.
    This is consistent with the manifold hypothesis,
    the robustness improvement is more significant if the underlying data are embedded in a lower dimensional manifold ($\lVert \mat{z}^{\parallel} \rVert_2 \rightarrow 0$).
    \item The above error estimation improves as the control regularization $c$ goes to $0$ (so $\alpha\rightarrow 0$). It is not the sharpest possible as it relies on a greedily optimal control at each layer. The globally optimal control defined by the Ricatti equation may achieve a lower loss when $c\neq 0$.
\end{itemize}

\section{Numerical Experiments}
\label{sec:numerical experiments}
In this section, we test the performance of the proposed self-healing framework.
Specifically, we show that using only one set of embedding functions can improve robustness of many pre-trained models consistently.
Section \ref{sec:cifar10} shows that the proposed method can significantly improve the robustness of both standard and robustly trained models on CIFAR-10 against various perturbations.
Furthermore, in the same experimental setting, sections \ref{sec:cifar100} and \ref{sec:tiny-imagenet} evaluate on CIFAR-100 and Tiny-ImageNet datasets,
which empirically verify effectiveness and generalizability of the self-healing machinery.

\subsection{Experiments On CIFAR-10 Dataset}
\label{sec:cifar10}
We evaluate all controlled models under an ''oblivious attack'' setting \footnote{This consideration is general, e.g. \cite{liao2018defense} has adopted this setting in the previous NIPS competition on defense against adversarial attacks.}. In this setting, the pre-trained models are fully accessible to an attacker, but the control information is not released. Meanwhile, the controllers do not have knowledge about the incoming attack algorithms. We will show that using one set of embedding functions, our self-healing method can improve the robustness of many pre-trained models against a broad class of perturbations. Our experimental setup is summarized below.
\begin{itemize}[leftmargin=*]
    \item {\bf Baseline models.}
    We showcase that {\it one set of controllers } can consistently increase the robustness of many pre-trained ResNets when those models are trained via standard training (momentum SGD) and adversarial training (TRADES \citep{zhang2019theoretically}).
    Specifically,
    we use Pre-activated ResNet-$18$ (\textbf{RN-18}), -$34$ (\textbf{RN-34}), -$50$ (\textbf{RN-50}),
    wide ResNet-$28$-$8$ (\textbf{WRN-28-8}), -$34$-$8$ (\textbf{WRN-34-8}) as the testing benchmarks.
    \item {\bf Robustness evaluations.}
    We evaluate the performance of all models with clean testing data (\textbf{None}), and auto-attack (\textbf{AA}) \citep{croce2020reliablef} that is measured by $\ell_{\infty}$, $\ell_2$ and $\ell_1$ norms.
    Auto-attack that is an ensemble of two gradient-based auto-PGD attacks \citep{croce2020reliablef},
    fast adaptive boundary attack \citep{croce2020minimally}
    and a black-box square attack \citep{andriushchenko2020square}.
    \item {\bf Embedding functions.}
    We choose the fully convolutional networks (FCN) \citep{long2015fully} as an input embedding function,
    and a $2$-layer auto-encoder as an embedding function for the hidden states.
    Specifically, we use one set of embedding functions for all $5$ pre-trained models.
    The training objective function of the $t^{th}$ embedding function follows Eq.~\eqref{eq:embedding function implementation},
    where both model and data information are used.
    % \begin{equation*}
    %     \mathcal{E}_t = \arg \min \limits_{\mathcal{E}_t} \frac{1} {N} \sum_{i=1}^{N} {\rm CE}(\mat{x}_{i,T}, \mat{q}_i) + 
    %     \lVert \mathcal{E}_t(\mat{x}_{i,t}) - \mat{x}_{i,t} \rVert_2^2.
    % \end{equation*}
    \item {\bf PMP hyper-parameters setting.}
    We choose $3$ outer iterations and $10$ inner iterations with $0.001$ as a control regularization parameters in the PMP solver.
    As in Algorithm \ref{alg:closed_loop control implementation}, maxIte=$3$, InnerItr=$10$, and $c=0.001$.
\end{itemize}

% \begin{table}[t]
% \caption{Accuracy measure: standard model / self-healing}
% \begin{center}
% \small
% \begin{tabular}{ p{1.7cm}|p{2.cm}p{2.cm}p{2.cm}p{2.cm} p{2.cm}  }
% \hline
% \multicolumn{5}{c}{$\ell_{\infty}$ ($\epsilon = 8 / 255$), $\ell_2$ ($\epsilon = 0.5$)} \\
% \hline
% & None & PGD ($\ell_{\infty}$) & PGD ($\ell_2$) & Rand ($\ell_{\infty}$) & Rand ($\ell_2$) \\
% \hline
% RN-$18$ & \textbf{94.71}/92.81 & 0./\textbf{58.27} & 0.05/\textbf{79.09} & 0.29/\textbf{86.05} & 15.99/\textbf{89.98} \\
% \hline
% RN-$34$ & \textbf{94.91}/92.84 & 0./\textbf{60.78} & 0.03/\textbf{81.2} & 0.35/\textbf{86.43} & 14.69/\textbf{90.44} \\
% \hline
% RN-$50$ & \textbf{95.08}/92.81 & 0./\textbf{60.28} & 0.04/\textbf{80.52} & 0.37/\textbf{86.97} & 15.32/\textbf{90.55} \\
% \hline
% WRN-$28$-$8$ & \textbf{95.41}/92.63 & 0./\textbf{72.54} & 0.02/\textbf{84.69} & 0.22/\textbf{86.81} & 14.44/\textbf{90.49} \\
% \hline
% WRN-$34$-$8$ & \textbf{94.05}/92.77 & 0./\textbf{60.26} & 0.04/\textbf{79.59} & 0.26/\textbf{86.81} & 18.20/\textbf{90.2} \\
% \hline
% \end{tabular}
% \end{center}
% \label{cifar10 oblivious control standard}
% \end{table}

\begin{table}[t]
\caption{CIFAR-10 accuracy measure: baseline model / controlled model}
\begin{center}
\begin{tabular}{ p{1.9cm}|p{2.3cm}p{2.3cm}p{2.3cm}p{2.3cm} }
\hline
\multicolumn{5}{c}{$\ell_{\infty}:\epsilon=8 / 255$, $\ell_2:\epsilon=0.5$, $\ell_1:\epsilon=12$}\\
\hline
\multicolumn{5}{c}{Standard models}\\
\hline
& None & AA ($\ell_{\infty}$) & AA ($\ell_2$) & AA ($\ell_1$) \\
\hline
RN-18 & \textbf{94.71} / 92.81 & 0. / \textbf{63.89} & 0. / \textbf{82.1} & 0. / \textbf{75.75} \\
\hline
RN-34 & \textbf{94.91} / 92.84 & 0. / \textbf{64.92} & 0. / \textbf{83.64} & 0. / \textbf{78.05} \\
\hline
RN-50 & \textbf{95.08} / 92.81 & 0. / \textbf{64.31} & 0. / \textbf{83.33} & 0. / \textbf{77.15} \\
\hline
WRN-28-8 & \textbf{95.41} / 92.63 & 0. / \textbf{75.39} & 0. / \textbf{86.71} & 0. / \textbf{84.5} \\
\hline
WRN-34-8 & \textbf{94.05} / 92.77 & 0. / \textbf{64.14} & 0. / \textbf{82.32} & 0. / \textbf{73.54} \\
\hline
\hline
\multicolumn{5}{c}{Robust models (trained with $\ell_{\infty}$ perturbations)}\\
\hline
& None & AA ($\ell_{\infty}$) & AA ($\ell_2$) & AA ($\ell_1$) \\
\hline
RN-$18$ & 82.39 / \textbf{87.51} & 48.72 / \textbf{66.61} & 58.8 / \textbf{79.88} & 9.86 / \textbf{42.85} \\
\hline
RN-$34$ & 84.45 / \textbf{87.93} & 49.31 / \textbf{65.49} & 57.27 / \textbf{78.81} & 7.21 / \textbf{40.74} \\
\hline
RN-$50$ & 83.99 / \textbf{87.57} & 48.68 / \textbf{65.17} & 57.25 / \textbf{78.26} & 6.83 / \textbf{39.44} \\
\hline
WRN-$28$-$8$ & 85.09 / \textbf{87.66} & 48.13 / \textbf{64.44} & 54.38 / \textbf{77.08} & 5.38 / \textbf{41.78} \\
\hline
WRN-$34$-$8$ & 84.95 / \textbf{87.14} & 48.47 / \textbf{64.55} & 54.36 / \textbf{77.15} & 4.67 / \textbf{42.65} \\
\hline
\end{tabular}
\end{center}
\label{cifar10 oblivious control}
\end{table}

As shown in Table \ref{cifar10 oblivious control},
for standard trained baseline models,
despite of the high accuracy on clean data,
their robustness against strong auto-attack degrade to $0 \%$ accuracy under all measurements.
The self-healing process is attack agnostic, and it improves the the robustness against all perturbations with negligible degradation on clean data.
Specifically,
the controlled models have more than $80 \%$ and near $80 \%$ accuracies against perturbations measured by $\ell_2$ and $\ell_1$ norms respectively.

% \begin{table}[t]
% \caption{Accuracy measure: robust model / self healing}
% \begin{center}
% \small
% \begin{tabular}{ p{1.7cm}|p{2.3cm}p{2.3cm}p{2.3cm}p{2.3cm}  }
% \hline
% \multicolumn{5}{c}{$\ell_{\infty}$ ($\epsilon = 8 / 255$), $\ell_2$ ($\epsilon = 0.5$), $\ell_1$ ($\epsilon = 12$) }\\
% \hline
% & None & AA ($\ell_{\infty}$) & AA ($\ell_2$) & AA ($\ell_1$) \\
% \hline
% RN-$18$ & 82.39 / \textbf{87.51} & 48.72 / \textbf{66.61} & 58.8 / \textbf{79.88} & 9.86 / \textbf{42.85} \\
% \hline
% RN-$34$ & 84.45 / \textbf{87.93} & 49.31 / \textbf{65.49} & 57.27 / \textbf{78.81} & 7.21 / \textbf{40.74} \\
% \hline
% RN-$50$ & 83.99 / \textbf{87.57} & 48.68 / \textbf{65.17} & 57.25 / \textbf{78.26} & 6.83 / \textbf{39.44} \\
% \hline
% WRN-$28$-$8$ & 85.09 / \textbf{87.66} & 48.13 / \textbf{64.44} & 54.38 / \textbf{77.08} & 5.38 / \textbf{41.78} \\
% \hline
% WRN-$34$-$8$ & 84.95 / \textbf{87.14} & 48.47 / \textbf{64.55} & 54.36 / \textbf{77.15} & 4.67 / \textbf{42.65} \\
% \hline
% \end{tabular}
% \end{center}
% \label{cifar10 oblivious control robust}
% \end{table}

On adversarially trained baseline models.
Since all robust baseline models are pre-trained with $\ell_{\infty}$ measured adversarial examples, 
they show strong robustness against $\ell_{\infty}$ auto-attack.
Surprisingly, 
models that trained using $\ell_{\infty}$ as adversarial training objective preserve strong robustness against $\ell_2$ perturbations.
However, a $\ell_1$ measured perturbation can significantly degrade their robustness.
On average, our proposed control method have achieved $20 \%$ accuracy improvements against $\ell_{\infty}$ and $\ell_{2}$ perturbations,
and near $40 \%$ improvement against $\ell_1$ perturbation.
Surprisingly, by applying the proposed control module, all adversarially trained models have achieved higher accuracy on clean testing data.
In Appendix \ref{exp:additional numerical experiments}, we show more experimental results on CIFAR-10 dataset, including the robustness evaluation of controlled models against white-box attack.

\subsection{Experiments On CIFAR-100 Dataset}
\label{sec:cifar100}
In this section, we investigate the effectiveness of self-healing on the more challenging CIFAR-100 dataset.
We summarize our experiment settings below.

\begin{itemize}[leftmargin=*]
    \item {\bf Baseline models.}
    We consider different variants of Wide-ResNet.
    Specifically, we use Wide-ResNet-28-10 (\textbf{WRN-28-10}), -34-10 (\textbf{WRN-34-10}), -76-10 (\textbf{WRN-76-10}).
    We show that one set of controllers can consistently increase the robustness of all $3$ pre-trained models when those models are trained via adversarial training (TRADES \citep{zhang2019theoretically}).
    \item {\bf Other settings.}
    The embedding functions and PMP settings follow the same.
\end{itemize}

% \begin{table}[t]
% \caption{CIFAR-100 accuracy measure: baseline model / self healing}
% \begin{center}
% \small
% \begin{tabular}{ p{1.9cm}|p{2.2cm}p{2.2cm}p{2.2cm}p{2.2cm} }
% \hline
% \multicolumn{5}{c}{Standard models}\\
% \hline
% \multicolumn{5}{c}{$\ell_{\infty}:\epsilon=8 / 255$, $\ell_2:\epsilon=0.3$, $\ell_1:\epsilon=4$}\\
% \hline
% & None & AA ($\ell_{\infty}$) & AA ($\ell_2$) & AA ($\ell_1$) \\
% \hline
% WRN-28-10 & \textbf{78.72} / 64.60 & 0. / \textbf{44.16} & 0.08 / \textbf{17.76} & 0.08 / \textbf{1.67} \\
% \hline
% WRN-34-10 & \textbf{79.04} / 69.01 & 0. / \textbf{46.25} & 0.18 / \textbf{14.49} & 0.21 / \textbf{1.19} \\
% \hline
% WRN-76-10 & \textbf{79.61} / 68.42 & 0. / \textbf{47.51} & 0.06 / \textbf{15.4} & 0.1 / \textbf{1.21} \\
% \hline
% \hline
% \multicolumn{5}{c}{Robust models}\\
% \hline
% \multicolumn{5}{c}{$\ell_{\infty}:\epsilon=8 / 255$, $\ell_2:\epsilon=0.5$, $\ell_1:\epsilon=12$}\\
% \hline
% WRN-28-10 & \textbf{56.96} / 56.84 & 24.97 / \textbf{30.81} & 29.54 / \textbf{39.18} & 3.24 / \textbf{16.43} \\
% \hline
% WRN-34-10 & \textbf{57.32} / 56.91 & 25.35 / \textbf{31.04} & 29.68 / \textbf{39.64} & 2.99 / \textbf{17.66} \\
% \hline
% WRN-76-10 & \textbf{57.58} / 57.11 & 24.84 / \textbf{29.96} & 27.81 / \textbf{38.05} & 2.41 / \textbf{19.13} \\
% \hline
% \end{tabular}
% \end{center}
% \label{cifar100 oblivious control robust}
% \end{table}

\begin{table}[t]
\caption{CIFAR-100 accuracy measure: baseline model / controlled}
\begin{center}
\begin{tabular}{ p{2.1cm}|p{2.3cm}p{2.3cm}p{2.3cm}p{2.2cm} }
\hline
\multicolumn{5}{c}{$\ell_{\infty}:\epsilon=8 / 255$, $\ell_2:\epsilon=0.5$, $\ell_1:\epsilon=12$}\\
\hline
WRN-28-10 & \textbf{56.96} / 56.84 & 24.97 / \textbf{30.81} & 29.54 / \textbf{39.18} & 3.24 / \textbf{16.43} \\
\hline
WRN-34-10 & \textbf{57.32} / 56.91 & 25.35 / \textbf{31.04} & 29.68 / \textbf{39.64} & 2.99 / \textbf{17.66} \\
\hline
WRN-76-10 & \textbf{57.58} / 57.11 & 24.84 / \textbf{29.96} & 27.81 / \textbf{38.05} & 2.41 / \textbf{19.13} \\
\hline
\end{tabular}
\end{center}
\label{cifar100 oblivious control robust}
\end{table}

In Table \ref{cifar100 oblivious control robust},
the proposed self-healing framework consistently improves the robustness of adversarially trained models on CIFAR-100 dataset.
On average, the self-healing models have achieved $10 \% \sim 20 \%$ accuracy improvement with almost no effects on the clean data performance.
Although the improvements are not as significant as in the CIFAR-10 experiment,
this is due to the hardness of constructing embedding manifolds for this more challenging dataset.
Specifically, it is more difficult to distinguish the controlled data point among $100$ different classes than $10$ classes on an single embedding manifold.

\subsection{Experiments On Tiny-ImageNet}
\label{sec:tiny-imagenet}
Finally, we examine the proposed self-healing framework on Tiny-ImageNet dataset.
Tiny-ImageNet contains $100,000$ and $10,000$ of $64 \times 64$ sized training and validation images with $200$ different classes.
Although over-fitting is more significant on this dataset,
we show that the proposed self-healing framework can consistently improve the robustness of pre-trained models.
The experimental settings are summarized below.

\begin{itemize}[leftmargin=*]
    \item {\bf Baseline models.}
    We consider \textbf{EfficientNet-b0}, \textbf{EfficientNet-b1} and \textbf{EfficientNet-b2} trained via momentum SGD as testing benchmarks.
    \item {\bf Embedding functions.}
    We choose SegNet \citep{badrinarayanan2017segnet} as an input embedding function,
    and a $2$-layer auto-encoder as an embedding function for the hidden states.
    The training objective function of the $t^{th}$ embedding function follows Eq.~\eqref{eq:embedding function implementation},
    where both model and data information are used.
    \item {\bf PMP hyper-parameters setting.}
    The PMP setting follows the same.
\end{itemize}

\begin{table}[t]
\caption{Tiny-ImageNet accuracy measure: baseline model / controlled}
\begin{center}
\begin{tabular}{ p{2.6cm}|p{2.3cm}p{2.2cm}p{2.2cm}p{2.2cm} }
\hline
\multicolumn{5}{c}{$\ell_{\infty}:\epsilon=4 / 255$, $\ell_2:\epsilon=0.8$, $\ell_1:\epsilon=10$}\\
\hline
& None & AA ($\ell_{\infty}$) & AA ($\ell_2$) & AA ($\ell_1$) \\
\hline
EfficientNet-b0 & 57.68 / \textbf{59.92} & 0.21 / \textbf{46.08} & 1.73 / \textbf{49.86}  & 5.86 / \textbf{50.4} \\
\hline
EfficientNet-b1 & 57.99 / \textbf{59.7}2 & 0.13 / \textbf{44.35} & 1.24 / \textbf{48.26} & 4.43 / \textbf{48.86} \\
\hline
EfficientNet-b2 & 58.06 / \textbf{59.3} & 0.25 / \textbf{44.33} & 1.40 / \textbf{47.86} & 4.58 / \textbf{48.39} \\
\hline
\end{tabular}
\end{center}
\label{tiny-imagenet oblivious control robust}
\end{table}

In this task,
we aim to validate the practical applicability of the proposed method on generally large dataset and deep network architectures.
In Table \ref{tiny-imagenet oblivious control robust},
on the challenging Tiny-ImageNet dataset,
despite of the high accuracy on clean data,
as expected,
all pre-trained models result in extremely poor performance against autoattacks.
The proposed framework can improve all three pre-trained EfficientNets consistently against autoattacks.
Specifically,
the controlled models has shown $ 45 \% \sim 50 \%$ robustness improvements against all perturbations.

\subsection{Summary On Numerical Experiments}
Fig.~\ref{fig:radar plot on numerical results} shows the radar plots of accuracy against many perturbations on some chosen baseline models.
Overall, the self healing via close-loop control consistently improve the baseline model performance.
Notice that adversarial training can effectively improve the robustness of baseline models against a certain type of perturbation (e.g. Auto-attack measured in $\ell_{\infty}$).
However, those seemingly robust models are extremely vulnerable against other types of perturbations (e.g. Auto-attack measured in $\ell_1$).
The proposed method is attack-agnostic and can consistently improve robustness of many baseline models against various perturbations.

\begin{figure}[t]
	\centering
	\begin{minipage}{0.95\linewidth}
	\centering
	    \includegraphics[width = \linewidth]{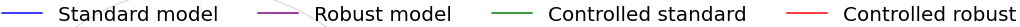}
	\end{minipage}
	\begin{minipage}{.32\linewidth}
	\centering
	    \includegraphics[width = \linewidth]{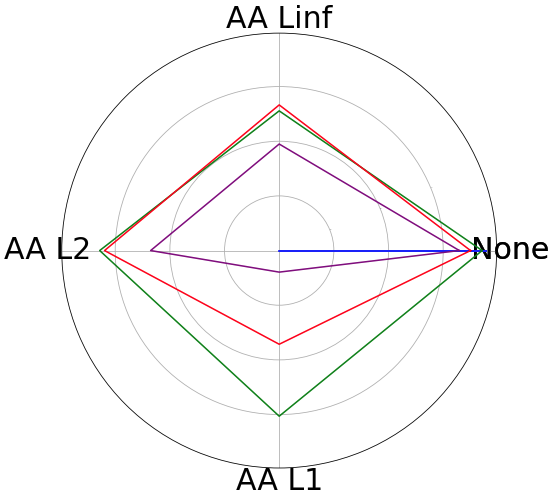}
	    (a): CIFAR-10 %{\small Clean and perturbed data}
	\end{minipage}
		\begin{minipage}{.32\linewidth}
		\centering
	    \includegraphics[width=\linewidth]{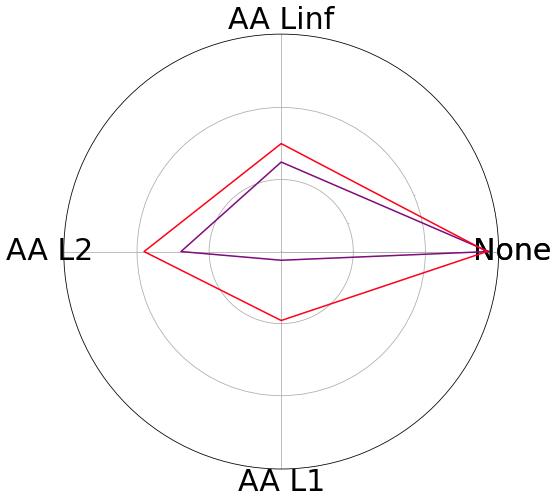}
	    (b): CIFAR-100 %{\small Embedded data manifold}
	\end{minipage}
		\begin{minipage}{.32\linewidth}
		\centering
	    \includegraphics[width=\linewidth]{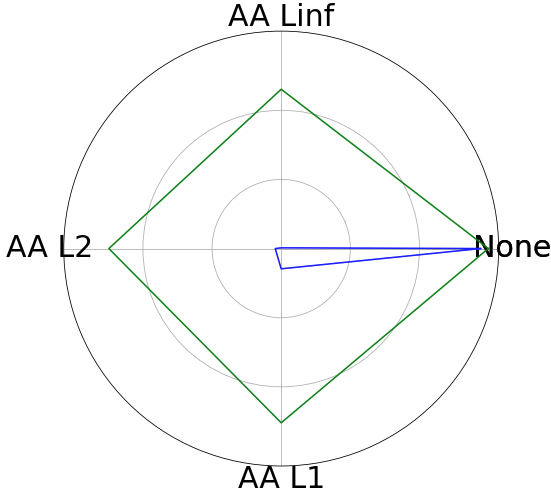}
	    (c): Tiny-Imagenet %{\small Embedded data manifold}
	\end{minipage}
    \caption{(a), (b) and (c)  are radar plots that summarizes RN-18 in Table \ref{cifar10 oblivious control}, WRN-76-10 in Table \ref{cifar100 oblivious control robust}, and EfficientNet-b0 in \ref{tiny-imagenet oblivious control robust} respectively.
    }
    \label{fig:radar plot on numerical results}
\end{figure}

\section{Discussions}
\label{sec:discussion}

\subsection{Limitation of the proposed self-healing framework}
In this section, we discuss the limitations of the current work from both practical and theoretical perspectives.
Those discussions provide insights of the current framework and motivate future research in this direction.

\paragraph{The accuracy of embedding manifolds affects the control performance.}
As shown in Sec. \ref{sec:control objective and margin analysis}, the objective function of the proposed self-healing framework minimizes the distance between a given state trajectory and the embedding manifolds.
The role of embedding manifolds encodes the geometric information of how the clean data behaves in the pre-trained deep neural network.
Therefore, a important question is how accurate those embedding manifolds encode the state trajectories.
In a case that the embedding manifolds do not precisely resemble the data structures, the running loss in Eq.~\eqref{eq:running loss} that measures the distance between a perturbed data and the embedding manifold does not have the true information of this applied perturbation.
Then the applied controls might lead to wrong predictions.
Table~\ref{cifar10 bad embedding} compares two control results that use optimal embedding functions as in Table~\ref{cifar10 oblivious control} and SegNet as embedding function.
It shows the importance of constructing accurate embedding functions.

\begin{table}[t]
\caption{Control with sub-optimal embedding function / optimal embedding function}
\begin{center}
\begin{tabular}{ p{1.9cm}|p{2.3cm}p{2.3cm}p{2.3cm}p{2.3cm} }
\hline
\multicolumn{5}{c}{$\ell_{\infty}:\epsilon=8 / 255$, $\ell_2:\epsilon=0.5$, $\ell_1:\epsilon=12$}\\
% \hline
% \multicolumn{5}{c}{Standard models}\\
\hline
& None & AA ($\ell_{\infty}$) & AA ($\ell_2$) & AA ($\ell_1$) \\
\hline
RN-18 & 82.97 / \textbf{92.81} & 40.29 / \textbf{63.89} & 53.61 / \textbf{82.1} & 50.69 / \textbf{75.75} \\
\hline
RN-34 & 83.14 / \textbf{92.84} & 45.55 / \textbf{64.92} & 56.57 / \textbf{83.64} & 54.34 / \textbf{78.05} \\
\hline
RN-50 & 82.39 / \textbf{92.81} & 44.18 / \textbf{64.31} & 56.06 / \textbf{83.33} & 53.89 / \textbf{77.15} \\
\hline
% WRN-28-8 & 83.76 / \textbf{92.63} & 47.48 / \textbf{75.39} & 58.87 / \textbf{86.71} & 56.99 / \textbf{84.5} \\
% \hline
% WRN-34-8 & 82.54 / \textbf{92.77} & 35.25 / \textbf{64.14} & 49.39 / \textbf{82.32} & 46.24 / \textbf{73.54} \\
% \hline
\end{tabular}
\end{center}
\label{cifar10 bad embedding}
\end{table}

In addition to the precision of embedding manifolds, recall the definition of embedding function in Eq.\eqref{eq:manifold projection}, it searches for a closest counterpart of a given data on the embedding manifold.
It implies that if data belong to the embedding manifold, its outcome from the embedding function should stay the same.
This property is guaranteed by well-studied linear projection.
Given a linear projection operator $\mat{P}$, we have $\mat{P} \circ \mat{P} \mat{x} = \mat{P} \mat{x}$.
However, 
in the nonlinear case,
the shortest path projection $ \mathcal{E} (\mat{x})$ defined in Eq.~\eqref{eq:manifold projection} does not necessarily hold the projection property.
The lack of projection property adds more challenge on measuring the running loss in Eq.~\eqref{eq:running loss}.

\paragraph{The role of control regularization is unclear.}
The second limitation is to understand the role of applying control regularization.
As in the conventional optimal control problems, regularizing the applied controls has practical meaning, such as limiting the amount of energy consumption.
In the current framework, we have observed that regularizing the applied controls can alleviate the issue of inaccurate embedding manifold,
in which case, the controls only slightly adjust the perturbed state trajectory.
This observation is not theoretically justified in this work, and we will continue to understand this in future work.

\subsection{A Broader Scope of Self Healing}

\begin{figure}[t]
\centering
\includegraphics[width=0.9\linewidth]{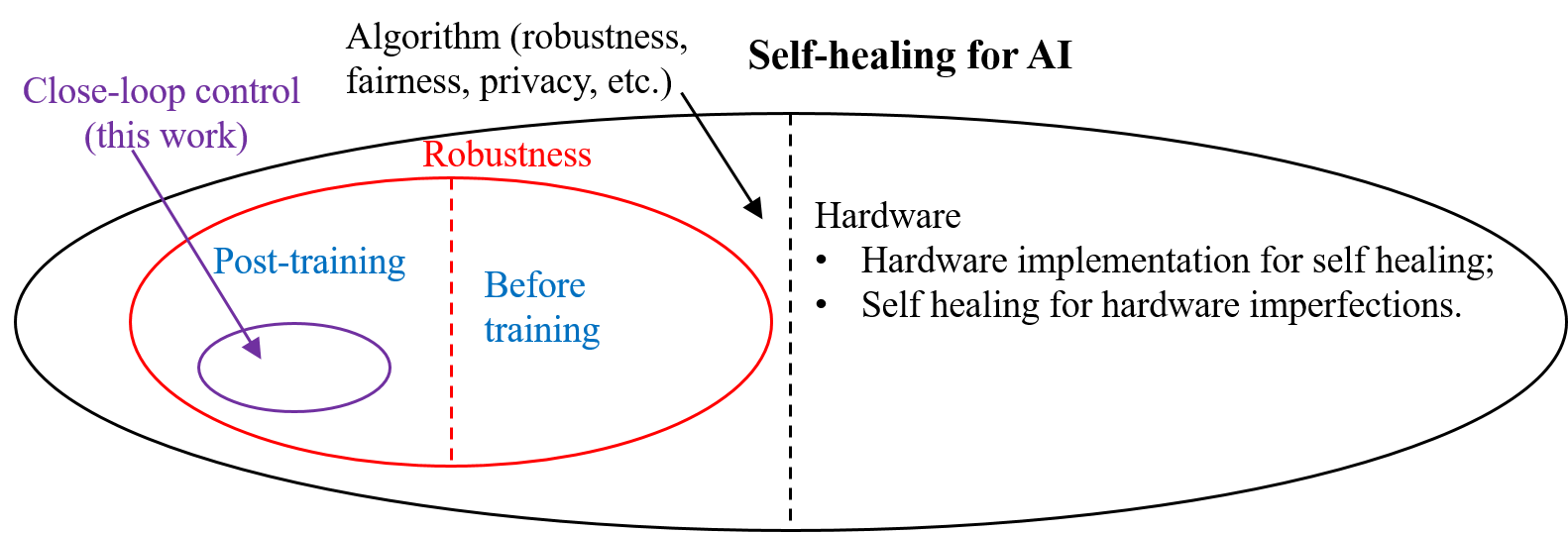}
\caption{A broader scope of self healing for neural networks.}
\label{fig: self-healing_broadview}
\end{figure}

This paper has proposed a self-healing framework implemented with close-loop control to improve the robustness of  given neural network. The control signals are generated and injected into all neurons. Based on this work, many more topics can be investigated in the future. Below we point out some possible directions.
\paragraph{Extension of the Proposed Framework.} An immediate extension of this work is to consider the closed-loop control applied to model parameters (instead of neurons). In order to reduce the control complexity, we may also deploy local performance monitor and local control, instead of monitoring and controlling all neurons. Another more fundamental question is: can we achieve self healing without using closed-loop control? In~\citep{wang2021immuno,wang2021rails}, self healing is achieved by mimicking the immune system of a human body and without using closed-loop control.
\paragraph{Beyond Post-Training Self-Healing.} This work focuses on realizing self healing after a neural network is already trained. It may be possible to achieve self healing in other development stages of a neural network, such as in training and in data acquisition/preparation.  For instance, \cite{kang2021stable} built a robust neural network by enforcing Lyapunov stability in the training. Their neural network offers better robustness via automatically attracting some unforeseen perturbed trajectories to a region around the desired equilibrium point associated with unperturbed data, thus it can be regarded to have some ``self-healing" capabilities.

\paragraph{Beyond Robustness.} The key idea of self healing is to automatically fix the possible errors/weakness of a neural network, with or without a performance monitor. This idea may be extended to address other fundamental issues in AI, such as AI fairness and privacy, as well as the safety issue in AI-based decision making. 

\paragraph{Self Healing at the Hardware Level/Computing Platforms.} The self-healing perspectives bring in many opportunities and challenges at the hardware level. On one hand, the proposed self healing can cause extra hardware cost in the inference. Therefore, it is important to investigate hardware-efficient self-healing mechanism, which can provide self-healing capability with minimal hardware overhead. On the other hand, many imperfections of AI hardware may also be addressed via self healing. Examples include process variations in AI ASIC chip design, and software/hardware errors in distributed AI platforms.

Our vision is visualized in Fig. ~\ref{fig: self-healing_broadview}. This work is a proof-of-concept demonstration of self healing for AI robustness, and many more research problems need to be investigated in the future. 

\section{Conclusion}
This paper has improved the robustness of neural network from a new self-healing perspective. By formulating the problem as a closed-loop control, we show that it is possible for a neural network to automatically detect and fix the possible errors caused by various perturbations and attacks. We have provided margin-based analysis to explain why the designed control loss function can improve robustness. We have also presented efficient numerical solvers to mitigate the computational overhead in inference. Our theoretical analysis has also provided a strict error bound of the neural network trajectory error under data perturbations. The numerical experiments have shown that this method can significantly increase the robustness of neural networks under various types of perturbations/attacks that were unforeseen in the training process. As pointed out in Section~\ref{sec:discussion}, 
this self-healing method may be extended to investigate other fundamental issues (such as fairness, privacy and hardware reliability) of neural networks in the future.

\section*{Acknowledgement}

Zhuotong Chen and Zheng Zhang are supported by NSF Grant \# 2107321 and DOE Grant \# DE-SC0021323.
Qianxiao Li is supported by the National Research Foundation of Singapore, under the NRF Fellowship (NRF-NRFF13-2021-0005).

% Acknowledgements should go at the end, before appendices and references

%\acks{We would like to acknowledge support for this project
%from the National Science Foundation (NSF grant IIS-9988642)
%and the Multidisciplinary Research Program of the Department
%of Defense (MURI N00014-00-1-0637). }

% Manual newpage inserted to improve layout of sample file - not
% needed in general before appendices/bibliography.

\newpage
\appendix
\section{Manifold Projection On Classifier Margin}
\label{proof: Manifold Projection Enlarges Classifier Margin In Linear Case}
\linearclassifiermargin*

\begin{proof}
We define the ground-truth manifold as follows,
\begin{equation*}
    \mathcal{M}^{\ast} = \{\mat{x}: \mat{A} \mat{x} = \mat{0}, |\mat{c}^T \mat{x}| \geq d\},
\end{equation*}
that is, the ground-truth manifold $\mathcal{M}^{\ast}$ consists of two half-spaces corresponding to the two classes.
Let $\mathcal{M}$ be an linear subspace $\mathcal{M} = \{\mat{x}:\mat{A} \mat{x} = \mat{0}\}$, in which case, $\mathcal{M}^{\ast} \subset \mathcal{M}$.
We consider a linear classifier with random decision boundary.
Let $\mat{A}$ be a hyperplane that represents the decision boundary of this linear classifier, 
$\hat{\mat{n}}$ as a $d$-dimensional normal vector such that $\hat{\mat{n}}^T \mat{A} = \mat{0}$.
A random linear classifier can be represented by $\hat{\mat{n}} \sim \mathcal{N} (\mat{0}, \frac{1} {d} \mat{I})$.

\paragraph{Manifold and Euclidean margins attain the same $\mat{x}^{\ast}$.}
In this linear case,
the following shows that the manifold margin $d_{\mathcal{M}} (F (\cdot))$ in Eq.~\eqref{eq:manifold margin equation} is equivalent to $d_e(F \circ \mathcal{E}(\cdot))$ in Eq.~\eqref{eq:euclidean margin equation} where $\mathcal{E}(\cdot)$ is the orthogonal projection onto the subspace $\mathcal{M}$.
Since the embedding manifold $\mathcal{M}$ is a linear subspace,
the geodesics defined on the manifold is equivalent as the euclidean norm,
\begin{align*}
    \mathcal{R}_{\mathcal{M}}(\mat{a}, \mat{b})
    \vcentcolon = &
    \inf\limits_{
        \gamma \in \Gamma_{\mathcal{M}}(\mat{a}, \mat{b}),
    }
    \int_0^1 \sqrt{ \langle\, \gamma'(t),\gamma'(t) \rangle_{\gamma(t)} } dt, \\
    & = \lVert \mat{a} - \mat{b} \rVert_2,
\end{align*}
the manifold margin can be shown as follows,
\begin{align*}
    d_{\mathcal{M}}(F(\cdot)) & = \frac{1}{2} \inf\limits_{\mat{x}_1, \mat{x}_2 \in \mathcal{M}^{\ast}} \mathcal{R}_{\mathcal{M}} (\mat{x}_1, \mat{x}_2), \;\;\;\; {\rm s.t.} ~F(\mat{x}_1) \neq F(\mat{x}_2), \\
    & = \frac{1}{2} \inf\limits_{\mat{x}_1, \mat{x}_2 \in \mathcal{M}^{\ast}} \lVert \mat{x}_1 - \mat{x}_2 \rVert_2, \;\;\;\; {\rm s.t.} ~F(\mat{x}_1) \neq F(\mat{x}_2).
\end{align*}
Furthermore,
\begin{align*}
    d_e(F \circ \mathcal{E}(\cdot)) & = \inf\limits_{\mat{x} \in \mathcal{M}^{\ast}} \inf\limits_{\boldsymbol\delta \in \mathbb{R}^d} \lVert \boldsymbol\delta \rVert_2, \;\;\;\; {\rm s.t.}~F \circ \mathcal{E}(\mat{x}) \neq F \circ \mathcal{E}(\mat{x} + \boldsymbol\delta), \\
    & = \inf\limits_{\mat{x} \in \mathcal{M}^{\ast}} \inf\limits_{\boldsymbol\delta \in \mathbb{R}^d} \lVert \boldsymbol\delta \rVert_2, \;\;\;\; {\rm s.t.}~F(\mat{x}) \neq F (\mat{x} + \mathcal{E}(\boldsymbol\delta)), \\
    & = \inf\limits_{\mat{x} \in \mathcal{M}^{\ast}} \inf\limits_{\boldsymbol\delta \in \mathbb{R}^d } \inf\limits_{\boldsymbol\delta' = \mathcal{E} (\boldsymbol\delta)} \lVert \boldsymbol\delta' \rVert_2, \;\;\;\; {\rm s.t.}~F(\mat{x}) \neq F(\mat{x} + \boldsymbol\delta'), \\
    & = \inf\limits_{\mat{x} \in \mathcal{M}^{\ast}} \inf\limits_{\boldsymbol\delta' \in \mathcal{M}} \lVert \boldsymbol\delta' \rVert_2, \;\;\;\; {\rm s.t.}~F(\mat{x}) \neq F(\mat{x} + \boldsymbol\delta'), \\
    & = \frac{1} {2} \inf\limits_{\mat{x}_1, \mat{x}_2 \in \mathcal{M}^{\ast}} \lVert \mat{x}_1 - \mat{x}_2 \rVert_2, \;\;\;\; {\rm s.t.}~F(\mat{x}_1) \neq F(\mat{x}_2),
    \\
    & = d_{\mathcal{M}}(F(\cdot)).
\end{align*}
where the embedding function $\mathcal{E} (\cdot)$ is replaced by restricting $\boldsymbol\delta \in \mathcal{M}$.

We denote $\hat{\mat{n}}$ as a $d$-dimensional normal vector that is orthogonal to the hyperplane defined by $\mat{A}$, such that $\hat{\mat{n}}^T \mat{A} = \mat{0}$.
The Euclidean margin in Eq.~\eqref{eq:euclidean margin equation} can be shown as follows,
\begin{equation*}
    d_e(F(\cdot)) = \inf\limits_{\mat{x} \in \mathcal{M}^{\ast}} \lVert \mat{x}^T \hat{\mat{n}} \rVert_2.
\end{equation*}
Since $\mathcal{E}(\cdot)$ is a linear orthogonal projection,
recall that $d_{\mathcal{M}}(F(\cdot)) = d_e(F \circ \mathcal{E}(\cdot))$,
\begin{equation*}
    d_{\mathcal{M}}(F(\cdot)) = d_e(F \circ \mathcal{E}(\cdot))= \inf\limits_{\mat{x} \in \mathcal{M}^{\ast}} \frac{\lVert \mat{x}^T \mathcal{E} (\hat{\mat{n}}) \rVert_2} {\lVert \mathcal{E} (\hat{\mat{n}}) \rVert_2} = \inf\limits_{\mat{x} \in \mathcal{M}^{\ast}} \frac{\lVert (\mathcal{E}(\mat{x}))^T \hat{\mat{n}} \rVert_2} {\lVert \mathcal{E} (\hat{\mat{n}}) \rVert_2} = \inf\limits_{\mat{x} \in \mathcal{M}^{\ast}} \frac{\lVert \mat{x}^T \hat{\mat{n}} \rVert_2} {\lVert \mathcal{E} (\hat{\mat{n}}) \rVert_2},
\end{equation*}
since $\mat{x} \in \mathcal{M}^{\ast} \in \mathcal{M}$, the orthogonal projection $\mathcal{E}(\mat{x}) = \mat{x}$.
Therefore, the manifold margin is the Euclidean margin divided by a constant scalar $\lVert \mathcal{E}(\hat{\mat{n}}) \rVert$,
$d_{\mathcal{M}}(F(\cdot))$ and $d_e(F(\cdot))$ are achieved at the same optimum $\mat{x}^{\ast}$.

\paragraph{Relationship between manifold and Euclidean margins.}
Let $\mat{V} \in \mathbb{R}^{d \times r}$ be a orthonormal basis of the $r$-dimensional embedding subspace.
An angle $\theta$ between the classifier hyperplane and the embedding subspace describes the relationship between $d_e(F(\cdot))$ and $d_{\mathcal{M}}(F(\cdot))$,
\begin{equation*}
    \mathbb{E} \big[\sin{\theta}\big] = \mathbb{E} \bigg[ \frac{d_e(F(\cdot))} {d_{\mathcal{M}}(F(\cdot))} \bigg].
\end{equation*}
Denote $\omega$ as the angle between $\hat{\mat{n}}$ and the embedding subspace, $\theta = \frac{\pi}{2} - \omega$,
\begin{equation*}
    \sin{\theta} = \cos{\omega} = \lVert \mat{V}^T \mat{n} \rVert.
\end{equation*}
Moreover, when the linear classifier forms a random decision boundary, we consider its orthogonal normal vector $\hat{\mat{n}} \sim \mathcal{N} (\mat{0}, \frac{1} {d} \mat{I})$.
Therefore, $\mat{V}^T \hat{\mat{n}} \sim \mathcal{N} (\mat{0}, \frac{1} {d} \mat{V}^T \mat{V})$.
\begin{equation*}
    \mathbb{E}\big[ \lVert \mat{V}^T \hat{\mat{n}} \rVert_2^2 \big] = \frac{1} {d} Tr(\mat{V}^T \mat{V}) = \frac{r} {d}.
\end{equation*}
Then
\begin{equation*}
    \mathbb{E}\big[ (\sin{\theta})^2 \big] = \mathbb{E}\big[ (\cos{\omega})^2 \big] = \frac{r} {d},
\end{equation*}
and from Jensen's inequality,
\begin{equation*}
    \big( \mathbb{E} \big[ \sin{\theta} \big]] \big)^2 \leq \mathbb{E}\big[ (\sin{\theta})^2 \big].
\end{equation*}
Therefore,
\begin{equation*}
    \mathbb{E} \bigg[ \frac{d_e(F(\cdot))} {d_{\mathcal{M}}(F(\cdot))} \bigg] \leq \sqrt{\frac{r} {d}}.
\end{equation*}

\end{proof}

\section{Error Estimation of Linear System}
\label{proof: error estimation of the linear case}
This section derives the error estimation of closed-loop control framework in linear cases.
Given a sequence of states $\{ \mat{x}_t \}_{t=0}^{T-1}$, such that $\mat{x}_t \in \mathcal{M}_t$ for all $t$,
we denote $\boldsymbol\theta_t$ as the linearized transformation of the nonlinear transformation $F_t(\cdot)$ centered at $\mat{x}_t$.
We represent the $t^{th}$ embedding manifold $\mathcal{M}_t = f_t^{-1} (\mat{0})$,
where $f_t(\cdot) : \mathbb{R}^d \rightarrow \mathbb{R}^{d - r}$ is a submersion of class $\mathcal{C}^2$.
Recall Proposition \ref{prop: manifold tangent space projection},
the kernel of $f_t'(\mat{x}_t)$ is equivalent to $\mathcal{T}_{\mat{x}_t}\mathcal{M}_t$,
and the orthogonal projection onto $\mathcal{T}_{\mat{x}_t}\mathcal{M}_t$ (Eq.~\eqref{eq:tangent space orthogonal projection}) is 
\begin{equation*}
    \mat{P}_t \vcentcolon = \mat{I} - f_t'(\mat{x}_t)^T (f_t'(\mat{x}_t) f_t'(\mat{x}_t)^T)^{-1} f_t'(\mat{x}_t),
\end{equation*}
and the orthogonal projection onto orthogonal complement of $\mathcal{T}_{\mat{x}_t}\mathcal{M}_t$ is
\begin{equation*}
    \mat{Q}_t = \mat{I} - \mat{P}_t = f_t'(\mat{x}_t)^T (f_t'(\mat{x}_t) f_t'(\mat{x}_t)^T)^{-1} f_t'(\mat{x}_t).
\end{equation*}
For simplicity, a orthonormal basis of $\mathcal{T}_{\mat{x}_t}\mathcal{M}_t$ is denoted as $\mat{V}_t \in \mathbb{R}^{d \times d}$,
in which case,
the orthogonal projection $\mat{P}_t = \mat{V}_t \mat{V}_t^T$,
and $\mat{Q}_t = \mat{I} - \mat{V}_t \mat{V}_t^T$.

We consider a set of tangent spaces $\{ \mathcal{T}_{\mat{x}_t}\mathcal{M}_t \}_{t=0}^{T-1}$, that is, each $\mathcal{T}_{\mat{x}_t}\mathcal{M}_t$ is the tangent space of $\mathcal{M}_t$ at $\mat{x}_t$.
Recall the running loss in Eq.~\eqref{eq:running loss},
the linear setting uses projection onto a tangent space rather than an nonlinear embedding manifold.
\begin{equation}
    J(\mat{x}_t, \mat{u}_t) = \frac{1} {2} \lVert \mat{Q}_t(\mat{x}_t + \mat{u}_t) \rVert_2^2 + \frac{c} {2} \lVert \mat{u}_t \rVert_2^2,
    \label{eq:prop_cost_function}
\end{equation}
it measures the magnitude of the controlled state $\mat{x}_t + \mat{u}_t$ within the orthogonal complement of $\mathcal{T}_{\mat{x}_t}\mathcal{M}_t$, and the magnitude of applied control $\mat{u}_t$.

The optimal feedback control under Eq.~\eqref{eq:prop_cost_function} is defined as
\begin{equation*}
    \mat{u}_t^{P} (\mat{x}_t) = \arg \min \limits_{\mat{u}_t} J(\mat{x}_t, \mat{u}_t),
\end{equation*}
it admits an exact solution by setting the gradient of performance index (Eq.~\eqref{eq:prop_cost_function}) to $\mat{0}$.
\begin{align*}
    \nabla_{\mat{u}} J(\mat{x}_t, \mat{u}_t) & = \nabla_{\mat{u}} \bigg( \frac{1} {2} \lVert \mat{Q}_t (\mat{x}_t + \mat{u}_t) \rVert_2^2 + \frac{c} {2} \lVert \mat{u}_t \rVert_2^2 \bigg), \nonumber \\
    & = \mat{Q}_t^T \mat{Q}_t \mat{x}_t + \mat{Q}_t^T \mat{Q}_t \mat{u}_t + c \cdot \mat{u}_t,
\end{align*}
which leads to the exact solution of $\mat{u}_t^{P}$ (Eq.~\eqref{eq:solution up}) as
\begin{equation}
    \mat{u}_t^{P} = - (c \cdot \mat{I} + \mat{Q}_t^T \mat{Q}_t)^{-1} \mat{Q}_t^T \mat{Q}_t \mat{x}_t = - \mat{K}_t \mat{x}_t,
    \label{eq:analytic_solution_u}
\end{equation}
where the feedback gain matrix $\mat{K}_t = (c \cdot \mat{I} + \mat{Q}_t^T \mat{Q}_t)^{-1} \mat{Q}_t^T \mat{Q}_t$.
Thus, the one-step feedback control can be represented as $\mat{u}_t^{P} = - \mat{K}_t \mat{x}_t$.

Given a sequence $\{ \mat{x}_t \}_{t=0}^{T-1}$,
we denote $\{ \mat{q}_{\epsilon, t} \}_{t=0}^{T-1}$ as another sequence of states resulted from the linearized system,
$\mat{q}_{\epsilon, 0} = \mat{x}_0 + \mat{z}$, for some perturbation $\mat{z}$,
and $\{ \overline{\mat{q}}_{\epsilon, t} \}_{t=0}^{T-1}$ as the adjusted states by the linear control,
\begin{align*}
    \overline{\mat{q}}_{\epsilon, t+1} & = \boldsymbol\theta_t (\overline{\mat{q}}_{\epsilon, t} + \mat{u}_t^P), \\
    & = \boldsymbol\theta_t (\mat{I} - \mat{K}_t ) \overline{\mat{q}}_{\epsilon, t}.
\end{align*}

The difference between the controlled system applied with perturbation at initial condition and the uncontrolled system without perturbation is follows,
\begin{align}
    \overline{\mat{q}}_{\epsilon, t+1} - \mat{x}_{t+1} & = \boldsymbol\theta_t (\overline{\mat{q}}_{\epsilon, t} + \mat{u}_t - \mat{x}_t), \nonumber \\
    & = \boldsymbol\theta_t (\overline{\mat{q}}_{\epsilon, t} - \mat{K}_t \overline{\mat{q}}_{\epsilon, t} - \mat{x}_t).
    \label{eq:state_difference}
\end{align}
The control objective is to minimize the state components that lie in the orthogonal complement of the tangent space.
When the data locates on the embedding manifold, $\mat{x}_t \in \mathcal{M}_t$,
this results in $\mat{Q}_t \mat{x}_t = \mat{0}$,
consequently, its feedback control $\mat{K}_t \mat{x}_t = \mat{0}$.
The state difference of Eq.~\eqref{eq:state_difference} can be further shown by adding a $\mat{0}$ term of $(\boldsymbol\theta_t \mat{K}_t \mat{x}_t)$
\begin{align}
\label{eq:controlled_dynamics}
    \overline{\mat{q}}_{\epsilon, t+1} - \mat{x}_{t+1} & = \boldsymbol\theta_t (\mat{I} - \mat{K}_t) \overline{\mat{q}}_{\epsilon, t} - \boldsymbol\theta_t \mat{x}_t + \boldsymbol\theta_t \mat{K}_t \mat{x}_t, \nonumber \\
    & = \boldsymbol\theta_t (\mat{I} - \mat{K}_t) (\overline{\mat{q}}_{\epsilon, t} - \mat{x}_t).
\end{align}
In the following, we show a transformation on $(\mat{I} - \mat{K}_t)$ based on its definition.
\begin{lemma}
\label{lemma:control_matrix}
For $t \geq 0$, we have
\begin{equation*}
    \mat{I} - \mat{K}_t = \alpha \cdot \mat{I} + (1 - \alpha) \cdot \mat{P}_t,
\end{equation*}
where $\mat{P}_t \vcentcolon = \mat{V}_t^r (\mat{V}_t^r)^T$, which is the orthogonal projection onto $Z_{\parallel}^{t}$, and $\alpha \vcentcolon = \frac{c} {1 + c}$ such that $\alpha \in [0, 1]$.
\end{lemma}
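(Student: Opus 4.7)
The plan is to exploit the structure of $\mat{K}_t = (c\mat{I} + \mat{Q}_t^T \mat{Q}_t)^{-1} \mat{Q}_t^T \mat{Q}_t$ by leveraging the fact that $\mat{Q}_t = \mat{I} - \mat{V}_t^r (\mat{V}_t^r)^T$ is an orthogonal projection, hence symmetric and idempotent. This immediately collapses $\mat{Q}_t^T \mat{Q}_t$ to $\mat{Q}_t$, so that $\mat{K}_t = (c\mat{I} + \mat{Q}_t)^{-1} \mat{Q}_t$. The entire calculation reduces to inverting the matrix $c\mat{I} + \mat{Q}_t$ in closed form.

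To invert $c\mat{I} + \mat{Q}_t$, I would use the resolution of the identity $\mat{I} = \mat{P}_t + \mat{Q}_t$ coming from the orthogonal decomposition $\mathbb{R}^d = \mathcal{T}_{\mat{x}_t}\mathcal{M}_t \oplus \mathcal{T}_{\mat{x}_t}\mathcal{M}_t^{\perp}$. Substituting yields
\begin{equation*}
c\mat{I} + \mat{Q}_t \;=\; c\mat{P}_t + (c+1)\mat{Q}_t,
\end{equation*}
which is a simultaneously diagonalized expression with respect to the two complementary projections. Since $\mat{P}_t \mat{Q}_t = \mat{Q}_t \mat{P}_t = \mat{0}$ and $\mat{P}_t^2 = \mat{P}_t$, $\mat{Q}_t^2 = \mat{Q}_t$, the inverse is simply
\begin{equation*}
(c\mat{I} + \mat{Q}_t)^{-1} \;=\; \tfrac{1}{c}\mat{P}_t + \tfrac{1}{c+1}\mat{Q}_t,
\end{equation*}
as one verifies by multiplying out. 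Multiplying on the right by $\mat{Q}_t$ annihilates the $\mat{P}_t$-term, giving $\mat{K}_t = \tfrac{1}{c+1}\mat{Q}_t$.

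From here the conclusion is a one-line rearrangement. Writing $\mat{I} - \mat{K}_t = \mat{I} - \tfrac{1}{c+1}(\mat{I} - \mat{P}_t) = \tfrac{c}{c+1}\mat{I} + \tfrac{1}{c+1}\mat{P}_t$ and setting $\alpha = \tfrac{c}{c+1}$ (so that $1 - \alpha = \tfrac{1}{c+1}$) gives exactly $\mat{I} - \mat{K}_t = \alpha \mat{I} + (1 - \alpha)\mat{P}_t$, and $\alpha \in [0,1]$ because $c \geq 0$.

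There is no real obstacle here; the only subtle point is recognizing the orthogonal decomposition trick and justifying the inversion formula, which is transparent once one observes that $\mat{P}_t$ and $\mat{Q}_t$ commute and partition the identity. A cleaner alternative would be to argue eigenspace-wise: $\mat{Q}_t$ has eigenvalues $0$ on $\mathcal{T}_{\mat{x}_t}\mathcal{M}_t$ (with eigenprojector $\mat{P}_t$) and $1$ on the orthogonal complement (with eigenprojector $\mat{Q}_t$), so by the functional calculus $\mat{K}_t = g(\mat{Q}_t)$ with $g(\lambda) = \lambda/(c+\lambda)$, yielding $g(0) = 0$ and $g(1) = 1/(c+1)$ directly; this is essentially the same computation in spectral language.
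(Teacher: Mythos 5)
Your proof is correct and rests on the same underlying observation as the paper's — the spectral decomposition of the projector $\mat{Q}_t$ — though you phrase it coordinate-free via the projector algebra ($\mat{P}_t + \mat{Q}_t = \mat{I}$, $\mat{P}_t\mat{Q}_t = \mat{0}$, idempotency) rather than writing out the explicit diagonalization in the $\mat{V}_t$ basis as the paper does. The one cosmetic caveat is that your intermediate expression $(c\mat{I}+\mat{Q}_t)^{-1} = \tfrac{1}{c}\mat{P}_t + \tfrac{1}{c+1}\mat{Q}_t$ requires $c>0$, but the final identity $\mat{K}_t = \tfrac{1}{c+1}\mat{Q}_t$ extends to $c=0$ by continuity, so this is harmless and matches the paper's implicit treatment of $\alpha\in[0,1]$.
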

\begin{proof}
Recall that $\mat{K}_t = (c \cdot \mat{I} + \mat{Q}_t^T \mat{Q}_t)^{-1} \mat{Q}_t^T \mat{Q}_t$, and $\mat{Q}_t = \mat{I} - \mat{V}_t^r (\mat{V}_t^r)^T$, $\mat{Q}_t$ can be diagonalized as following
\begin{gather*}
 \mat{Q}_t
 = \mat{V}_t
  \begin{bmatrix}
   0 & 0 & \cdots & 0 & 0 \\
   0 & 0 & \cdots & 0 & 0 \\
   \vdots & \vdots & \ddots & 0 & 0 \\
   0 & 0 & \cdots & 1 & 0 \\
   0 & 0 & \cdots & 0 & 1 \\
   \end{bmatrix}
   \mat{V}_t^T,
\end{gather*}
where the first $r$ diagonal elements have common value of $0$ and the last ($d-r$) diagonal elements have common value of $1$. Furthermore, the feedback gain matrix $\mat{K}_t$ can be diagonalized as
\begin{gather*}
 \mat{K}_t
 = \mat{V}_t
  \begin{bmatrix}
   0 & 0 & \cdots & 0 & 0 \\
   0 & 0 & \cdots & 0 & 0 \\
   \vdots & \vdots & \ddots & 0 & 0 \\
   0 & 0 & \cdots & \frac{1} {1+c} & 0 \\
   0 & 0 & \cdots & 0 & \frac{1} {1+c} \\
   \end{bmatrix}
   \mat{V}_t^T,
\end{gather*}
where the last ($d-r$) diagonal elements have common value of $\frac{1} {1+c}$. The control term $(\mat{I} - \mat{K}_t)$ thus can be represented as
\begin{gather*}
 \mat{I} - \mat{K}_t
 = \mat{V}_t
  \begin{bmatrix}
   1 & 0 & \cdots & 0 & 0 \\
   0 & 1 & \cdots & 0 & 0 \\
   \vdots & \vdots & \ddots & 0 & 0 \\
   0 & 0 & \cdots & \frac{c} {1+c} & 0 \\
   0 & 0 & \cdots & 0 & \frac{c} {1+c} \\
   \end{bmatrix}
   \mat{V}_t^T,
\end{gather*}
where the first $r$ diagonal elements have common value of $1$ and the last ($d-r$) diagonal elements have common value of $\frac{c} {1+c}$. By denoting the projection of first $r$ columns as $\mat{V}_t^r$ and last $(d-r)$ columns as $\hat{\mat{V}}_t^r$, it can be further shown as
\begin{align*}
    \mat{I} - \mat{K}_t & = \mat{V}_t^r (\mat{V}_t^r)^T + \frac{c} {1+c} \big( \hat{\mat{V}}_t^r (\hat{\mat{V}}_t^r)^T \big), \\
    & = \mat{P}_t + \alpha \big( \mat{I} - \mat{P}_t \big), \\
    & = \alpha \cdot \mat{I} + (1 - \alpha) \cdot \mat{P}_t.
\end{align*}
\end{proof}

\begin{lemma}
\label{lemma:pts}
Define for $t \geq 0$
\[ \begin{cases}
      \mat{P}_t^0 \vcentcolon = \mat{P}_t, \\
      \mat{P}_t^{(s+1)} \vcentcolon = \boldsymbol\theta_{t-s-1}^{-1} \mat{P}_t^s \boldsymbol\theta_{t-s-1} , \hspace{0.3cm} s = 0, 1, \ldots, t-1,
   \end{cases}
\]
for $0 \leq s \leq t$. Then
\begin{enumerate}[leftmargin=*]
   \item $\mat{P}_t^s$ is a projection.
   \item $\mat{P}_t^s$ is a projection onto $Z_{\parallel}^{t-s}$, i.e. $range(\mat{P}_t^s) = Z_{\parallel}^{t-s}$.
\end{enumerate}
\end{lemma}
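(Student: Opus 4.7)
\medskip

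\noindent\textbf{Proof proposal for Lemma.}

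The plan is to handle the two claims by induction on $s$, with the second claim relying on a compatibility identity between the linearized transition and the tangent spaces of the embedding manifolds.

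\textbf{Part 1 (idempotency).} I will induct on $s$. The base case $s=0$ holds since $\mat{P}_t^0=\mat{P}_t$ is an orthogonal projection by construction. For the inductive step, assume $(\mat{P}_t^s)^2=\mat{P}_t^s$. Then by the recursive definition,
\begin{equation*}
(\mat{P}_t^{s+1})^2
=\boldsymbol\theta_{t-s-1}^{-1}\mat{P}_t^s\boldsymbol\theta_{t-s-1}\boldsymbol\theta_{t-s-1}^{-1}\mat{P}_t^s\boldsymbol\theta_{t-s-1}
=\boldsymbol\theta_{t-s-1}^{-1}(\mat{P}_t^s)^2\boldsymbol\theta_{t-s-1}
=\mat{P}_t^{s+1},
\end{equation*}
so idempotency is preserved under conjugation by $\boldsymbol\theta_{t-s-1}$. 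Note that $\mat{P}_t^{s+1}$ is not in general an \emph{orthogonal} projection, only an (oblique) projection, which matches the statement.

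\textbf{Part 2 (range identification).} The crucial auxiliary observation is that, along the unperturbed trajectory $\{\mat{x}_t\in\mathcal{M}_t\}$, the linearized map pushes tangent spaces forward in the natural way, namely $\boldsymbol\theta_s\,Z_{\parallel}^{s}=Z_{\parallel}^{s+1}$ for every $s$. This follows from the fact that $F_s$ sends an open neighborhood of $\mat{x}_s$ in $\mathcal{M}_s$ into $\mathcal{M}_{s+1}$, so its Jacobian at $\mat{x}_s$ carries $\mathcal{T}_{\mat{x}_s}\mathcal{M}_s$ into $\mathcal{T}_{\mat{x}_{s+1}}\mathcal{M}_{s+1}$; invertibility of $\boldsymbol\theta_s$ (as assumed implicitly by the appearance of $\boldsymbol\theta_{t-s-1}^{-1}$ in the definition) then upgrades this containment to an equality after a dimension check. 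With this in hand, I again induct on $s$. The base $s=0$ is immediate from Proposition~\ref{prop: manifold tangent space projection}. For the inductive step, suppose $\mathrm{range}(\mat{P}_t^s)=Z_{\parallel}^{t-s}$ and $\mat{P}_t^s$ acts as the identity on $Z_{\parallel}^{t-s}$. For any $v\in\mathbb{R}^d$, $\mat{P}_t^s\boldsymbol\theta_{t-s-1}v\in Z_{\parallel}^{t-s}=\boldsymbol\theta_{t-s-1}Z_{\parallel}^{t-s-1}$, so after applying $\boldsymbol\theta_{t-s-1}^{-1}$ we land in $Z_{\parallel}^{t-s-1}$; this gives $\mathrm{range}(\mat{P}_t^{s+1})\subseteq Z_{\parallel}^{t-s-1}$. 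Conversely, for $w\in Z_{\parallel}^{t-s-1}$ we have $\boldsymbol\theta_{t-s-1}w\in Z_{\parallel}^{t-s}$, hence $\mat{P}_t^s\boldsymbol\theta_{t-s-1}w=\boldsymbol\theta_{t-s-1}w$, and therefore $\mat{P}_t^{s+1}w=w$, proving both the reverse inclusion and that $\mat{P}_t^{s+1}$ fixes $Z_{\parallel}^{t-s-1}$ pointwise.

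\textbf{Main obstacle.} The only nontrivial ingredient is the tangent-space pushforward identity $\boldsymbol\theta_s Z_{\parallel}^s=Z_{\parallel}^{s+1}$. Everything else in the lemma is formal manipulation of similarity-like relations. I would justify this identity by arguing that the unperturbed trajectory lies on the manifolds $\mathcal{M}_s$ by construction of the submersions $f_s$, and that differentiating the relation $f_{s+1}(F_s(\mat{x}_s))=0$ at $\mat{x}_s$, combined with rank considerations on $f_s'(\mat{x}_s)$ and $f_{s+1}'(\mat{x}_{s+1})$, yields exactly the required equality. The invertibility assumption on $\boldsymbol\theta_s$ built into the definition of $\mat{P}_t^{s+1}$ then ensures dimensions match so that the inclusion is an equality rather than just a one-way containment.
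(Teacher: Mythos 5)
Your proof is correct and follows essentially the same inductive structure as the paper's: idempotency is preserved under the similarity transform, and the range is tracked by pulling back $Z_{\parallel}^{t-s}$ through $\boldsymbol\theta_{t-s-1}^{-1}$. The one place you go beyond the paper is in explicitly isolating and attempting to justify the pushforward identity $\boldsymbol\theta_s Z_{\parallel}^s = Z_{\parallel}^{s+1}$; the paper's proof uses this equality silently in the final step $\{\boldsymbol\theta_{t-s-1}^{-1}\mat{x} : \mat{x}\in Z_{\parallel}^{t-s}\} = Z_{\parallel}^{t-s-1}$ without stating or proving it, so your remark that it is the only nontrivial ingredient is a genuine (and welcome) clarification, even if the paper never actually postulates that $F_s$ maps $\mathcal{M}_s$ into $\mathcal{M}_{s+1}$ near $\mat{x}_s$.
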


\begin{proof}
\begin{enumerate}[leftmargin=*]
   \item We prove it by induction on $s$ for each $t$. For $s=0$, $\mat{P}_t^0 = \mat{P}_t$, which is a projection by its definition. Suppose it is true for $s$ such that $\mat{P}_t^s = \mat{P}_t^s \mat{P}_t^s$, then for $(s+1)$,
   \begin{align*}
       (\mat{P}_t^{s+1})^2 & = \big( \boldsymbol\theta_{t-s-1}^{-1} \mat{P}_t^s \boldsymbol\theta_{t-s-1} \big)^2, \\
       & = \boldsymbol\theta_{t-s-1}^{-1} \big( \mat{P}_t^s \big)^2 \boldsymbol\theta_{t-s-1}, \\
       & = \boldsymbol\theta_{t-s-1}^{-1} \mat{P}_t^s \boldsymbol\theta_{t-s-1}, \\
       & = \mat{P}_t^{s+1}.
   \end{align*}
   \item We prove it by induction on $s$ for each $t$. For $s=0$, $\mat{P}_t^0 = \mat{P}_t$, which is the orthogonal projection onto $Z_{\parallel}^{t}$. Suppose that it is true for $s$ such that $\mat{P}_t^s$ is a projection onto $Z_{\parallel}^{t-s}$, then for $(s+1)$, $\mat{P}_t^{s+1} = \boldsymbol\theta_{t-s-1}^{-1} \mat{P}_t^s \boldsymbol\theta_{t-s-1}$, which implies
   \begin{align*}
       range(\mat{P}_t^{s+1}) & = range(\boldsymbol\theta_{t-s-1}^{-1} \mat{P}_t^s), \\
       & = \{ \boldsymbol\theta_{t-s-1}^{-1} \mat{x}: \mat{x} \in Z_{\parallel}^{t-s} \}, \\
       & = Z_{\parallel}^{t-s-1}.
   \end{align*}
\end{enumerate}
\end{proof}
The following Lemma reformulates the state difference equation.
\begin{lemma}
\label{lemma:gts}
Define for $0 \leq s \leq t$, 
\begin{equation*}
    \mat{G}_t^s \vcentcolon = \alpha \cdot \mat{I} + (1 - \alpha) \mat{P}_t^s.
\end{equation*}
The state difference equation, $\overline{\mat{q}}_{\epsilon, t+1} - \mat{x}_{t+1} = \boldsymbol\theta_t (\mat{I} - \mat{K}_t) (\overline{\mat{q}}_{\epsilon, t} - \mat{x}_t)$, can be written as
\begin{equation*}
    \overline{\mat{q}}_{\epsilon, t} - \mat{x}_{t} = (\boldsymbol\theta_{t-1} \boldsymbol\theta_{t-2} \cdots \boldsymbol\theta_{0}) (\mat{G}_{t-1}^{t-1} \mat{G}_{t-2}^{t-2} \cdots \mat{G}_{0}^{0}) (\overline{\mat{q}}_{\epsilon, 0} - \mat{x}_{0}), \;\; t \geq 1.
\end{equation*}
\end{lemma}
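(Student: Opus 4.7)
The plan is to combine the one-step recursion \eqref{eq:controlled_dynamics} with Lemma \ref{lemma:control_matrix} to get a clean iterable update, and then to push all Jacobians $\boldsymbol\theta_s$ past the $\mat{G}$-factors to the left using a commutation identity extracted from the definition of $\mat{P}_t^s$.

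First I would observe that Lemma \ref{lemma:control_matrix} identifies $\mat{I} - \mat{K}_t = \alpha \mat{I} + (1 - \alpha) \mat{P}_t = \mat{G}_t^0$, so the state-difference recursion \eqref{eq:controlled_dynamics} reads
\begin{equation*}
  \overline{\mat{q}}_{\epsilon, t+1} - \mat{x}_{t+1} = \boldsymbol\theta_t \, \mat{G}_t^0 \, (\overline{\mat{q}}_{\epsilon, t} - \mat{x}_t).
\end{equation*}
Iterating from $s = 0$ up to $s = t-1$ immediately yields
\begin{equation*}
  \overline{\mat{q}}_{\epsilon, t} - \mat{x}_t = \bigl(\boldsymbol\theta_{t-1} \mat{G}_{t-1}^0\bigr)\bigl(\boldsymbol\theta_{t-2} \mat{G}_{t-2}^0\bigr) \cdots \bigl(\boldsymbol\theta_0 \mat{G}_0^0\bigr)\,(\overline{\mat{q}}_{\epsilon, 0} - \mat{x}_0),
\end{equation*}
which is almost the claim but still has the $\mat{G}$-factors interleaved with the $\boldsymbol\theta$-factors.

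Next I would extract the key commutation identity from the definition $\mat{P}_t^{s+1} \vcentcolon = \boldsymbol\theta_{t-s-1}^{-1} \mat{P}_t^s \boldsymbol\theta_{t-s-1}$, namely $\boldsymbol\theta_{t-s-1} \mat{P}_t^{s+1} = \mat{P}_t^s \boldsymbol\theta_{t-s-1}$. Since $\mat{G}_t^s = \alpha \mat{I} + (1-\alpha)\mat{P}_t^s$ and $\alpha \mat{I}$ commutes with everything, this upgrades to
\begin{equation*}
  \boldsymbol\theta_{t-s-1}\, \mat{G}_t^{s+1} = \mat{G}_t^{s}\, \boldsymbol\theta_{t-s-1}, \qquad 0 \le s \le t-1.
\end{equation*}
Equivalently $\mat{G}_t^s\, \boldsymbol\theta_{t-s-1} = \boldsymbol\theta_{t-s-1}\, \mat{G}_t^{s+1}$, so each time a $\mat{G}_t^s$ is swapped past a $\boldsymbol\theta_{t-s-1}$ to the right, its upper index advances by one.

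Finally I would carry out the swap systematically, by induction on $t$ (or equivalently on the position within the product). Starting from the interleaved form above, I first push $\mat{G}_{t-1}^0$ past $\boldsymbol\theta_{t-2}$, turning it into $\mat{G}_{t-1}^1$; then push that and $\mat{G}_{t-2}^0$ past $\boldsymbol\theta_{t-3}$, turning them into $\mat{G}_{t-1}^2$ and $\mat{G}_{t-2}^1$; and so on. After $k$ such passes, the factor originally denoted $\mat{G}_{t-j}^0$ (for $j \le k$) has been pushed past the $\boldsymbol\theta$'s indexed $t-j-1, t-j-2, \ldots, t-k-1$, so its upper index has increased from $0$ to $k-j+1$; pushing all the way until every $\boldsymbol\theta$ is to the left gives exactly $\mat{G}_{t-j}^{t-j}$. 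Collecting factors yields the desired formula. The only real obstacle is verifying that the index bookkeeping is consistent—namely that when $\mat{G}_{t-j}^s$ meets $\boldsymbol\theta_{t-j-s-1}$, the commutation identity applies with the correct index, which is precisely what the definition of $\mat{P}_t^{s+1}$ was engineered to guarantee.
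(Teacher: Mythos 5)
Your argument is correct and rests on exactly the same two ingredients as the paper's proof: the identification $\mat{I}-\mat{K}_t=\mat{G}_t^0$ from Lemma~\ref{lemma:control_matrix}, and the commutation identity $\boldsymbol\theta_{t-s-1}\mat{G}_t^{s+1}=\mat{G}_t^s\boldsymbol\theta_{t-s-1}$ derived from the definition of $\mat{P}_t^s$. The paper formalizes the rearrangement as an induction on $t$ in which the single new factor $\mat{G}_t^0$ is pushed through the already-sorted product, whereas you unroll the recursion first and then sweep all $\mat{G}$-factors rightward; these are equivalent presentations of the same argument, and your index check ($\mat{G}_{t-j}^0 \mapsto \mat{G}_{t-j}^{t-j}$, with relative order preserved) is sound.
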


\begin{proof}
We prove it by induction on $t$. For $t=1$, 
\begin{align*}
    \overline{\mat{q}}_{\epsilon, 1} - \mat{x}_{1} & = \boldsymbol\theta_0 (\mat{I} - \mat{K}_0) (\overline{\mat{q}}_{\epsilon, 0} - \mat{x}_{0}), & \\
    & = \boldsymbol\theta_0 (\alpha \cdot \mat{I} + (1 - \alpha) \cdot \mat{P}_0) (\overline{\mat{q}}_{\epsilon, 0} - \mat{x}_{0}), & \textnormal{Lemma}~ \ref{lemma:control_matrix}, \\
    & = \boldsymbol\theta_0 \mat{G}_0^0 (\overline{\mat{q}}_{\epsilon, 0} - \mat{x}_{0}).
\end{align*}
Recall the definitions of $\mat{P}_t^{(s+1)} \vcentcolon = \boldsymbol\theta_{t-s-1}^{-1} \mat{P}_t^s \boldsymbol\theta_{t-s-1}$, and $\mat{G}_t^s \vcentcolon = \alpha \cdot \mat{I} + (1 - \alpha) \mat{P}_t^s$,
\begin{align*}
    \mat{G}_t^{s+1} & = \alpha \cdot \mat{I} + (1 - \alpha) \cdot \mat{P}_t^{(s+1)}, \\
    & = \alpha \cdot \mat{I} + (1 - \alpha) \cdot \boldsymbol\theta_{t-s-1}^{-1} \mat{P}_t^s \boldsymbol\theta_{t-s-1}, \\
    & = \boldsymbol\theta_{t-s-1}^{-1} \big( \alpha \cdot \mat{I} + (1 - \alpha) \cdot \mat{P}_t^s \big) \boldsymbol\theta_{t-s-1}, \\
    & = \boldsymbol\theta_{t-s-1}^{-1} \mat{G}_t^s \boldsymbol\theta_{t-s-1},
\end{align*}
which results in $\boldsymbol\theta_{t-s-1} \mat{G}_t^{(s+1)} = \mat{G}_t^s \boldsymbol\theta_{t-s-1}$.
Suppose that it is true for $(\overline{\mat{q}}_{\epsilon, t} - \mat{x}_{t})$,
\begin{align*}
\label{eq:lemma_gts}
    \overline{\mat{q}}_{\epsilon, t+1} - \mat{x}_{t+1} & = \boldsymbol\theta_t (\mat{I} - \mat{K}_t) (\overline{\mat{q}}_{\epsilon, t} - \mat{x}_{t}), &  \nonumber \\
    & = \boldsymbol\theta_t (\alpha \cdot \mat{I} + (1 - \alpha) \cdot \mat{P}_t) (\overline{\mat{q}}_{\epsilon, t} - \mat{x}_{t}), & \textnormal{Lemma}~ \ref{lemma:control_matrix}, \nonumber \\
    & = \boldsymbol\theta_t \mat{G}_t^0 (\boldsymbol\theta_{t-1} \boldsymbol\theta_{t-2} \cdots \boldsymbol\theta_{0}) (\mat{G}_{t-1}^{t-1} \mat{G}_{t-2}^{t-2} \cdots \mat{G}_{0}^{0}) (\overline{\mat{q}}_{\epsilon, 0} - \mat{x}_{0}), \nonumber \\
    & = (\boldsymbol\theta_t \boldsymbol\theta_{t-1}) \mat{G}_t^1 (\boldsymbol\theta_{t-2} \boldsymbol\theta_{t-3} \cdots \boldsymbol\theta_0) (\mat{G}_{t-1}^{t-1} \mat{G}_{t-2}^{t-2} \cdots \mat{G}_{0}^{0}) (\overline{\mat{q}}_{\epsilon, 0} - \mat{x}_{0}), \nonumber \\
    & = (\boldsymbol\theta_t \boldsymbol\theta_{t-1} \cdots \boldsymbol\theta_{0}) (\mat{G}_{t}^{t} \mat{G}_{t-1}^{t-1} \cdots \mat{G}_{0}^{0}) (\overline{\mat{q}}_{\epsilon, 0} - \mat{x}_{0}).
\end{align*}
\end{proof}

\begin{lemma}
\label{lemma:Ft}
For $t \geq 1$,
\begin{equation*}
    \mat{G}_{t-1}^{(t-1)} \mat{G}_{t-2}^{(t-2)} \cdots \mat{G}_{0}^{0} = \alpha^t \cdot \mat{I} + (1 - \alpha) \sum_{s=0}^{t-1} \alpha^s \mat{P}_s^s.
\end{equation*}
\end{lemma}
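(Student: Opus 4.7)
The plan is to prove this by induction on $t$, leveraging a key observation from Lemma~\ref{lemma:pts}: although each $\mat{P}_t^s$ is in general only an \emph{oblique} projection, its range is $Z_\parallel^{t-s}$. In particular, for every $s$, the projector $\mat{P}_s^s$ has the common range $Z_\parallel^0$. Two projections $\mat{P}, \mat{Q}$ that share a range always satisfy $\mat{P}\mat{Q} = \mat{Q}$, because $\mat{Q}\mat{v}$ lies in that common range and any projection acts as the identity on its range. Applying this to our family, $\mat{P}_t^t \mat{P}_s^s = \mat{P}_s^s$ for all $0\le s \le t$. This identity is the crucial algebraic fact that makes the telescoping work cleanly.

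For the base case $t=1$, the product is just $\mat{G}_0^0 = \alpha \mat{I} + (1-\alpha)\mat{P}_0^0$, which is exactly the right-hand side with $t=1$. For the inductive step, assume the formula holds for $t$ and multiply on the left by $\mat{G}_t^t = \alpha \mat{I} + (1-\alpha)\mat{P}_t^t$:
\begin{align*}
\mat{G}_t^t \prod_{s=t-1}^{0}\mat{G}_s^s
&= \bigl(\alpha\mat{I} + (1-\alpha)\mat{P}_t^t\bigr)\Bigl(\alpha^t\mat{I} + (1-\alpha)\sum_{s=0}^{t-1}\alpha^s \mat{P}_s^s\Bigr).
\end{align*}
Expanding produces four terms: $\alpha^{t+1}\mat{I}$, $\alpha(1-\alpha)\sum_{s}\alpha^s \mat{P}_s^s$, $(1-\alpha)\alpha^t \mat{P}_t^t$, and $(1-\alpha)^2 \sum_{s}\alpha^s \mat{P}_t^t\mat{P}_s^s$. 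Here I apply $\mat{P}_t^t\mat{P}_s^s = \mat{P}_s^s$ to collapse the last term, after which the two sums over $s=0,\dots,t-1$ combine with coefficient $\alpha(1-\alpha) + (1-\alpha)^2 = (1-\alpha)$. Adding the $(1-\alpha)\alpha^t \mat{P}_t^t$ term then extends the summation range to $s=0,\dots,t$, yielding the claimed formula for $t+1$.

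The main obstacle is simply recognizing the common-range property: the proof is otherwise a mechanical expansion. One must be careful not to assume the $\mat{P}_s^s$ commute with each other or with $\mat{P}_t^t$ in any stronger sense than $\mat{P}_t^t\mat{P}_s^s = \mat{P}_s^s$ (the reverse product $\mat{P}_s^s \mat{P}_t^t = \mat{P}_t^t$ is not needed here because the induction multiplies on the left). Everything else is bookkeeping, and the identity $\alpha(1-\alpha)+(1-\alpha)^2 = 1-\alpha$ is what guarantees the coefficients in the telescoped sum remain exactly $(1-\alpha)\alpha^s$ as required.
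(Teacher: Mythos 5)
Your proof is correct and follows essentially the same route as the paper: induction on $t$, left-multiplying by $\mat{G}_t^t$, and invoking Lemma~\ref{lemma:pts} to get $\mathrm{range}(\mat{P}_t^t)=\mathrm{range}(\mat{P}_s^s)=Z_\parallel^0$ so that $\mat{P}_t^t\mat{P}_s^s=\mat{P}_s^s$, then combining coefficients via $\alpha(1-\alpha)+(1-\alpha)^2=(1-\alpha)$. Your added remark that only the left product $\mat{P}_t^t\mat{P}_s^s=\mat{P}_s^s$ is needed (not the reverse) is a nice clarification, but the substance of the argument is identical to the paper's.
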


\begin{proof}
We prove it by induction on $t$. Recall the definition of $\mat{G}_t^s \vcentcolon = \alpha \cdot \mat{I} + (1 - \alpha) \cdot \mat{P}_t^s$. When $t = 1$, 
\begin{equation*}
    \mat{G}_0^0 = \alpha \cdot \mat{I} + (1 - \alpha) \cdot \mat{P}_0^0.
\end{equation*}
Suppose that it is true for $t$ such that
\begin{equation*}
    \mat{G}_{t-1}^{(t-1)} \mat{G}_{t-2}^{(t-2)} \cdots \mat{G}_{0}^{0} = \alpha^t \cdot \mat{I} + (1 - \alpha) \sum_{s=0}^{t-1} \alpha^s \mat{P}_s^s,
\end{equation*}
for $(t+1)$,
\begin{align*}
    & \mat{G}_t^t \mat{G}_{t-1}^{(t-1)} \cdots \mat{G}_0^0 \nonumber \\
    & = \mat{G}_t^t (\alpha^t \cdot \mat{I} + (1 - \alpha) \sum_{s=0}^{t-1} \alpha^s \mat{P}_s^s), \nonumber \\
    & = (\alpha \cdot \mat{I} + (1 - \alpha) \cdot \mat{P}_t^t) (\alpha^t \cdot \mat{I} + (1 - \alpha) \sum_{s=0}^{t-1} \alpha^s \mat{P}_s^s), \\
    & = \alpha^{t+1} \cdot \mat{I} + \alpha^t (1 - \alpha) \mat{P}_t^t + (1 - \alpha)^2 \sum_{s=0}^{t-1} \alpha^s \cdot \mat{P}_t^t \mat{P}_s^s + \alpha (1 - \alpha) \sum_{s=0}^{t-1} \alpha^s \cdot \mat{P}_s^s.
\end{align*}
Recall Lemma \ref{lemma:pts}, $range(\mat{P}_t^t) = range(\mat{P}_s^s) = Z_{\parallel}^{0}$. 
Since $\mat{P}_t^t$ and $\mat{P}_s^s$ are projections onto the same space,
$\mat{P}_t^t \mat{P}_s^s = \mat{P}_s^s$.
Therefore,
\begin{align*}
    \mat{G}_t^t \mat{G}_{t-1}^{(t-1)} \cdots \mat{G}_0^0
    & = \alpha^{t+1} \cdot \mat{I} + \alpha^t (1 - \alpha ) \cdot \mat{P}_t^t + (1 - \alpha) \sum_{s=0}^{t-1} \alpha^s \cdot \mat{P}_s^s, \nonumber \\
    & = \alpha^{t+1} \cdot \mat{I} + (1 - \alpha) \sum_{s=0}^{t} \alpha^s \cdot \mat{P}_s^s.
\end{align*}
\end{proof}

\begin{lemma}
\label{lemma:condi}
Let $\mat{P} = \mat{V} \mat{V}^T$ be the orthogonal projection onto a subspace $\mathcal{D}$,
and $\boldsymbol\theta$ to be invertible.
Denote by $\hat{\mat{P}}$ the orthogonal projection onto $\boldsymbol\theta \mathcal{D} \vcentcolon = \{ \boldsymbol\theta \mat{x}: ~ \mat{x} \in \mathcal{D} \}$.
Then
\begin{equation*}
    \lVert \boldsymbol\theta^{-1} \hat{\mat{P}} \boldsymbol\theta - \mat{P} \rVert_2 \leq \big(1 + \kappa(\boldsymbol\theta)^2 \big) \cdot \lVert \mat{I} - \boldsymbol\theta^T \boldsymbol\theta \rVert_2.
\end{equation*}
\end{lemma}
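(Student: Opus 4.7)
The plan is to write $\hat{\mat{P}}$ in closed form, then manipulate the expression $\boldsymbol\theta^{-1}\hat{\mat{P}}\boldsymbol\theta - \mat{P}$ until a factor of $\mat{I} - \boldsymbol\theta^T\boldsymbol\theta$ is visible, at which point submultiplicativity delivers the bound. Since the columns of $\boldsymbol\theta\mat{V}$ span $\boldsymbol\theta\mathcal{D}$ and $\boldsymbol\theta$ is invertible, the standard closed form for the orthogonal projection onto a column space gives $\hat{\mat{P}} = \boldsymbol\theta\mat{V}(\mat{V}^T\boldsymbol\theta^T\boldsymbol\theta\mat{V})^{-1}\mat{V}^T\boldsymbol\theta^T$. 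Writing $\mat{A} \vcentcolon= \boldsymbol\theta^T\boldsymbol\theta$ and substituting, the similarity transform collapses to $\mat{E} \vcentcolon= \boldsymbol\theta^{-1}\hat{\mat{P}}\boldsymbol\theta = \mat{V}(\mat{V}^T\mat{A}\mat{V})^{-1}\mat{V}^T\mat{A}$, which is an oblique projection whose range is $\mathcal{D}$.

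The key algebraic step is the following. Because both $\mat{E}$ and $\mat{P}$ fix $\mathcal{D}$ pointwise, one has $\mat{E}\mat{P} = \mat{P}$ and hence $(\mat{E} - \mat{P})\mat{P} = 0$, which yields the factorization $\mat{E} - \mat{P} = (\mat{E} - \mat{P})(\mat{I} - \mat{P})$. Substituting the explicit form of $\mat{E}$ and invoking the orthogonality $\mat{V}^T(\mat{I} - \mat{P}) = \mat{V}^T - \mat{V}^T\mat{V}\mat{V}^T = 0$ licenses the pivotal replacement $\mat{V}^T\mat{A}(\mat{I}-\mat{P}) = \mat{V}^T(\mat{A} - \mat{I})(\mat{I}-\mat{P})$, which explicitly exposes $\mat{I} - \boldsymbol\theta^T\boldsymbol\theta$ as a factor in $\mat{E} - \mat{P}$.

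To recover the additive $1 + \kappa^2$ packaging, I would split using the resolvent identity $(\mat{V}^T\mat{A}\mat{V})^{-1} = \mat{I} - (\mat{V}^T\mat{A}\mat{V})^{-1}\mat{V}^T(\mat{A} - \mat{I})\mat{V}$, which rewrites $\mat{E} - \mat{P}$ as a linear-in-$(\mat{A} - \mat{I})$ leading term $\mat{P}(\mat{A} - \mat{I})(\mat{I} - \mat{P})$, whose norm is at most $\|\mat{I} - \boldsymbol\theta^T\boldsymbol\theta\|_2$, plus a quadratic correction. Submultiplicativity on the correction, together with the Rayleigh-quotient bound $\|(\mat{V}^T\mat{A}\mat{V})^{-1}\|_2 \le 1/\sigma_{\min}(\boldsymbol\theta)^2$ and the spectral estimate $\|\mat{A} - \mat{I}\|_2 \le \sigma_{\max}(\boldsymbol\theta)^2$, packages the correction as $\kappa(\boldsymbol\theta)^2 \|\mat{I} - \boldsymbol\theta^T\boldsymbol\theta\|_2$. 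A triangle-inequality sum of the two contributions yields the stated bound.

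The main obstacle is this last step. A naive single-shot submultiplicative estimate on the compact formula from the second paragraph produces $(1/\sigma_{\min}^2)\|\mat{I} - \boldsymbol\theta^T\boldsymbol\theta\|_2$, which is of the wrong functional shape. The resolvent splitting is precisely what cleanly isolates a zeroth-order piece with coefficient $1$ from a first-order correction with coefficient $\kappa^2$; ensuring that the intermediate spectral estimates genuinely absorb into the clean $(1 + \kappa^2)$ packaging requires care across the different singular-value regimes of $\boldsymbol\theta$, and is the technical heart of the lemma.
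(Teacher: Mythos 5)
Your first half is clean and even a bit slicker than the paper's algebra: the closed form $\hat{\mat{P}} = \boldsymbol\theta\mat{V}(\mat{V}^T\boldsymbol\theta^T\boldsymbol\theta\mat{V})^{-1}\mat{V}^T\boldsymbol\theta^T$, the identification $\mat{E} := \boldsymbol\theta^{-1}\hat{\mat{P}}\boldsymbol\theta = \mat{V}(\mat{V}^T\mat{A}\mat{V})^{-1}\mat{V}^T\mat{A}$ with $\mat{A} := \boldsymbol\theta^T\boldsymbol\theta$, and the post-projection factorization $\mat{E} - \mat{P} = (\mat{E}-\mat{P})(\mat{I}-\mat{P}) = \mat{V}(\mat{V}^T\mat{A}\mat{V})^{-1}\mat{V}^T(\mat{A}-\mat{I})(\mat{I}-\mat{P})$ are all correct. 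However, the route you take from there does not reach the stated bound, and the failure is exactly at the spot you flag as ``requiring care.''

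The problem is the claimed spectral estimate $\lVert \mat{I} - \boldsymbol\theta^T\boldsymbol\theta\rVert_2 \le \sigma_{\max}(\boldsymbol\theta)^2$. This is simply false in general: $\lVert \mat{A}-\mat{I}\rVert_2 = \max\{|\sigma_{\max}^2-1|,\,|\sigma_{\min}^2-1|\}$, which exceeds $\sigma_{\max}^2$ whenever $\sigma_{\max}^2 + \sigma_{\min}^2 < 1$ (e.g.\ $\boldsymbol\theta = \tfrac12\mat{I}$ gives $\lVert\mat{A}-\mat{I}\rVert_2 = 3/4 > 1/4 = \sigma_{\max}^2$). Carrying your resolvent splitting through honestly gives the leading term $\mat{P}(\mat{A}-\mat{I})(\mat{I}-\mat{P})$ bounded by $\lVert\mat{A}-\mat{I}\rVert_2$, and a correction bounded by $\sigma_{\min}^{-2}\lVert\mat{A}-\mat{I}\rVert_2^2$. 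The total is $\lVert\mat{A}-\mat{I}\rVert_2\bigl(1 + \sigma_{\min}^{-2}\lVert\mat{A}-\mat{I}\rVert_2\bigr)$, which is a perfectly valid estimate but has the wrong coefficient — it is neither uniformly $\le$ nor $\ge$ the target $\bigl(1+\kappa(\boldsymbol\theta)^2\bigr)\lVert\mat{A}-\mat{I}\rVert_2$, so it cannot be massaged into the lemma's conclusion for all invertible $\boldsymbol\theta$.

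The paper avoids this trap by choosing a different intermediate term: add and subtract $\mat{P}\mat{A} = \mat{V}\mat{V}^T\boldsymbol\theta^T\boldsymbol\theta$, so that $\mat{E}-\mat{P} = \mat{V}\bigl[(\mat{V}^T\mat{A}\mat{V})^{-1} - \mat{I}\bigr]\mat{V}^T\mat{A} + \mat{P}(\mat{A}-\mat{I})$. The first piece is then bounded by $\lVert(\mat{V}^T\mat{A}\mat{V})^{-1}\rVert_2\,\lVert\mat{I}-\mat{A}\rVert_2\,\lVert\mat{A}\rVert_2$, so the factor $\lVert\mat{A}\rVert_2 = \sigma_{\max}^2$ appears \emph{explicitly} rather than as a (false) bound on $\lVert\mat{A}-\mat{I}\rVert_2$; combined with the Rayleigh-quotient estimate $\lVert(\mat{V}^T\mat{A}\mat{V})^{-1}\rVert_2 \le \sigma_{\min}^{-2}$ this yields $\kappa(\boldsymbol\theta)^2\lVert\mat{I}-\mat{A}\rVert_2$ exactly, and the second piece contributes the remaining $\lVert\mat{I}-\mat{A}\rVert_2$. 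In short: the $\kappa^2$ in the lemma comes from the ratio $\lVert\mat{A}\rVert_2/\lambda_{\min}(\mat{A})$, not from a second factor of $\lVert\mat{A}-\mat{I}\rVert_2$; your decomposition produces the latter, and no amount of care across singular-value regimes closes that gap.
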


\begin{proof}
\begin{align*}
   \hat{\mat{P}} & = \boldsymbol\theta \mat{V} \big[ (\boldsymbol\theta \mat{V})^T (\boldsymbol\theta \mat{V}) \big]^{-1} (\boldsymbol\theta \mat{V})^T, \\
   & = \boldsymbol\theta \mat{V} \big[ \mat{V}^T \boldsymbol\theta^T \boldsymbol\theta \mat{V} \big]^{-1} \mat{V}^T \boldsymbol\theta^T. 
\end{align*}
Furthermore, the difference between the oblique projection and orthogonal projection can be bounded by the follows
\begin{align*}
   \lVert \boldsymbol\theta^{-1} \hat{\mat{P}} \boldsymbol\theta - \mat{P} \rVert_2 & = \lVert \mat{V} \big[ \mat{V}^T \boldsymbol\theta^T \boldsymbol\theta \mat{V} \big]^{-1} \mat{V}^T \boldsymbol\theta^T  \boldsymbol\theta - \mat{V} \mat{V}^T \rVert_2, \\
   & \leq \lVert \mat{V} \big[ \mat{V}^T \boldsymbol\theta^T \boldsymbol\theta \mat{V} \big]^{-1} \mat{V}^T \boldsymbol\theta^T  \boldsymbol\theta - \mat{V} \mat{V}^T \boldsymbol\theta^T \boldsymbol\theta \rVert_2 + \lVert \mat{V} \mat{V}^T \boldsymbol\theta^T \boldsymbol\theta - \mat{V} \mat{V}^T \rVert_2, \\
   & \leq \lVert \mat{V} \big( [\mat{V}^T \boldsymbol\theta^T \boldsymbol\theta \mat{V}]^{-1} - \mat{I} \big) \mat{V}^T \rVert_2 \cdot \lVert \boldsymbol\theta^T \boldsymbol\theta \rVert_2 +  \lVert \boldsymbol\theta^T \boldsymbol\theta - \mat{I} \rVert_2, \\
   & \leq \lVert [ \mat{V}^T \boldsymbol\theta^T \boldsymbol\theta \mat{V}]^{-1} \rVert_2 \cdot \lVert \mat{I} - \mat{V}^T \boldsymbol\theta^T \boldsymbol\theta \mat{V} \rVert_2 \cdot \lVert \boldsymbol\theta^T \boldsymbol\theta \rVert_2 + \lVert \boldsymbol\theta^T \boldsymbol\theta - \mat{I} \rVert_2, \\
   & \leq \lVert [\mat{V}^T \boldsymbol\theta^T \boldsymbol\theta \mat{V}]^{-1} \rVert_2 \cdot \lVert \mat{I} - \boldsymbol\theta^T \boldsymbol\theta \rVert_2 \cdot \lVert \boldsymbol\theta^T \boldsymbol\theta \rVert_2 + \lVert \boldsymbol\theta^T \boldsymbol\theta - \mat{I} \rVert_2, \\
   & = \big( \lambda_{min} (\mat{V}^T \boldsymbol\theta^T \boldsymbol\theta \mat{V}) \big)^{-1} \cdot \lVert \mat{I} - \boldsymbol\theta^T \boldsymbol\theta \rVert_2 \cdot \lVert \boldsymbol\theta^T \boldsymbol\theta \rVert_2 + \lVert \boldsymbol\theta^T \boldsymbol\theta - \mat{I} \rVert_2, \\
   & = \big( \inf \limits_{\lVert \mat{x} \rVert_2 = 1} \mat{x}^T \mat{V}^T \boldsymbol\theta^T \boldsymbol\theta \mat{V} \mat{x} \big)^{-1} \cdot \lVert \mat{I} - \boldsymbol\theta^T \boldsymbol\theta \rVert_2 \cdot \lVert \boldsymbol\theta^T \boldsymbol\theta \rVert_2 + \lVert \boldsymbol\theta^T \boldsymbol\theta - \mat{I} \rVert_2, \\
   & \leq \big( \inf \limits_{\lVert \mat{x}' \rVert_2 = 1} (\mat{x}')^T \boldsymbol\theta^T \boldsymbol\theta \mat{x}' \big)^{-1} \cdot \lVert \mat{I} - \boldsymbol\theta^T \boldsymbol\theta \rVert_2 \cdot \lVert \boldsymbol\theta^T \boldsymbol\theta \rVert_2 + \lVert \boldsymbol\theta^T \boldsymbol\theta - \mat{I} \rVert_2, \\
   & = \big( \lambda_{min} (\boldsymbol\theta^T \boldsymbol\theta) \big)^{-1} \cdot \lVert \mat{I} - \boldsymbol\theta^T \boldsymbol\theta \rVert_2 \cdot \lVert \boldsymbol\theta^T \boldsymbol\theta \rVert_2 + \lVert \boldsymbol\theta^T \boldsymbol\theta - \mat{I} \rVert_2, \\
   & = \lVert (\boldsymbol\theta^T \boldsymbol\theta)^{-1} \rVert_2 \cdot \lVert \mat{I} - \boldsymbol\theta^T \boldsymbol\theta \rVert_2 \cdot \lVert \boldsymbol\theta^T \boldsymbol\theta \rVert_2 + \lVert \boldsymbol\theta^T \boldsymbol\theta - \mat{I} \rVert_2, \\
   & = \big( 1 + \kappa(\boldsymbol\theta)^2 \big) \cdot \lVert \mat{I} - \boldsymbol\theta^T \boldsymbol\theta \rVert_2.
\end{align*}
\end{proof}

\begin{corollary}
\label{corollary: difference of projections}
Let $t \geq 1$. Then for each $s = 0, 1, \cdots, t$, we have
\begin{equation*}
    \lVert \mat{P}_s^s - \mat{P}_0 \rVert_2 \leq \big( 1 + \kappa(\overline{\boldsymbol\theta}_s)^2 \big)\cdot \lVert \mat{I} - \overline{\boldsymbol\theta}_s^T \overline{\boldsymbol\theta}_s \rVert_2, 
\end{equation*}
where 
\begin{itemize}
    \item $\overline{\boldsymbol\theta}_s \vcentcolon = \boldsymbol\theta_{s-1} \cdots \boldsymbol\theta_0, ~ s \geq 1$,
    \item $\overline{\boldsymbol\theta}_s \vcentcolon = \mat{I}, ~ s = 0$.
\end{itemize}
\end{corollary}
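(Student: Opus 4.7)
The plan is to reduce the corollary to a direct invocation of Lemma~\ref{lemma:condi} applied with $\boldsymbol\theta = \overline{\boldsymbol\theta}_s$. First, I would unroll the recursion $\mat{P}_t^{(s+1)} = \boldsymbol\theta_{t-s-1}^{-1} \mat{P}_t^s \boldsymbol\theta_{t-s-1}$ to obtain the closed form
\begin{equation*}
\mat{P}_s^s \;=\; \boldsymbol\theta_0^{-1} \boldsymbol\theta_1^{-1} \cdots \boldsymbol\theta_{s-1}^{-1}\, \mat{P}_s\, \boldsymbol\theta_{s-1} \cdots \boldsymbol\theta_0 \;=\; \overline{\boldsymbol\theta}_s^{-1}\, \mat{P}_s\, \overline{\boldsymbol\theta}_s,
\end{equation*}
which follows by a straightforward induction on the number of conjugations already performed in the definition of $\mat{P}_t^s$.

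Next, I would identify the relevant subspace structure. By Lemma~\ref{lemma:pts}, $\mat{P}_s^s$ is a projection onto $Z_\parallel^0$, and the same lemma's inductive argument gives the forward-propagation identity $Z_\parallel^{s} = \overline{\boldsymbol\theta}_s\, Z_\parallel^0$. Consequently, $\mat{P}_s$ is the orthogonal projection onto the image $\overline{\boldsymbol\theta}_s \, Z_\parallel^0$, and $\mat{P}_0$ is the orthogonal projection onto $Z_\parallel^0$ itself. This puts us exactly in the setting of Lemma~\ref{lemma:condi}: the operator $\overline{\boldsymbol\theta}_s$ is invertible, $\mat{P}_0$ projects onto a subspace $\mathcal{D} = Z_\parallel^0$, and $\mat{P}_s$ projects onto $\overline{\boldsymbol\theta}_s \mathcal{D}$.

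Applying Lemma~\ref{lemma:condi} with $\boldsymbol\theta = \overline{\boldsymbol\theta}_s$, $\mat{P} = \mat{P}_0$ and $\hat{\mat{P}} = \mat{P}_s$ yields
\begin{equation*}
\lVert \overline{\boldsymbol\theta}_s^{-1}\, \mat{P}_s\, \overline{\boldsymbol\theta}_s - \mat{P}_0 \rVert_2 \;\leq\; \bigl(1 + \kappa(\overline{\boldsymbol\theta}_s)^2\bigr)\, \lVert \mat{I} - \overline{\boldsymbol\theta}_s^T \overline{\boldsymbol\theta}_s \rVert_2,
\end{equation*}
and the left-hand side is exactly $\lVert \mat{P}_s^s - \mat{P}_0 \rVert_2$ by the first step. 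The boundary case $s = 0$ is immediate: $\overline{\boldsymbol\theta}_0 = \mat{I}$, so both sides vanish.

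The main obstacle, such as it is, is purely bookkeeping: one must be careful that the recursion for $\mat{P}_t^s$ is indexed by $t$ (the ``anchor'' layer) while the iterate runs from $0$ to $t$, and that $\overline{\boldsymbol\theta}_s$ is the product $\boldsymbol\theta_{s-1} \cdots \boldsymbol\theta_0$ in the correct order to match the conjugation. Once the closed form $\mat{P}_s^s = \overline{\boldsymbol\theta}_s^{-1} \mat{P}_s \overline{\boldsymbol\theta}_s$ is in hand and the pushforward identity $Z_\parallel^s = \overline{\boldsymbol\theta}_s Z_\parallel^0$ is noted, the corollary is a one-line application of the preceding lemma.
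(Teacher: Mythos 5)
Your proof is correct and is precisely the intended argument: the paper states the corollary without proof immediately after Lemma~\ref{lemma:condi}, and it indeed follows by unrolling the conjugation recursion to get $\mat{P}_s^s = \overline{\boldsymbol\theta}_s^{-1}\mat{P}_s\overline{\boldsymbol\theta}_s$, noting $Z_\parallel^s = \overline{\boldsymbol\theta}_s Z_\parallel^0$ from Lemma~\ref{lemma:pts}, and then invoking Lemma~\ref{lemma:condi} with $\boldsymbol\theta = \overline{\boldsymbol\theta}_s$, $\mat{P} = \mat{P}_0$, $\hat{\mat{P}} = \mat{P}_s$.
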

The following theorem provides an error estimation for the linearized dynamic system with linear controls.
\maintheoremLinear*
\begin{proof}
The input perturbation $\mat{z} = \overline{\mat{q}}_{\epsilon, 0} - \mat{x}_{0}$ can be written as $\mat{z} = \mat{z}^{\parallel} + \cdot \mat{z}^{\perp}$, where $\mat{z}^{\parallel} \in Z_{\parallel}$ and $\mat{z}^{\perp} \in Z_{\perp}$, where $\mat{z}^{\parallel}$ and $\mat{z}^{\perp}$ are vectors such that
\begin{itemize}
    \item $\mat{z}^{\parallel} \cdot \mat{z}^{\perp} = 0$ almost surely.
    \item $\mat{z}^{\parallel}$, $\mat{z}^{\perp}$ have uncorrelated components.
\end{itemize}
Recall Lemma \ref{lemma:gts},
\begin{align}
    \lVert \overline{\mat{q}}_{\epsilon, t} - \mat{x}_{t} \rVert_2^2 & = \lVert (\boldsymbol\theta_{t-1} \boldsymbol\theta_{t-2} \cdots \boldsymbol\theta_{0}) (\mat{G}_{t-1}^{t-1} \cdots \mat{G}_{0}^{0}) \mat{z} \rVert_2^2, \nonumber \\
    & \leq \lVert \boldsymbol\theta_{t-1} \boldsymbol\theta_{t-2} \cdots \boldsymbol\theta_{0} \rVert_2^2 \cdot \lVert (\mat{G}_{t-1}^{t-1} \cdots \mat{G}_{0}^{0}) \mat{z} \rVert_2^2,
    \label{eq:error_estimation}
\end{align}
For the term $ \lVert (\mat{G}_{t-1}^{t-1} \mat{G}_{t-2}^{t-2} \cdots \mat{G}_{0}^{0}) \mat{z} \rVert_2^2$, recall Lemma \ref{lemma:Ft},
\begin{align*}
    \lVert (\mat{G}_{t-1}^{t-1} \cdots \mat{G}_{0}^{0}) \mat{z} \rVert_2^2 & = \lVert \Big( \alpha^t \cdot \mat{I} + (1 - \alpha) \sum_{s=0}^{t-1} \alpha^s \cdot \mat{P}_s^s \Big) \mat{z} \rVert_2^2, \\
    & = \lVert \alpha^t \mat{z} + (1 - \alpha) \sum_{s=0}^{t-1} \alpha^s \mat{P}_0 \mat{z} + (1 - \alpha) \sum_{s=0}^{t-1} \alpha^s (\mat{P}_s^s - \mat{P}_0) \mat{z} \rVert_2^2, \\
    & = \lVert \alpha^t \mat{z} + (1 - \alpha^t) \mat{z}^{\parallel} + (1 - \alpha) \sum_{s=0}^{t-1} \alpha^s (\mat{P}_s^s - \mat{P}_0) \mat{z} \rVert_2^2, 
\end{align*}
in the above, $\mat{P}_0$ is an orthogonal projection on $t=0$ (input data space), therefore, $\mat{P}_0 \mat{z} = \mat{z}^{\parallel}$. Furthermore, when $s = 0$, $\mat{P}_s^s - \mat{P}_0 = \mat{0}$. Thus,
\begin{align*}
    & \lVert (\mat{G}_{t-1}^{t-1} \cdots \mat{G}_{0}^{0}) \mat{z} \rVert_2^2 \\
    & = \alpha^{2t} \lVert \mat{z} \rVert_2^2 + (1 - \alpha^t)^2 \lVert \mat{z}^{\parallel} \rVert_2^2 + (1 - \alpha)^2 \sum_{s, q=1}^{t-1} \alpha^s \alpha^q \mat{z}^T (\mat{P}_s^s - \mat{P}_0)^T (\mat{P}_q^q - \mat{P}_0) \mat{z} \\ 
    & \;\;\;\; + 2 \alpha^t (1 - \alpha^t) \lVert \mat{z}^{\parallel} \rVert_2^2 + 2 \alpha^t (1 - \alpha) \sum_{s=1}^{t-1} \alpha^s \mat{z}^T (\mat{P}_s^s - \mat{P}_0) \mat{z} \\
    & \;\;\;\; + 2 (1 - \alpha^t) (1 - \alpha) \sum_{s=1}^{t-1} \alpha^s (\mat{z}^{\parallel})^T (\mat{P}_s^s - \mat{P}_0) \mat{z}, \\
    & = \alpha^{2t} \lVert \mat{z}^{\perp} \rVert_2^2 + \big( \alpha^{2t} + 2 \alpha^t (1 - \alpha^t) + (1 - \alpha^t)^2 \big) \lVert \mat{z}^{\parallel} \rVert_2^2 \\
    & \;\;\;\; + (1 - \alpha)^2 \sum_{s, q=1}^{t-1} \alpha^s \alpha^q \mat{z}^T (\mat{P}_s^s - \mat{P}_0)^T (\mat{P}_q^q - \mat{P}_0) \mat{z} + 2 \alpha^t (1 - \alpha) \sum_{s=1}^{t-1} \alpha^s \mat{z}^T (\mat{P}_s^s - \mat{P}_0) \mat{z} \\
    & \;\;\;\; + 2 (1 - \alpha^t) (1 - \alpha) \sum_{s=1}^{t-1} \alpha^s (\mat{z}^{\parallel})^T (\mat{P}_s^s - \mat{P}_0) \mat{z}, \\
    & = \alpha^{2t} \lVert \mat{z}^{\perp} \rVert_2^2 + \lVert \mat{z}^{\parallel} \rVert_2^2 + (1 - \alpha)^2 \sum_{s, q=1}^{t-1} \alpha^s \alpha^q \mat{z}^T (\mat{P}_s^s - \mat{P}_0)^T (\mat{P}_q^q - \mat{P}_0) \mat{z} \\
    & \;\;\;\; + 2 \alpha^t (1 - \alpha) \sum_{s=1}^{t-1} \alpha^s \mat{z}^T (\mat{P}_s^s - \mat{P}_0) \mat{z} + 2 (1 - \alpha^t) (1 - \alpha) \sum_{s=1}^{t-1} \alpha^s (\mat{z}^{\parallel})^T (\mat{P}_s^s - \mat{P}_0) \mat{z}.
\end{align*}
Using Corollary \ref{corollary: difference of projections}, we have
\begin{itemize}[leftmargin=*]
   \item 
   \begin{align*}
       \mat{z}^T (\mat{P}_s^s - \mat{P}_0) \mat{z} & \leq \lVert \mat{z} \rVert_2^2 \cdot \lVert \mat{P}_s^s - \mat{P}_0 \rVert, \\
       & \leq \gamma_t \lVert \mat{z} \rVert_2^2.
   \end{align*}
   \item
   \begin{align*}
       \mat{z}^T (\mat{P}_s^s - \mat{P}_0)^T (\mat{P}_q^q - \mat{P}_0) \mat{z} & \leq \lVert \mat{z} \rVert_2^2 \cdot \lVert \mat{P}_s^s - \mat{P}_0 \rVert \cdot \lVert \mat{P}_q^q - \mat{P}_0 \rVert, \\
       & \leq \gamma_t^2 \lVert \mat{z} \rVert_2^2.
   \end{align*}
   \item
   \begin{align*}
       (\mat{z}^{\parallel})^T (\mat{P}_s^s - \mat{P}_0) \mat{z} & \leq \gamma_t \lVert \mat{z}^{\parallel} \rVert_2 \cdot \lVert \mat{z} \rVert_2, \\
       & \leq \gamma_t \lVert \mat{z} \rVert_2^2.
   \end{align*}
\end{itemize}
Thus, we have
\begin{align*}
    \lVert (\mat{G}_{t-1}^{t-1} \cdots \mat{G}_{0}^{0}) \mat{z} \rVert_2^2 & \leq \alpha^{2t} \lVert \mat{z}^{\perp} \rVert_2^2 + \lVert \mat{z}^{\parallel} \rVert_2^2 + \alpha^2 (1 - \alpha^{t-1})^2 \gamma_t^2 \lVert \mat{z} \rVert_2^2 + 2 \alpha^{t+1} (1 - \alpha^{t-1}) \gamma_t \lVert \mat{z} \rVert_2^2 \\
    & \;\;\;\; + 2 \alpha (1 - \alpha^t) (1 - \alpha^{t - 1}) \gamma_t \lVert \mat{z} \rVert_2^2, \\
    & = \alpha^{2t} \lVert \mat{z}^{\perp} \rVert_2^2 + \lVert \mat{z}^{\parallel} \rVert_2^2 + \gamma_t \lVert \mat{z} \rVert_2^2 \big( \gamma_t \alpha^2 (1 - \alpha^{t-1})^2 + 2 (\alpha - \alpha^t) \big).
\end{align*}
Recall the error estimation in Eq.~\eqref{eq:error_estimation}, 
\begin{align*}
    \lVert \overline{\mat{q}}_{\epsilon, t} - \mat{x}_{t} \rVert_2^2 & \leq \lVert \boldsymbol\theta_{t-1} \boldsymbol\theta_{t-2} \cdots \boldsymbol\theta_{0} \rVert_2^2 \cdot \lVert (\mat{G}_{t-1}^{t-1} \cdots \mat{G}_{0}^{0}) \mat{z} \rVert_2^2, \\
    & \leq \lVert \boldsymbol\theta_{t-1} \cdots \boldsymbol\theta_{0} \rVert_2^2 \cdot \bigg( \alpha^{2t} \lVert \mat{z}^{\perp} \rVert_2^2 + \lVert \mat{z}^{\parallel} \rVert_2^2 + \gamma_t \lVert \mat{z} \rVert_2^2 \big( \gamma_t \alpha^2 (1 - \alpha^{t-1})^2 + 2 (\alpha - \alpha^t) \big) \bigg).
\end{align*}
In the specific case, when all $\boldsymbol\theta_t$ are orthogonal, 
\begin{align*}
    \gamma_t \vcentcolon & = \max \limits_{s \leq t} \big(1 + \kappa (\overline{\boldsymbol\theta}_s)^2 \big) \lVert \mat{I} - \overline{\boldsymbol\theta}_s^T \overline{\boldsymbol\theta}_s \rVert_2 \\
    & = 0.
\end{align*}
Thus, 
\begin{align*}
    \lVert \overline{\mat{q}}_{\epsilon, t} - \mat{x}_{t} \rVert_2^2 = \alpha^{2t} \lVert \mat{z}^{\perp} \rVert_2^2 + \lVert \mat{z}^{\parallel} \rVert_2^2.
\end{align*}
\end{proof}

\section{Error Estimation of Nonlinear System}
\label{proof: error estimation of nonlinear system}
In this section, we analyze the error $ \lVert \overline{\mat{x}}_{\epsilon, t} - \mat{x}_t \rVert_2 $ via the following steps:
\begin{itemize}
    \item Appendix~\ref{sec:Analysis On Manifold Projection} considers two solutions of the running loss Eq.~\eqref{eq:running loss} where the projections are defined based on an embedding manifold and a tangent space respectively.
    An $\mathcal{O} (\epsilon^2)$ error estimation is derived for the difference between those two solutions.
    \item Appendix~\ref{sec:Analysis On Linearization Error} provides an $\mathcal{O}(\epsilon^2)$ solution for the linearization error (defined later).
    \item Finally, 
    Appendix~\ref{sec:Error Estimation} derives an upper bound for the total error $\lVert \overline{\mat{x}}_{\epsilon, t} - \mat{x}_t \rVert_2$.
\end{itemize}

\subsection{Analysis On Nonlinear Manifold Projection}
\label{sec:Analysis On Manifold Projection}
\paragraph{Definition for the tangent space $\mathcal{T}_{\mat{x}}\mathcal{M}$ based on the submersion $f(\cdot)$.}

\begin{restatable}{proposition}{tangentspaceofmanifold}
\label{prop: manifold tangent space projection}
Let $\mathcal{M} \subset \mathbb{R}^d$ be an $r$-dimensional smooth manifold and $\mat{x} \in \mathcal{M}$.
Given a submersion $f(\cdot) : \mathbb{R}^d \rightarrow \mathbb{R}^{d - r}$ of class $\mathcal{C}^1$,
such that $\mathcal{M} = f^{-1} (\mat{0})$.
Then the tangent space at any $\mat{x} \in \mathcal{M}$ is the kernel of the linear map $f'(\mat{x})$, i.e.,  $\mathcal{T}_{\mat{x}}\mathcal{M} = {\rm Ker} f'(\mat{x})$.
\end{restatable}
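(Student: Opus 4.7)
The plan is to establish the equality $\mathcal{T}_{\mat{x}}\mathcal{M} = \operatorname{Ker} f'(\mat{x})$ by proving one inclusion explicitly and closing the argument via a dimension count. First I would recall the intrinsic definition of the tangent space: $\mathcal{T}_{\mat{x}}\mathcal{M}$ consists of all velocity vectors $\gamma'(0)$, where $\gamma : (-\varepsilon, \varepsilon) \to \mathcal{M}$ is a $\mathcal{C}^1$ curve with $\gamma(0) = \mat{x}$. Since $\mathcal{M}$ is an $r$-dimensional smooth manifold, this set is an $r$-dimensional linear subspace of $\mathbb{R}^d$.

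For the inclusion $\mathcal{T}_{\mat{x}}\mathcal{M} \subseteq \operatorname{Ker} f'(\mat{x})$, I would pick an arbitrary $\mat{v} \in \mathcal{T}_{\mat{x}}\mathcal{M}$, realize it as $\mat{v} = \gamma'(0)$ for some curve $\gamma$ in $\mathcal{M}$, and exploit the fact that $\mathcal{M} = f^{-1}(\mat{0})$ forces the composition $t \mapsto f(\gamma(t))$ to vanish identically. Differentiating at $t=0$ via the chain rule yields
\begin{equation*}
    \mat{0} = \frac{d}{dt}\bigg|_{t=0} f(\gamma(t)) = f'(\gamma(0))\, \gamma'(0) = f'(\mat{x})\, \mat{v},
\end{equation*}
which shows $\mat{v} \in \operatorname{Ker} f'(\mat{x})$.

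The reverse inclusion I would handle by a dimension argument rather than an explicit construction. Since $f$ is a submersion at $\mat{x}$, the linear map $f'(\mat{x}) : \mathbb{R}^d \to \mathbb{R}^{d-r}$ has full rank $d - r$; by rank--nullity, $\dim \operatorname{Ker} f'(\mat{x}) = r$. Because $\mathcal{T}_{\mat{x}}\mathcal{M}$ is also $r$-dimensional and I have already shown it sits inside the $r$-dimensional subspace $\operatorname{Ker} f'(\mat{x})$, the two must coincide.

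The only subtlety, and the step I would treat most carefully, is justifying that $\mathcal{T}_{\mat{x}}\mathcal{M}$ really has dimension $r$ as a linear subspace of $\mathbb{R}^d$ (as opposed to just being an abstract $r$-dimensional vector space attached to $\mat{x}$). This can be dispatched by invoking the submersion theorem (a direct consequence of the implicit function theorem): near $\mat{x}$, the level set $\mathcal{M}$ is the graph of a $\mathcal{C}^1$ function over an $r$-dimensional coordinate plane, so one can write down $r$ linearly independent tangent vectors at $\mat{x}$ explicitly as partial derivatives of the local parametrization. This grounds the dimension count and completes the argument.
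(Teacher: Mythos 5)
Your proposal is correct and follows essentially the same route as the paper: prove $\mathcal{T}_{\mat{x}}\mathcal{M} \subseteq \operatorname{Ker} f'(\mat{x})$ by differentiating $f\circ\gamma \equiv \mat{0}$ along a curve, then close by rank--nullity since both subspaces have dimension $r$. The extra care you take in justifying that $\mathcal{T}_{\mat{x}}\mathcal{M}$ genuinely sits as an $r$-dimensional linear subspace of $\mathbb{R}^d$ (via the submersion theorem) is a small refinement of the paper's argument, which takes this for granted from the hypothesis that $\mathcal{M}$ is an $r$-dimensional smooth manifold.
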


\begin{proof}
For any $\mat{x} \in \mathcal{M}$ and $\mat{v} \in \mathcal{T}_{\mat{x}}\mathcal{M}$,
suppose that there is an open interval $\mathcal{J} \in \mathbb{R}$ such that $0 \in \mathcal{J}$,
and a smooth curve $\gamma: \mathcal{J} \rightarrow \mathcal{M}$ such that $\gamma (0) = \mat{x}$,
$\gamma'(0) = \mat{v}$.
Since $f(\mat{x}) = \mat{0}, ~ \forall \mat{x} \in \mathcal{M}$,
and $\gamma(\lambda) \in \mathcal{M}, ~ \forall \lambda \in \mathcal{J}$,
\begin{equation*}
    f\circ\gamma(\lambda) = \mat{0}, ~ \lambda \in \mathcal{J}.
\end{equation*}
Therefore, $f\circ\gamma(\lambda)$ is a constant map for all $\lambda \in \mathcal{J}$,
\begin{equation*}
    \mat{0} = (f \circ \gamma)' (0) = f'(\gamma(0)) \gamma'(0) = f'(\mat{x}) \mat{v},
\end{equation*}
since $\mat{v} \in \mathcal{T}_{\mat{x}}\mathcal{M}$ is arbitrarily chosen from $\mathcal{T}_{\mat{x}}\mathcal{M}$,
$f'(\mat{x}) \mat{v} = \mat{0}, ~ \forall \mat{v} \in \mathcal{T}_{\mat{x}}\mathcal{M}$.
Therefore, $\mathcal{T}_{\mat{x}}\mathcal{M} \in ker f'(\mat{x})$ (the kernel of linear map $f'(\mat{x})$).

Recall that $f: \mathbb{R}^d \rightarrow \mathbb{R}^{d - r}$ is a submersion, its differential $f'(\mat{x})$ is a surjective linear map with constant rank for all $\mat{x} \in \mathcal{M}$.
\begin{equation*}
    dim (ker f'(\mat{x})) = dim (\mathbb{R}^r) - rank (f'(\mat{x})) = d - (d - r) = r.
\end{equation*}
Since $\mathcal{T}_{\mat{x}}\mathcal{M} \in ker f'(\mat{x})$ and $dim (\mathcal{T}_{\mat{x}}\mathcal{M}) = dim (ker f'(\mat{x}))$, $\mathcal{T}_{\mat{x}}\mathcal{M} = ker f'(\mat{x})$.
\end{proof}

\paragraph{Definitions for the control solutions of running loss.}
Given a smooth manifold $\mathcal{M}$,
we can attach to every point $\mat{x} \in \mathcal{M}$ a tangent space $\mathcal{T}_{\mat{x}}\mathcal{M}$. 
Proposition \ref{prop: manifold tangent space projection} has shown the equivalence between the kernel of $f'(\mat{x})$ and the tangent space $\mathcal{T}_{\mat{x}}\mathcal{M}$.
Therefore, $f'(\mat{x})$ consists a basis of the complement of the tangent space $\mathcal{T}_{\mat{x}}\mathcal{M}$.
For simplicity, we assume the submersion to be normalized such that the columns of $f'(\mat{x})$ consist of a orthonormal basis.
In this case, the orthogonal projection onto $\mathcal{T}_{\mat{x}}\mathcal{M}$ can be defined as follows,
\begin{equation}
    \label{eq:tangent space orthogonal projection}
    \mat{P}_{\mat{x}} \vcentcolon = \mat{I} - f'(\mat{x})^T f'(\mat{x}).
\end{equation}
In general cases, when $f'(\mat{x})$ does not consist orthonormal basis, the orthogonal projection in Eq.~\eqref{eq:tangent space orthogonal projection} can be defined by adding a scaling factor as follows,
\begin{equation*}
    \mat{P}_{\mat{x}} \vcentcolon = \mat{I} - f'(\mat{x})^T (f'(\mat{x}) f'(\mat{x})^T)^{-1}f'(\mat{x}).
\end{equation*}
The orthogonal projection onto the orthogonal complement of $\mathcal{T}_{\mat{x}}\mathcal{M}$ is defined as follows,
\begin{equation*}
    \mat{Q}_{\mat{x}} \vcentcolon = \mat{I} - \mat{P}_{\mat{x}} = f'(\mat{x})^T f'(\mat{x}).
\end{equation*}

Recall that an general embedding manifold is defined by a submersion, such that $\mathcal{M} = f^{-1} (\mat{0})$.
In the linear case, an embedding manifold is considered as a linear sub-space,
this linear sub-space can be defined by a submersion $\mathcal{M} = (f'(\mat{x}))^{-1} \mat{0} = f'(\mat{x})^T \mat{0}$,
in which case, the submersion is a linear operator $f'(\mat{x})$.
In this linear case, we denote $\mat{u}_{\mat{x}}^P (\mat{x}_{\epsilon})$ as the minimizer of running loss ${\cal L}(\mat{x}_{\epsilon}, \mat{u}, \mathcal{E}(\cdot))$ in Eq.~\eqref{eq:running loss}, 
\begin{equation}
    \label{eq:feedback control}
    \mat{u}_{\mat{x}}^P (\mat{x}_{\epsilon}) = \arg \min \limits_{\mat{u} \in \mathbb{R}^d} 
    \frac{1} {2} \cdot \lVert f'(\mat{x}) (\mat{x}_{\epsilon} + \mat{u}) \rVert_2^2 + \frac{c} {2} \cdot \lVert \mat{u} \rVert_2^2.
\end{equation}
Notice $(\mat{x}_{\epsilon} + \mat{u}_{\mat{x}}^P (\mat{x}_{\epsilon})) = \mat{P}_{\mat{x}} (\mat{x}_{\epsilon})$ when the regularization $c = 0$,
$\mat{u}_{\mat{x}}^P (\mat{x}_{\epsilon})$ admits an exact solution
\begin{equation}
    \label{eq:solution up}
    \mat{u}_{\mat{x}}^P (\mat{x}_{\epsilon}) = - (c \cdot \mat{I} + \mat{Q}_{\mat{x}})^{-1} \mat{Q}_{\mat{x}} \mat{x}_{\epsilon} 
    = - (c \cdot \mat{I} + f'(\mat{x})^T f'(\mat{x}))^{-1} f'(\mat{x})^T f'(\mat{x}) \mat{x}_{\epsilon}.
\end{equation}

%Let $\mathcal{M}$ be a compact set $\mathcal{M} \subset \mathbb{R}^d$, such that $\mathcal{M} = \{ \mat{x} ~ | ~ \mat{x} \in f^{-1} (\mat{0}) \}$, where $f: \mathbb{R}^d \rightarrow \mathbb{R}^{d - r}$ is of class $\mathcal{C}^2$,
In the nonlinear case,
let $\mathcal{M} \subset \mathbb{R}^d$ be an embedding manifold such that $\mathcal{M} = f^{-1} (\mat{0})$, for a submersion $f(\cdot)$ of class $\mathcal{C}^2$,
a constant $\sigma$ be a uniform upper bound on the Hessian of $f(\cdot)$, 
such that $\sup_{\mat{x} \in \mathbb{R}^d} \lVert f''(\mat{x}) \rVert_{\ast} \leq \sigma$.
For simplicity, we assume a normalized submersion $f(\cdot)$ to be where $f'(\mat{x})$ is a orthonormal basis for the orthogonal complement of tangent space at $\mat{x} \in \mathcal{M}$.
In this case,
we denote $\mat{u}^{\mathcal{M}} (\mat{x}_{\epsilon})$ as the minimizer of the running loss ${\cal L}(\mat{x}_{\epsilon}, \mat{u}, \mathcal{E}(\cdot))$ in Eq.~\eqref{eq:running loss},
\begin{equation}
    \label{eq:feedback control of manifold}
    \mat{u}^{\mathcal{M}} (\mat{x}_{\epsilon}) = \arg \min \limits_{\mat{u} \in \mathbb{R}^d} 
    \frac{1} {2} \cdot \lVert f(\mat{x}_{\epsilon} + \mat{u}) \rVert_2^2 + \frac{c} {2} \cdot \lVert \mat{u} \rVert_2^2.
\end{equation}
In general, when the submersion is not normalized, we can always normalize it by replacing $f(\mat{x})$ as $f'(\mat{x})^T (f'(\mat{x}) f'(\mat{x})^T)^{-1} f(\mat{x})$, where $f'(\mat{x})^T (f'(\mat{x}) f'(\mat{x})^T)^{-1}$ is a scaling factor.

% To ensure both existence and uniqueness of $\mat{u}^{\mathcal{M}}$,
% we consider a manifold $\mathcal{M} \in \mathbb{R}^d$ as a compact set,
% and restrict our analysis on sufficiently small perturbation w.r.t. the local curvature of the embedding manifold $\mathcal{M}$. \footnote{Weierstrass theorem ensures the existence of a solution when $\mathcal{M}$ being a compact set. When the perturbation is sufficiently small, the projection is unique.}
\paragraph{Error bound for linear and nonlinear control solutions.}
For a $3$-dimensional tensor, e.g. the Hessian $f''(\mat{x})$, we define the $2$-norm of $f''(\mat{x})$ as
\begin{equation*}
    \lVert f''(\mat{x}) \rVert_{\ast} \vcentcolon = \sup_{\mat{z} \neq \mat{0}} \frac{\lVert f''(\mat{x})^{i,j,k} \mat{z}_j \mat{z}_k \rVert_2} {\lVert \mat{z} \rVert_2^2}.
\end{equation*}
The following proposition shows an error bound between $\mat{u}^{\mathcal{M}} (\mat{x}_{\epsilon})$ and $\mat{u}_{\mat{x}}^P (\mat{x}_{\epsilon})$.
\begin{restatable}{proposition}{manifoldtangentspace}
\label{prop: error between manifold and tangent space projection}
Consider a data point $\mat{x}_{\epsilon} = \mat{x} + \epsilon \cdot \mat{v}$, where $\mat{x} \in \mathcal{M}$, $\lVert \mat{v} \rVert_2 = 1$ and $\epsilon$ sufficiently small $ 0 \leq \epsilon \leq 1$.
The difference between the regularized manifold projection $\mat{u}^{\mathcal{M}} (\mat{x}_{\epsilon})$ and the regularized tangent space projection $\mat{u}_{\mat{x}}^P (\mat{x}_{\epsilon})$ is upper bounded as follows,
\begin{equation*}
    \lVert \mat{u}^{\mathcal{M}} (\mat{x}_{\epsilon}) - \mat{u}_{\mat{x}}^P (\mat{x}_{\epsilon}) \rVert_2 \leq 4 \epsilon^2 \sigma (1 + 2 \sigma).
\end{equation*}
\end{restatable}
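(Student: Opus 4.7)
The plan is to exploit the fact that both $\mat{u}^{\mathcal{M}}(\mat{x}_\epsilon)$ and $\mat{u}_{\mat{x}}^{P}(\mat{x}_\epsilon)$ are the unique minimizers of strongly convex objectives, and therefore characterized by first-order stationarity. I would compare the two optimality conditions after Taylor-expanding $f$ at the base point $\mat{x}$, using $f(\mat{x}) = \mat 0$ and the uniform Hessian bound $\sigma$. Before doing so I would establish a priori size bounds on both controls. The control $\mat{u} = -\epsilon \mat{v}$ is feasible for problem \eqref{eq:feedback control of manifold} and sends $\mat{x}_\epsilon + \mat{u} = \mat{x} \in \mathcal{M}$, so the running loss reduces to $\tfrac{c}{2}\epsilon^2$; optimality then forces $\lVert \mat{u}^{\mathcal{M}}(\mat{x}_\epsilon) \rVert_2 \leq \epsilon$. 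The linear control admits the closed form \eqref{eq:solution up}, from which $\lVert \mat{u}_{\mat{x}}^{P}(\mat{x}_\epsilon) \rVert_2 = \mathcal{O}(\epsilon)$. In particular, $\mat{h} \vcentcolon= \mat{x}_\epsilon + \mat{u}^{\mathcal{M}}(\mat{x}_\epsilon) - \mat{x} = \epsilon \mat{v} + \mat{u}^{\mathcal{M}}(\mat{x}_\epsilon)$ satisfies $\lVert \mat{h} \rVert_2 \leq 2\epsilon$.

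The second step is to expand
\begin{equation*}
f(\mat{x}_\epsilon + \mat{u}^{\mathcal{M}}) = f'(\mat{x})\mat{h} + R_f, \qquad f'(\mat{x}_\epsilon + \mat{u}^{\mathcal{M}}) = f'(\mat{x}) + R_{f'},
\end{equation*}
with Taylor remainder bounds $\lVert R_f \rVert_2 \leq \tfrac{\sigma}{2}\lVert \mat{h} \rVert_2^2$ and $\lVert R_{f'} \rVert_2 \leq \sigma \lVert \mat{h} \rVert_2$ coming from the assumed bound on $\lVert f''(\cdot) \rVert_{\ast}$. Plugging these into the first-order optimality for the nonlinear projection, $f'(\mat{x}_\epsilon + \mat{u}^{\mathcal{M}})^T f(\mat{x}_\epsilon + \mat{u}^{\mathcal{M}}) + c\,\mat{u}^{\mathcal{M}} = \mat 0$, rearranges to
\begin{equation*}
\bigl( f'(\mat{x})^T f'(\mat{x}) + c\,\mat{I} \bigr)\bigl( \epsilon \mat{v} + \mat{u}^{\mathcal{M}} \bigr) = c \, \epsilon \mat{v} - \mathcal{R},
\end{equation*}
where the cross-term residual is $\mathcal{R} = f'(\mat{x})^T R_f + R_{f'}^T f'(\mat{x}) \mat{h} + R_{f'}^T R_f$. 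Since $f'(\mat{x})$ has orthonormal rows (normalized submersion) and $\lVert \mat{h} \rVert_2 \leq 2\epsilon$, each of the three summands of $\mathcal{R}$ can be bounded by a multiple of $\sigma \epsilon^2$ or $\sigma^2 \epsilon^2$, yielding $\lVert \mathcal{R} \rVert_2 \leq C \sigma \epsilon^2 (1 + 2\sigma)$ for an explicit constant $C$.

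The third step is to subtract the linear optimality condition, which upon rearrangement reads $(f'(\mat{x})^T f'(\mat{x}) + c\,\mat{I})(\epsilon \mat{v} + \mat{u}^P_{\mat{x}}) = c\,\epsilon \mat{v}$. Subtracting gives
\begin{equation*}
\bigl( f'(\mat{x})^T f'(\mat{x}) + c\,\mat{I} \bigr)\bigl( \mat{u}^{\mathcal{M}} - \mat{u}^P_{\mat{x}} \bigr) = -\mathcal{R}.
\end{equation*}
Inverting the operator on the left and tracking the constant in $\lVert \mathcal{R} \rVert_2$ then produces the target bound $4\epsilon^2 \sigma (1 + 2\sigma)$. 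The main obstacle I anticipate is twofold: first, bookkeeping the three remainder contributions in $\mathcal{R}$ carefully enough to extract exactly the factor $4\sigma(1+2\sigma)$ rather than a looser constant; and second, inverting $(f'(\mat{x})^T f'(\mat{x}) + c\,\mat{I})$ without introducing a spurious $1/c$ blow-up. The latter should follow from the observation that, in the range of $f'(\mat{x})^T$, the operator equals $(1+c)\mat{I}$ while the tangent-space component of $\mathcal{R}$ (coming only from the $R_{f'}^T$ terms) is already higher-order in $\sigma \epsilon^2$, so the operator norm of the relevant inverse can be absorbed into the overall constant.
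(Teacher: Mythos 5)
Your proposal follows essentially the same route as the paper's proof: characterize both controls via first-order stationarity of the strongly convex objectives, Taylor-expand $f$ and $f'$ about $\mat{x}$ using $f(\mat{x})=\mat{0}$ and the uniform Hessian bound $\sigma$, subtract the two optimality conditions to isolate a Taylor-remainder residual, and invert $f'(\mat{x})^T f'(\mat{x}) + c\,\mat{I}$. Two aspects of your write-up differ from the paper and are worth flagging.

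Your a priori estimate $\lVert\mat{u}^{\mathcal{M}}(\mat{x}_\epsilon)\rVert_2 \leq \epsilon$ via the feasible-comparison argument (plugging $\mat{u}=-\epsilon\mat{v}$ into the objective to get value $\tfrac{c}{2}\epsilon^2$, and bounding the regularization term of the optimizer from above) is cleaner than the paper's. The paper instead rewrites the stationarity condition in a mean-value form $\mat{u}^{\mathcal{M}} = -(A + c\,\mat{I})^{-1} A (\epsilon\mat{v})$ and claims the resolvent has norm at most one, but the matrix $A = f'(\mat{x}+\epsilon\mat{v}+\mat{u})^T f'(\mat{p})$ is not symmetric or positive semidefinite in general, so that step is not justified as written. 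Your convexity argument sidesteps this entirely.

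On the other hand, you rightly anticipate the $1/c$ blow-up when inverting $f'(\mat{x})^T f'(\mat{x}) + c\,\mat{I}$, whose operator norm inverse is $\max(1/c, 1/(1+c))$, not $\leq 1$ as the paper asserts in its ``simplify the above inequality'' step. But your proposed resolution --- that the tangent-space component of $\mathcal{R}$ is already higher order --- does not hold as stated: $\mat{P}_{\mat{x}}\mathcal{R}$ still picks up the $R_{f'}^T f'(\mat{x})\mat{h}$ term, which is $O(\sigma\epsilon^2)$, the same order as the rest of $\mathcal{R}$. The genuine saving that must be exploited is an exact algebraic cancellation: projecting the stationarity condition for $\mat{u}^{\mathcal{M}}$ onto $\mathcal{T}_{\mat{x}}\mathcal{M}$ gives $\mat{P}_{\mat{x}}\mathcal{R} = -c\,\mat{P}_{\mat{x}}\mat{u}^{\mathcal{M}}$ identically, so the factor $1/c$ in the resolvent multiplies a quantity that itself carries an explicit factor $c$. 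Tracking that cancellation, and sharpening the bound on $\mat{P}_{\mat{x}}\mat{u}^{\mathcal{M}}$ beyond $O(\epsilon)$, is the piece neither your draft nor the paper actually carries out; as written, the inversion step remains a gap in both.
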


\begin{proof}
Recall the definition of regularized manifold projection in Eq.~\eqref{eq:feedback control of manifold},
the optimal solution $\mat{u}^{\mathcal{M}} (\mat{x}_{\epsilon})$ admits a exact solution by setting the gradient of Eq.~\eqref{eq:feedback control of manifold} to $\mat{0}$,
\begin{equation}
    \label{eq:gradient running loss}
    \nabla_{\mat{u}} \bigg( \frac{1} {2} \cdot \lVert f(\mat{x}_{\epsilon} + \mat{u}) \rVert_2^2 + \frac{c} {2} \cdot \lVert \mat{u} \rVert_2^2 \bigg) = \bigg(f'(\mat{x} + \epsilon \mat{v} + \mat{u}) \bigg)^T \bigg(f(\mat{x} + \epsilon \mat{v} + \mat{u})  \bigg) + c \cdot \mat{u}.
\end{equation}
The control $\mat{u}$ is in the same order as the perturbation magnitude $\epsilon$,
we parametrize $\mat{u} = \epsilon \cdot \boldsymbol\mu$.
By applying Taylor series expansion centered at $\epsilon = 0$,
and $f(\mat{x}) = \mat{0}$ since $\mat{x} \in \mathcal{M}$,
\begin{align*}
    & \;\;\;\;\; \bigg(f'(\mat{x} + \epsilon \mat{v} + \epsilon \boldsymbol\mu) \bigg)^T \bigg(f(\mat{x} + \epsilon \mat{v} + \epsilon \boldsymbol\mu)  \bigg) + c \cdot \epsilon \cdot \boldsymbol\mu \nonumber \\
    & = \sq{\bigg(f'(\mat{x}) + \epsilon \big(f'' (\mat{x}^{\boldsymbol\mu})^{i,j,k} (\mat{v} + \boldsymbol\mu)_k \big) \bigg)^T \bigg( \epsilon f'(\mat{x}) (\mat{v} + \boldsymbol\mu) + \epsilon^2 \big(f''(\mat{x}^{\boldsymbol\mu})^{i,j,k} (\mat{v} + \boldsymbol\mu)_j (\mat{v} + \boldsymbol\mu)_k \big) \bigg)
    + c \cdot \epsilon \cdot \boldsymbol\mu},
\end{align*}
since $\boldsymbol\mu$ is a variable dependent on $\mat{u}$, the Hessian of $f(\cdot)$ is a function that depends on $\boldsymbol\mu$.
There exits a $\mat{x}^{\boldsymbol\mu}$ satisfying the following,
\begin{equation*}
    f(\mat{x} + \epsilon \mat{v} + \epsilon \boldsymbol\mu)
    =
    f(\mat{x}) + \epsilon f'(\mat{x}) (\mat{v} + \boldsymbol\mu) + 
    f''(\mat{x}^{\boldsymbol\mu})^{i,j,k} (\mat{v} + \boldsymbol\mu)_j (\mat{v} + \boldsymbol\mu)_k.
\end{equation*}
Furthermore, recall that $\mat{u} = \epsilon \cdot \boldsymbol\mu$,
\begin{align*}
    & \sq{\bigg(f'(\mat{x}) + \big( f''(\mat{x}^{\boldsymbol\mu})^{i,j,k} (\epsilon \mat{v} + \mat{u})_k \big) \bigg)^T \bigg(f'(\mat{x}) (\epsilon \mat{v} + \mat{u}) + \big( f''(\mat{x}^{\boldsymbol\mu})^{i,j,k} (\epsilon \mat{v} + \mat{u})_j (\epsilon \mat{v} + \mat{u})_k \big) \bigg) + c\cdot \mat{u},} \nonumber \\
    & = 
    f'(\mat{x})^T f'(\mat{x}) (\epsilon \mat{v} + \mat{u}) + c \cdot \mat{u} + f'(\mat{x})^T \big( f'' (\mat{x}^{\boldsymbol\mu})^{i,j,k} (\epsilon \mat{v} + \mat{u})_j (\epsilon \mat{v} + \mat{u})_k \big) \\
    & \;\;\;\; \sq{+ \bigg(\big( f''(\mat{x}^{\boldsymbol\mu})^{i,j,k} (\epsilon \mat{v} + \mat{u})_k \big) \bigg)^T  \bigg(  f'(\mat{x})(\epsilon \mat{v} + \mat{u}) +
    \big( f'' (\mat{x}^{\boldsymbol\mu})^{i,j,k} (\epsilon \mat{v} + \mat{u})_j (\epsilon \mat{v} + \mat{u})_k \big) \bigg).}
\end{align*}
Setting the above to $\mat{0}$ results in an implicit solution for $\mat{u}^{\mathcal{M}} (\mat{x}_{\epsilon})$,
\begin{equation*}
    \mat{u}^{\mathcal{M}} (\mat{x}_{\epsilon}) = - \bigg( f'(\mat{x})^T f'(\mat{x}) + c \mat{I} \bigg)^{-1} \bigg(\epsilon f'(\mat{x})^T f'(\mat{x}) \mat{v} + \mat{E}_1 + \mat{E}_2 \bigg),
\end{equation*}
where
\begin{gather*}
    \mat{E}_1 = f'(\mat{x})^T \big( f'' (\mat{x}^{\boldsymbol\mu})^{i,j,k} (\epsilon \mat{v} + \mat{u}^{\mathcal{M}} (\mat{x}_{\epsilon}))_j (\epsilon \mat{v} + \mat{u}^{\mathcal{M}} (\mat{x}_{\epsilon}))_k \big), \\
    \mat{E}_2 = \sq{\Big( f''(\mat{x}^{\boldsymbol\mu})^{i,j,k} (\epsilon \mat{v} + \mat{u})_k \Big)^T  \Big( f'(\mat{x})(\epsilon \mat{v} + \mat{u}) +
    f'' (\mat{x}^{\boldsymbol\mu})^{i,j,k} (\epsilon \mat{v} + \mat{u})_j (\epsilon \mat{v} + \mat{u})_k \Big).}
\end{gather*}

Note that $\mat{u}^{\mathcal{M}} (\mat{x}_{\epsilon})$ is an implicit solution since $\mat{E}_1$ and $\mat{E}_2$ both depend on the solution $\mat{u}$.
Recall the definition of $\mat{u}_{\mat{x}}^P (\mat{x}_{\epsilon})$ in Eq.~\eqref{eq:solution up},
\begin{align*}
    \mat{u}_{\mat{x}}^P (\mat{x}_{\epsilon}) & = - (c \cdot \mat{I} + \mat{Q}_{\mat{x}})^{-1} \mat{Q}_{\mat{x}} \mat{x}_{\epsilon}, \\
    & = - (c \cdot \mat{I} + \mat{Q}_{\mat{x}})^{-1} \mat{Q}_{\mat{x}} (\mat{x} + \epsilon \cdot \mat{v}), \\
    & = - \epsilon \bigg(c \cdot \mat{I} +  f'(\mat{x})^T f'(\mat{x}) \bigg)^{-1} f'(\mat{x})^T f'(\mat{x}) \mat{v},
\end{align*}
the difference between $\mat{u}^{\mathcal{M}} (\mat{x}_{\epsilon})$ and $\mat{u}_{\mat{x}}^P (\mat{x}_{\epsilon})$,
\begin{align*}
    \lVert \mat{u}^{\mathcal{M}} (\mat{x}_{\epsilon}) - \mat{u}_{\mat{x}}^P (\mat{x}_{\epsilon}) \rVert_2
    \leq \lVert \big( f'(\mat{x})^T f'(\mat{x}) + c \cdot \mat{I} \big)^{-1} \rVert_2 \cdot \lVert \mat{E}_1 + \mat{E}_2 \rVert_2.
\end{align*}

Let us simplify the above inequality.
\begin{itemize}[leftmargin=*]
    \item For any non-negative $c$, 
    \begin{equation*}
        \lVert \big(f'(\mat{x})^T f'(\mat{x}) + c \cdot \mat{I} \big)^{-1} \rVert_2 = \lVert \big(f'(\mat{x})^T f'(\mat{x}) + c \cdot \mat{I} \big)^{-1} \rVert_2 \leq 1.
    \end{equation*}
    \item Recall the gradient of the running loss (Eq.~\eqref{eq:gradient running loss}),
    \begin{equation*}
        \bigg(f'(\mat{x} + \epsilon \mat{v} + \epsilon \boldsymbol\mu) \bigg)^T \bigg(f(\mat{x} + \epsilon \mat{v} + \epsilon \boldsymbol\mu)  \bigg)  + c \cdot \epsilon \cdot \boldsymbol\mu  = \bigg(f'(\mat{x} + \epsilon \mat{v} + \mat{u}) \bigg)^T
        \bigg( f'(\mat{p})(\epsilon \mat{v} + \mat{u})\bigg) + c \cdot \mat{u},
    \end{equation*}
    where $\mat{p} = \alpha \mat{x} + (1 - \alpha) (\mat{x} + \epsilon \mat{v} + \mat{u}^{\mathcal{M}})$ for $\alpha \in [0, 1]$
    such that
    \begin{equation*}
        f(\mat{x} + \epsilon \mat{v} + \epsilon \boldsymbol\mu) = f(\mat{x}) + \epsilon \cdot f'(\mat{p})(\epsilon \mat{v} + \epsilon \boldsymbol\mu).
    \end{equation*}
    Setting the gradient of running loss to $\mat{0}$ results in the optimal solution $\mat{u}^{\mathcal{M}} (\mat{x}_{\epsilon})$,
    \begin{equation*}
        \mat{u}^{\mathcal{M}} (\mat{x}_{\epsilon})
        = - \bigg(  \big(f'(\mat{x} + \epsilon \mat{v} + \mat{u}) \big)^T f'(\mat{p}) + c \mat{I} \bigg)^{-1} 
        \bigg( \big(f'(\mat{x} + \epsilon \mat{v} + \mat{u}) \big)^T f'(\mat{p}) \bigg) (\epsilon \mat{v}).
    \end{equation*}
    Since $f'(\cdot)$ contains orthonormal basis,
    the solution $\lVert \mat{u}^{\mathcal{M}} (\mat{x}_{\epsilon}) \rVert$ can be upper bounded by the follows,
    \begin{align}
        \label{eq:control less than epsilon}
        \lVert \mat{u}^{\mathcal{M}} (\mat{x}_{\epsilon}) \rVert & \leq
        \big\lVert \bigg(  \big(f'(\mat{x} + \epsilon \mat{v} + \mat{u}) \big)^T f'(\mat{p}) + c \mat{I} \bigg)^{-1} \big\rVert_2
        \cdot 
        \big\lVert \bigg( \big(f'(\mat{x} + \epsilon \mat{v} + \mat{u}) \big)^T f'(\mat{p}) \bigg) \big\rVert_2
        (\epsilon), \nonumber \\
        & \leq \big\lVert \big(f'(\mat{x} + \epsilon \mat{v} + \mat{u}) \big)^T f'(\mat{p}) \big\rVert_2^2 \cdot (\epsilon), \nonumber \\
        & \leq \lVert f'(\mat{x} + \epsilon \mat{v} + \mat{u})^T \rVert_2^2 \cdot \lVert f'(\mat{p}) \rVert_2^2 \cdot (\epsilon), \nonumber \\
        & \leq \epsilon.
    \end{align}
    \item 
    From above,
    \begin{equation*}
        \lVert \epsilon \mat{v} + \mat{u}^{\mathcal{M}} (\mat{x}_{\epsilon}) \rVert_2^2
        = \lVert \epsilon \mat{v} \rVert_2^2 + 2 \lVert \epsilon \mat{v}\rVert_2 \cdot \lVert\mat{u}^{\mathcal{M}} (\mat{x}_{\epsilon}) \rVert_2
        + \lVert \mat{u}^{\mathcal{M}} (\mat{x}_{\epsilon}) \rVert_2^2
        \leq 
        4 \epsilon^2,
    \end{equation*}
    \begin{equation*}
        \lVert \epsilon \mat{v} + \mat{u}^{\mathcal{M}} (\mat{x}_{\epsilon}) \rVert_2^3
        \leq 
        8 \epsilon^3.
    \end{equation*}
    \item Recall the $f'(\mat{x})$ is a orthnormal basis, $\lVert f'(\mat{x})\rVert_2 \leq 1$,
    the error terms can be bounded as follows,
    \begin{align*}
        \lVert \mat{E}_1 \rVert_2 & = \lVert f'(\mat{x})^T \big( f'' (\mat{x}^{\boldsymbol\mu})^{i,j,k} (\epsilon \mat{v} + \mat{u}^{\mathcal{M}} (\mat{x}_{\epsilon}))_j (\epsilon \mat{v} + \mat{u}^{\mathcal{M}} (\mat{x}_{\epsilon}))_k \big) \rVert_2, \\
        & \leq \lVert \epsilon \mat{v} + \mat{u}^{\mathcal{M}} (\mat{x}_{\epsilon}) \rVert_2^2 \cdot \lVert f'' (\mat{x}^{\boldsymbol\mu}) \rVert_{\ast} \cdot \lVert f'(\mat{x})^T \rVert_2, \\
        & \leq 4 \epsilon^2.
    \end{align*}
    \begin{align*}
        \lVert \mat{E}_2 \rVert_2 & = \sq{\Big \lVert \Big(f''(\mat{x}^{\boldsymbol\mu})^{i,j,k} (\epsilon \mat{v} + \mat{u})_k\Big)^T  \Big(f'(\mat{x})(\epsilon \mat{v} + \mat{u}) +
        f'' (\mat{x}^{\boldsymbol\mu})^{i,j,k} (\epsilon \mat{v} + \mat{u})_j (\epsilon \mat{v} + \mat{u})_k \Big) \Big\rVert_2} \\
        & \leq \sq{\lVert \epsilon \mat{v} + \mat{u}^{\mathcal{M}} (\mat{x}_{\epsilon}) \rVert_2^2 \cdot  \lVert f''(\mat{x}^{\boldsymbol\mu}) \rVert_{\ast} \cdot \lVert f'(\mat{x}) \rVert_2 + \lVert \epsilon \mat{v} + \mat{u}^{\mathcal{M}} (\mat{x}_{\epsilon}) \rVert_2^3 \cdot \lVert f'' (\mat{x}^{\boldsymbol\mu}) \rVert_{\ast}^2,} \\
        & \leq 4 \epsilon^2 \sigma + 8 \epsilon^3 \sigma^2.
    \end{align*}
\end{itemize}
Therefore,
for sufficiently small $\epsilon$, such that $\epsilon \leq 1$,
the difference
\begin{equation*}
    \lVert \mat{u}^{\mathcal{M}} (\mat{x}_{\epsilon}) - \mat{u}_{\mat{x}}^P (\mat{x}_{\epsilon}) \rVert_2
    \leq \lVert \mat{E}_1 \rVert_2 + \lVert \mat{E}_2 \rVert_2
    \leq 4 \epsilon^2 \sigma (1 + 2 \sigma).
\end{equation*}
\end{proof}

The above proposition shows that the error between solutions of running loss with tangent space and nonlinear manifold is of order $\mathcal{O}(\epsilon^2)$,
this result will serve to derive the error estimation in the nonlinear case.

\subsection{Analysis On Linearization Error}
\label{sec:Analysis On Linearization Error}
This section derives an $\mathcal{O} (\epsilon^2)$ error from linearizing the nonlinear system $F_t(\mat{x}_t)$ and nonlinear embedding function $\mathcal{E}_t(\mat{x}_t)$.
We represent the $t^{th}$ embedding manifold $\mathcal{M}_t = f_t^{-1} (\mat{0})$,
where $f_t(\cdot) : \mathbb{R}^d \rightarrow \mathbb{R}^{d - r}$ is a submersion of class $\mathcal{C}^2$.
Recall the definition on the 2-norm of a $3$-dimensional tensor,
\begin{equation*}
    \lVert f''(\mat{x}) \rVert_{\ast} \vcentcolon = \sup_{\mat{z} \neq \mat{0}} \frac{\lVert f''(\mat{x})^{i,j,k} \mat{z}_j \mat{z}_k \rVert_2} {\lVert \mat{z} \rVert_2^2},
\end{equation*}
we consider an uniform upper bound on the submersion $\sup_{\mat{x} \in \mathbb{R}^d} \lVert f_t''(\mat{x}) \rVert_{\ast} \leq \sigma_t$,
and an uniform upper bound on the nonlinear transformation $\sup_{\mat{x} \in \mathbb{R}^d} \lVert F_t''(\mat{x}) \rVert_{\ast} \leq \beta_t$.

\paragraph{Recall the definition of control in linear case.}
Recall Proposition \ref{prop: manifold tangent space projection},
the kernel of $f_t'(\mat{x}_t)$ is equivalent to $\mathcal{T}_{\mat{x}_t}\mathcal{M}_t$.
When the submersion $f_t(\cdot)$ is normalized where the columns of $f_t'(\mat{x}_t)$ consist of a orthonormal basis,
the orthogonal projection onto $\mathcal{T}_{\mat{x}_t}\mathcal{M}_t$ (Eq.~\eqref{eq:tangent space orthogonal projection}) is 
\begin{equation*}
    \mat{P}_{\mat{x}_t} \vcentcolon = \mat{I} - f_t'(\mat{x}_t)^T f_t'(\mat{x}_t),
\end{equation*}
and the orthogonal projection onto orthogonal complement of $\mathcal{T}_{\mat{x}_t}\mathcal{M}_t$ is $\mat{Q}_{\mat{x}_t} = \mat{I} - \mat{P}_{\mat{x}_t}$.
In this linear case, the running loss in Eq.~\eqref{eq:running loss} ${\cal L}(\mat{x}_t, \mat{u}_t, \mathcal{E}_t(\cdot))$ is defined as
\begin{equation*}
    {\mathcal{L}}(\mat{x}_{\epsilon}, \mat{u}_t, \mathcal{E}_t (\cdot)) = \frac{1} {2} \lVert f_t'(\mat{x}_t)(\mat{x}_{\epsilon} + \mat{u}_t) \rVert_2^2 + \frac{c} {2} \lVert \mat{u}_t \rVert_2^2.
\end{equation*}
Its optimal solution $\mat{u}_{\mat{x}_t}^P (\mat{x}_{\epsilon})$ (Eq.~\eqref{eq:solution up}) is
\begin{equation}
    \label{eq:definition of up}
    \mat{u}_{\mat{x}_t}^P (\mat{x}_{\epsilon}) = - (c \cdot \mat{I} + f_t'(\mat{x}_t)^T f_t'(\mat{x}_t))^{-1} f_t'(\mat{x}_t)^T f_t'(\mat{x}_t) \mat{x}_{\epsilon} = - \mat{K}_{\mat{x}_t} \mat{x}_{\epsilon},
\end{equation}
where the feedback gain matrix $\mat{K}_{\mat{x}_t} = (c \cdot \mat{I} + f_t'(\mat{x}_t)^T f_t'(\mat{x}_t))^{-1} f_t'(\mat{x}_t)^T f_t'(\mat{x}_t)$.

\paragraph{Definition of linearized system.}
For the nonlinear transformation $F_t(\cdot)$,
the optimal solution $\mat{u}^{\mathcal{M}_t} (\overline{\mat{x}}_{\epsilon, t})$ of running loss in Eq.~\eqref{eq:running loss} equipped with an embedding manifold $\mathcal{M}_t$ is defined in Eq.~\eqref{eq:feedback control of manifold}.
The controlled nonlinear dynamic is
\begin{equation*}
    \overline{\mat{x}}_{\epsilon, t+1} = F_t(\overline{\mat{x}}_{\epsilon, t} + \mat{u}^{\mathcal{M}_t} (\overline{\mat{x}}_{\epsilon, t})).
\end{equation*}
By definition in the running loss of Eq.~\eqref{eq:feedback control of manifold},
$\mat{u}^{\mathcal{M}_t} (\mat{x}_{t}) = \mat{0}$ when $\mat{x}_t \in \mathcal{M}_t$.
Therefore,
we denote a sequence $\{ \mat{x}_t \}_{t=0}^{T-1}$ as the unperturbed states such that 
\begin{equation*}
    \mat{x}_{t + 1} = F_t (\mat{x}_t), \;\;\;\; \mat{x}_t \in \mathcal{M}_t, \;\;\;\; \forall t=0,1,...,T-1.
\end{equation*}
Given the unperturbed sequence $\{ \mat{x}_t \}_{t=0}^{T-1}$, we denote $\{ \boldsymbol\theta_t \}_{t=0}^{T-1}$ as the Jacobians of $\{ F_t(\cdot) \}_{t=0}^{T-1}$ such that
\begin{equation*}
    \boldsymbol\theta_t = F_t'(\mat{x}_t), \;\;\;\; \forall t = 1, 2,...,T-1,
\end{equation*}
and $\{ \mathcal{T}_{\mat{x}_t}\mathcal{M}_t \}_{t=0}^{T-1}$ as the tangent spaces such that $\mathcal{T}_{\mat{x}_t}\mathcal{M}_t$ is the tangent space of $\mathcal{M}_t$ at $\mat{x}_t \in \mathcal{M}_t$.

When a perturbation $\mat{z}$ is applied on initial condition, $\mat{x}_{\epsilon, 0} = \mat{x}_0 + \mat{z}$,
the difference between the controlled system of perturbed initial condition and $\{ \mat{x}_t \}_{t=0}^{T-1}$ is
\begin{equation*}
    \overline{\mat{x}}_{\epsilon, t+1} - \mat{x}_{t+1} = F_t(\overline{\mat{x}}_{\epsilon, t} + \mat{u}^{\mathcal{M}_t} (\overline{\mat{x}}_{\epsilon, t})) - F_t(\mat{x}_t).
\end{equation*}
The linearization of the state difference is defined as follows,
\begin{align*}
    \overline{\mat{x}}_{\epsilon, t + 1} - \mat{x}_{t + 1} 
    & = F_t (\overline{\mat{x}}_{\epsilon, t} + \mat{u}^{\mathcal{M}_t}(\overline{\mat{x}}_{\epsilon, t}) ) - F_t (\mat{x}_t), \nonumber \\
    & = F_t(\mat{x}_t) + \boldsymbol\theta_t (\overline{\mat{x}}_{\epsilon, t} + \mat{u}^{\mathcal{M}_t} (\overline{\mat{x}}_{\epsilon, t}) - \mat{x}_t) 
     \\
    & \;\;\;\; + \frac{1} {2} F''_t(\mat{p})^{i,j,k} (\overline{\mat{x}}_{\epsilon, t} + \mat{u}^{\mathcal{M}_t} (\overline{\mat{x}}_{\epsilon, t}) - \mat{x}_t)_j
    (\overline{\mat{x}}_{\epsilon, t} + \mat{u}^{\mathcal{M}_t} (\overline{\mat{x}}_{\epsilon, t}) - \mat{x}_t)_k - F_t(\mat{x}_t), \nonumber \\
    & = \boldsymbol\theta_t (\overline{\mat{x}}_{\epsilon, t} + \mat{u}^{\mathcal{M}_t}(\overline{\mat{x}}_{\epsilon, t}) - \mat{u}_{\mat{x}_t}^P (\overline{\mat{x}}_{\epsilon, t}) + \mat{u}_{\mat{x}_t}^P (\overline{\mat{x}}_{\epsilon, t}) - \mat{x}_t )
    \\ 
    & \;\;\;\; + \frac{1} {2} F''_t(\mat{p})^{i,j,k} (\overline{\mat{x}}_{\epsilon, t} + \mat{u}^{\mathcal{M}_t}(\overline{\mat{x}}_{\epsilon, t}) - \mat{x}_t)_j
    (\overline{\mat{x}}_{\epsilon, t} + \mat{u}^{\mathcal{M}_t} (\overline{\mat{x}}_{\epsilon, t}) - \mat{x}_t)_k, \nonumber \\
    & = \boldsymbol\theta_t (\overline{\mat{x}}_{\epsilon, t} + \mat{u}_{\mat{x}_t}^P (\overline{\mat{x}}_{\epsilon, t}) - \mat{x}_t) + \boldsymbol\theta_t (\mat{u}^{\mathcal{M}_t} (\overline{\mat{x}}_{\epsilon, t}) - \mat{u}_{\mat{x}_t}^P (\overline{\mat{x}}_{\epsilon, t}))
    \\
    & \;\;\;\; + \frac{1} {2} F''_t(\mat{p})^{i,j,k} (\overline{\mat{x}}_{\epsilon, t} + \mat{u}^{\mathcal{M}_t} (\overline{\mat{x}}_{\epsilon, t}) - \mat{x}_t)_j
    (\overline{\mat{x}}_{\epsilon, t} + \mat{u}^{\mathcal{M}_t} (\overline{\mat{x}}_{\epsilon, t}) - \mat{x}_t)_k,
\end{align*}
where $\mat{p} = \alpha \mat{x}_t + (1 - \alpha) (\overline{\mat{x}}_{\epsilon, t} + \mat{u}^{\mathcal{M}_t})$ for $\alpha \in [0, 1]$,
$F''_t(\mat{p})$ is a third-order tensor such that
\begin{align*}
    F_t(\overline{\mat{x}}_{\epsilon, t} + \mat{u}^{\mathcal{M}_t} (\overline{\mat{x}}_{\epsilon, t})) & = F_t(\mat{x}_t) + \boldsymbol\theta_t(\overline{\mat{x}}_{\epsilon, t} + \mat{u}^{\mathcal{M}_t} (\overline{\mat{x}}_{\epsilon, t}) - \mat{x}_t) \\
    & \;\;\;\; + \frac{1}{2} F''_t(\mat{p})^{i,j,k} (\overline{\mat{x}}_{\epsilon, t} + \mat{u}^{\mathcal{M}_t} (\overline{\mat{x}}_{\epsilon, t}) - \mat{x}_t)_j (\overline{\mat{x}}_{\epsilon, t} + \mat{u}^{\mathcal{M}_t} (\overline{\mat{x}}_{\epsilon, t}) - \mat{x}_t)_k,
\end{align*}
such a $\mat{p}$ always exists according to the mean-field theorem.
Recall the definition of $\mat{u}_{\mat{x}_t}^P (\overline{\mat{x}}_{\epsilon, t})$ in Eq.~\eqref{eq:definition of up},
$\boldsymbol\theta_t (\overline{\mat{x}}_{\epsilon, t} + \mat{u}_{\mat{x}_t}^P (\overline{\mat{x}}_{\epsilon, t}) - \mat{x}_t) = \boldsymbol\theta_t (\mat{I} - \mat{K}_{\mat{x}_t}) (\overline{\mat{x}}_{\epsilon, t} - \mat{x}_t)$,
\begin{align}
    \label{eq:linearized nonlinear system}
    \overline{\mat{x}}_{\epsilon, t + 1} - \mat{x}_{t + 1} & = \boldsymbol\theta_t (\mat{I} - \mat{K}_{\mat{x}_t}) (\overline{\mat{x}}_{\epsilon, t} - \mat{x}_t) + \boldsymbol\theta_t (\mat{u}^{\mathcal{M}_t} (\overline{\mat{x}}_{\epsilon, t}) - \mat{u}_{\mat{x}_t}^P (\overline{\mat{x}}_{\epsilon, t})) \nonumber
    \\
    & \;\;\;\; + \frac{1} {2} F''_t(\mat{p})^{i,j,k} (\overline{\mat{x}}_{\epsilon, t} + \mat{u}^{\mathcal{M}_t}(\overline{\mat{x}}_{\epsilon, t}) - \mat{x}_t)_j
    (\overline{\mat{x}}_{\epsilon, t} + \mat{u}^{\mathcal{M}_t}(\overline{\mat{x}}_{\epsilon, t}) - \mat{x}_t)_k.
\end{align}

\paragraph{Definition of linearization error.}
Given a perturbation $\mat{z}$,
we define the propagation of perturbation via the linearized system as $\boldsymbol\theta_{t-1} (\mat{I} - \mat{K}_{\mat{x}_{t-1}}) \cdots \boldsymbol\theta_0 (\mat{I} - \mat{K}_{\mat{x}_0}) \mat{z}$.
The linearization error is defined as follows,
\begin{equation*}
    e_t \vcentcolon =
    \lVert (\overline{\mat{x}}_{\epsilon, t} - \mat{x}_{t}) - \boldsymbol\theta_{t-1} (\mat{I} - \mat{K}_{\mat{x}_{t-1}}) \boldsymbol\theta_{t-2} (\mat{I} - \mat{K}_{\mat{x}_{t-2}}) \cdots, \boldsymbol\theta_0 (\mat{I} - \mat{K}_{\mat{x}_0}) \mat{z} \rVert_2.
\end{equation*}

The following proposition formulates a difference inequality for $e_t$.
\begin{proposition}
\label{prop: Difference inequality}
For $t \geq 1$,
\begin{gather*}
    e_{t + 1} \leq \lVert \boldsymbol\theta_t \rVert_2 e_t + (k_t \lVert \boldsymbol\theta_t \rVert_2 + 2 \beta_t) e_t^2 + (k_t \lVert \boldsymbol\theta_t \rVert_2 + 2 \beta_t) \cdot \delta_{\mat{x}_t} \cdot \epsilon^2, \nonumber \\
    e_1 \leq (k_{\mat{x}_0} \lVert \boldsymbol\theta_0 \rVert_2 + 2 \beta_0) \cdot \delta_{\mat{x}_0} \cdot \epsilon^2,
\end{gather*}
where
\begin{gather*}
    k_t = 4 \sigma_t (1 + 2 \sigma_t), \\
    \delta_{\mat{x}_t} = \lVert \boldsymbol\theta_{t-1} \cdots \boldsymbol\theta_{0} \rVert_2^2 \cdot
    \bigg( \alpha^{2t} \lVert \mat{v}^{\perp} \rVert_2^2 + \lVert \mat{v}^{\parallel} \rVert_2^2 + \gamma_t \lVert \mat{v} \rVert_2^2 \big( \gamma_t \alpha^2 (1 - \alpha^{t-1})^2 + 2 (\alpha - \alpha^t) \big) \bigg),~ t \geq 1, \\
    \delta_{\mat{x}_0} = 1,
\end{gather*}
$\alpha = \frac{c} {1 + c}$ for a control regularization $c$.
$\gamma_t \vcentcolon = \max \limits_{s \leq t} \big(1 +  \kappa (\overline{\boldsymbol\theta}_s)^2 \big) \lVert \mat{I} - \overline{\boldsymbol\theta}_s^T \overline{\boldsymbol\theta}_s \rVert_2$,
\begin{itemize}
    \item $\overline{\boldsymbol\theta}_t \vcentcolon = \boldsymbol\theta_{t-1} \cdots \boldsymbol\theta_0, ~ t \geq 1$,
    \item $\overline{\boldsymbol\theta}_0 \vcentcolon = \mat{I}, ~ t = 0$.
\end{itemize}
\end{proposition}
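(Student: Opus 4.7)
The plan is to derive the recursion directly from the Taylor expansion~\eqref{eq:linearized nonlinear system}. I would subtract the linear identity $\overline{\mat{q}}_{\epsilon, t+1} - \mat{x}_{t+1} = \boldsymbol\theta_t (\mat{I} - \mat{K}_{\mat{x}_t})(\overline{\mat{q}}_{\epsilon, t} - \mat{x}_t)$, which holds by definition of the linearized closed-loop trajectory centered on the clean states, from~\eqref{eq:linearized nonlinear system}. This yields
\[
\overline{\mat{x}}_{\epsilon, t+1} - \overline{\mat{q}}_{\epsilon, t+1} = \boldsymbol\theta_t(\mat{I} - \mat{K}_{\mat{x}_t})(\overline{\mat{x}}_{\epsilon, t} - \overline{\mat{q}}_{\epsilon, t}) + R_t^{(1)} + R_t^{(2)},
\]
where $R_t^{(1)} := \boldsymbol\theta_t(\mat{u}^{\mathcal{M}_t}(\overline{\mat{x}}_{\epsilon, t}) - \mat{u}_{\mat{x}_t}^P(\overline{\mat{x}}_{\epsilon, t}))$ captures the mismatch between the manifold projection and its tangent-plane surrogate, and $R_t^{(2)} := \tfrac12 F''_t(\mat{p})^{i,j,k}(\overline{\mat{x}}_{\epsilon, t} + \mat{u}^{\mathcal{M}_t} - \mat{x}_t)_j(\overline{\mat{x}}_{\epsilon, t} + \mat{u}^{\mathcal{M}_t} - \mat{x}_t)_k$ is the second-order Taylor remainder of $F_t$. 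Taking norms and using Lemma~\ref{lemma:control_matrix}, which shows that $\mat{I} - \mat{K}_{\mat{x}_t} = \alpha \mat{I} + (1-\alpha)\mat{P}_{\mat{x}_t}$ has spectrum $\{1,\alpha\} \subseteq [0,1]$ and hence operator norm $\le 1$, produces the linear term $\|\boldsymbol\theta_t\|_2 e_t$ and reduces the task to bounding $\|R_t^{(1)}\|_2 + \|R_t^{(2)}\|_2$.

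For $R_t^{(1)}$ I would invoke Proposition~\ref{prop: error between manifold and tangent space projection} with base point $\mat{x}_t \in \mathcal{M}_t$ and perturbed point $\overline{\mat{x}}_{\epsilon, t}$, which gives $\|\mat{u}^{\mathcal{M}_t}(\overline{\mat{x}}_{\epsilon, t}) - \mat{u}_{\mat{x}_t}^P(\overline{\mat{x}}_{\epsilon, t})\|_2 \le k_t\|\overline{\mat{x}}_{\epsilon, t} - \mat{x}_t\|_2^2$; this application is legitimate under the smallness hypothesis on $\epsilon^2$ imposed in Theorem~\ref{theorem: error estimation of nonlinear system}. For $R_t^{(2)}$ the uniform Hessian bound $\sup\|F_t''\|_\ast \le \beta_t$ together with the estimate $\|\mat{u}^{\mathcal{M}_t}(\overline{\mat{x}}_{\epsilon, t})\|_2 \le \|\overline{\mat{x}}_{\epsilon, t} - \mat{x}_t\|_2$ (the time-$t$ analogue of Eq.~\eqref{eq:control less than epsilon}) bounds $\|\overline{\mat{x}}_{\epsilon, t} + \mat{u}^{\mathcal{M}_t} - \mat{x}_t\|_2 \le 2\|\overline{\mat{x}}_{\epsilon, t} - \mat{x}_t\|_2$, so that $\|R_t^{(2)}\|_2 \le 2\beta_t\|\overline{\mat{x}}_{\epsilon, t} - \mat{x}_t\|_2^2$. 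Combining the two residuals gives $\|R_t^{(1)}\|_2 + \|R_t^{(2)}\|_2 \le (k_t\|\boldsymbol\theta_t\|_2 + 2\beta_t)\|\overline{\mat{x}}_{\epsilon, t} - \mat{x}_t\|_2^2$, and then the triangle inequality $\|\overline{\mat{x}}_{\epsilon, t} - \mat{x}_t\|_2 \le e_t + \|\overline{\mat{q}}_{\epsilon, t} - \mat{x}_t\|_2$ combined with Theorem~\ref{theorem: error estimation of linear system} (applied to the initial perturbation $\mat{z} = \epsilon\mat{v}$) to bound $\|\overline{\mat{q}}_{\epsilon, t} - \mat{x}_t\|_2^2 \le \delta_{\mat{x}_t}\epsilon^2$ yields the stated recursion after splitting $(e_t + \sqrt{\delta_{\mat{x}_t}}\epsilon)^2$ into its $e_t^2$ and $\delta_{\mat{x}_t}\epsilon^2$ pieces.

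The base case $e_1$ is a direct specialisation: since $\overline{\mat{x}}_{\epsilon,0} - \mat{x}_0 = \mat{z} = \epsilon\mat{v} = \overline{\mat{q}}_{\epsilon,0} - \mat{x}_0$ we have $e_0 = 0$, and the convention $\delta_{\mat{x}_0} = 1$ is consistent with $\|\overline{\mat{q}}_{\epsilon,0} - \mat{x}_0\|_2^2 = \epsilon^2$; substituting into the argument above collapses the recursion to $e_1 \le (k_0\|\boldsymbol\theta_0\|_2 + 2\beta_0)\epsilon^2$. The main technical obstacle is keeping the constants sharp when collapsing $(e_t + \sqrt{\delta_{\mat{x}_t}}\epsilon)^2$ into the expression $e_t^2 + \delta_{\mat{x}_t}\epsilon^2$ appearing in the stated bound, because a direct application of $(a+b)^2 \le 2(a^2+b^2)$ costs an extra factor of two; closing the gap requires either absorbing the cross term $2 e_t\sqrt{\delta_{\mat{x}_t}}\epsilon$ into the linear part of the recursion using the smallness hypothesis on $\epsilon$, or tracking it as a lower-order correction that is harmless once the induction is closed. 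A secondary subtlety is ensuring the hypotheses of Proposition~\ref{prop: error between manifold and tangent space projection} remain satisfied at every layer, which again hinges on the global $\epsilon^2$ condition in Theorem~\ref{theorem: error estimation of nonlinear system} that keeps $\|\overline{\mat{x}}_{\epsilon,t} - \mat{x}_t\|_2$ within the range of validity of the linearisation.
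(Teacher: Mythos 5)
Your proposal follows essentially the same route as the paper's own proof: subtract the linearized closed-loop recursion from the Taylor expansion of the nonlinear dynamics, absorb $\lVert \mat{I}-\mat{K}_{\mat{x}_t}\rVert_2 \le 1$ via Lemma~\ref{lemma:control_matrix}, bound the manifold-versus-tangent-space mismatch by Proposition~\ref{prop: error between manifold and tangent space projection}, bound the second-order Taylor remainder by $\beta_t$ and the factor $4$ from $\lVert\mat{u}^{\mathcal{M}_t}\rVert_2 \le \lVert \overline{\mat{x}}_{\epsilon,t}-\mat{x}_t\rVert_2$, and then control $\lVert\overline{\mat{x}}_{\epsilon,t}-\mat{x}_t\rVert_2$ using the definition of $e_t$ together with Theorem~\ref{theorem: error estimation of linear system}. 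The base case treatment is also the same.

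The one place you flag as a technical obstacle is real, and in fact the paper's own proof elides it: the paper writes
\[
\lVert \overline{\mat{x}}_{\epsilon,t}-\mat{x}_t\rVert_2^2 \le e_t^2 + \lVert \boldsymbol\theta_{t-1}(\mat{I}-\mat{K}_{\mat{x}_{t-1}})\cdots\boldsymbol\theta_0(\mat{I}-\mat{K}_{\mat{x}_0})\mat{z}\rVert_2^2,
\]
but without orthogonality of the two terms, the cross term $2\langle \cdot,\cdot\rangle$ has simply been dropped. You correctly observe that a naive bound $(a+b)^2 \le 2a^2+2b^2$ costs a factor of two that does not appear in the stated Proposition, and your suggestions for closing the gap (absorb the cross term $2e_t\sqrt{\delta_{\mat{x}_t}}\,\epsilon$ into the linear term $\lVert\boldsymbol\theta_t\rVert_2 e_t$ under the smallness condition on $\epsilon$, or carry it as a lower-order correction through the induction in Proposition~\ref{proposition:linearization error}) are both reasonable repairs. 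So your proposal is the same strategy as the paper's, executed with one more degree of care; the one step you identify as problematic is a genuine gap in the published argument, not a failure of your plan.
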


\begin{proof}
we subtract both sides of Eq.~\eqref{eq:linearized nonlinear system} by $\boldsymbol\theta_t (\mat{I} - \mat{K}_{\mat{x}_t}) \cdots \boldsymbol\theta_0 (\mat{I} - \mat{K}_{\mat{x}_0}) \mat{z}$, and recall the definition of linearization error $e_t$,
\begin{equation*}
    e_{t+1} \leq \lVert \boldsymbol\theta_t (\mat{I} - \mat{K}_{\mat{x}_t}) \rVert_2 \cdot e_t + \lVert \boldsymbol\theta_t \rVert_2 \cdot \lVert \mat{u}^{\mathcal{M}_t} (\overline{\mat{x}}_{\epsilon, t}) - \mat{u}_{\mat{x}_t}^P (\overline{\mat{x}}_{\epsilon, t}) \rVert_2 + \frac{1} {2} \lVert F''_t(\mat{p}) \rVert_{\ast} \cdot \lVert \overline{\mat{x}}_{\epsilon, t} + \mat{u}^{\mathcal{M}_t} (\overline{\mat{x}}_{\epsilon, t}) - \mat{x}_t \rVert_2^2.
\end{equation*}
Let us simplify the above inequality.
\begin{itemize}[leftmargin=*]
    \item The orthogonal projection admits $\lVert \mat{I} - \mat{K}_{\mat{x}_t} \rVert_2 \leq 1$.
    \item Recall Proposition \ref{prop: error between manifold and tangent space projection},
    \begin{equation*}
        \lVert \mat{u}^{\mathcal{M}_t} (\overline{\mat{x}}_{\epsilon, t}) - \mat{u}_{\mat{x}_t}^P (\overline{\mat{x}}_{\epsilon, t}) \rVert_2
        \leq 4 \sigma_t (1 + 2 \sigma_t) \cdot \lVert \overline{\mat{x}}_{\epsilon, t} - \mat{x}_t \rVert_2^2,
    \end{equation*}
    where $\sigma_t$ is the uniform upper bound on $\lVert f_t''(\mat{x}) \rVert_{\ast}$.
    We denote
    \begin{gather*}
        k_t = 4 \sigma_t (1 + 2 \sigma_t), \\
        \lVert \mat{u}^{\mathcal{M}_t} (\overline{\mat{x}}_{\epsilon, t}) - \mat{u}_{\mat{x}_t}^P (\overline{\mat{x}}_{\epsilon, t}) \rVert_2
        \leq 
        k_t \cdot \lVert \overline{\mat{x}}_{\epsilon, t} - \mat{x}_t \rVert_2^2.
    \end{gather*}
    \item $F_t(\cdot)$ admits an uniform upper bound $\beta_t$ such that
    $\sup_{\mat{x} \in \mathbb{R}^d} \lVert F''_t(\mat{x}) \rVert_{\ast} \leq \beta_t$.
    \item Recall the inequality in Eq.~\eqref{eq:control less than epsilon}, $\lVert \mat{u}^{\mathcal{M}_t}(\overline{\mat{x}}_{\epsilon, t}) \rVert_2 \leq \lVert \overline{\mat{x}}_{\epsilon, t} - \mat{x}_t \rVert_2$,
    \begin{align*}
        \lVert \overline{\mat{x}}_{\epsilon, t} + \mat{u}^{\mathcal{M}_t} (\overline{\mat{x}}_{\epsilon, t}) - \mat{x}_t \rVert_2^2 & \leq 2 \cdot \lVert \overline{\mat{x}}_{\epsilon, t} - \mat{x}_t \rVert_2^2 + 2 \cdot \lVert \mat{u}^{\mathcal{M}_t} (\overline{\mat{x}}_{\epsilon, t}) \rVert_2^2, \nonumber \\
        & \leq 4 \cdot \lVert \overline{\mat{x}}_{\epsilon, t} - \mat{x}_t \rVert_2^2.
    \end{align*}.
\end{itemize}
Therefore,
\begin{equation*}
    e_{t+1} \leq \lVert \boldsymbol\theta_t \rVert_2 e_t + (k_t \lVert \boldsymbol\theta_t \rVert_2 + 2 \beta_t) \cdot \lVert \overline{\mat{x}}_{\epsilon, t} - \mat{x}_t \rVert_2^2.
\end{equation*}
Furthermore,
\begin{align*}
    & \;\;\;\;\; \lVert \overline{\mat{x}}_{\epsilon, t} - \mat{x}_t \rVert_2^2 \\
    & =
    \lVert \overline{\mat{x}}_{\epsilon, t} - \mat{x}_t - \boldsymbol\theta_{t - 1} (\mat{I} - \mat{K}_{\mat{x}_{t-1}}) \cdots \boldsymbol\theta_0 (\mat{I} - \mat{K}_{\mat{x}_0}) \mat{z} + \boldsymbol\theta_{t - 1} (\mat{I} - \mat{K}_{\mat{x}_{t-1}}) \cdots \boldsymbol\theta_0 (\mat{I} - \mat{K}_{\mat{x}_0}) \mat{z} \rVert_2^2, \\
    & \leq e_t^2 + \lVert \boldsymbol\theta_{t - 1} (\mat{I} - \mat{K}_{\mat{x}_{t-1}}) \cdots \boldsymbol\theta_0 (\mat{I} - \mat{K}_{\mat{x}_0}) \mat{z} \rVert_2^2.
\end{align*}
Then, the linearization error can be bounded as follows,
\begin{equation*}
    e_{t+1} \leq \lVert \boldsymbol\theta_t \rVert_2 e_t + (k_t \lVert \boldsymbol\theta_t \rVert_2 + 2 \beta_t) e_t^2 + (k_t \lVert \boldsymbol\theta_t \rVert_2 + 2 \beta_t) \cdot \lVert \boldsymbol\theta_{t - 1} (\mat{I} - \mat{K}_{\mat{x}_{t-1}}) \cdots \boldsymbol\theta_0 (\mat{I} - \mat{K}_{\mat{x}_0}) \mat{z} \rVert_2^2.
\end{equation*}

We can express the initial perturbation as $\mat{z} = \epsilon \mat{v}$, where $\epsilon$ is perturbation magnitude and
$\mat{v}$ is a unit vector that represents the perturbation direction.
The perturbation direction $\mat{v}$ admits a direct sum such that $\mat{v} = \mat{v}^{\parallel} \oplus \mat{v}^{\perp}$,
where $\mat{v}^{\parallel} \in \mathcal{T}_{\mat{x}_0}\mathcal{M}_0$
and $\mat{v}^{\perp}$ lies in the orthogonal complement of $\mathcal{T}_{\mat{x}_0}\mathcal{M}_0$.

Recall Theorem \ref{theorem: error estimation of linear system},
\begin{align*}
    & \;\;\;\;\; \lVert \boldsymbol\theta_{t-1} (\mat{I} - \mat{K}_{\mat{x}_{t-1}}) \boldsymbol\theta_{t-2} (\mat{I} - \mat{K}_{\mat{x}_{t-2}}) \cdots \boldsymbol\theta_0 (\mat{I} - \mat{K}_{\mat{x}_0}) \mat{z} \rVert_2^2, \nonumber \\
    & \leq \lVert \boldsymbol\theta_{t-1} \cdots \boldsymbol\theta_{0} \rVert_2^2 \cdot \bigg( \alpha^{2t} \lVert \mat{z}^{\perp} \rVert_2^2 + \lVert \mat{z}^{\parallel} \rVert_2^2 + \gamma_t \lVert \mat{z} \rVert_2^2 \big( \gamma_t \alpha^2 (1 - \alpha^{t-1})^2 + 2 (\alpha - \alpha^t) \big) \bigg), \nonumber \\
    & \leq \lVert \boldsymbol\theta_{t-1} \cdots \boldsymbol\theta_{0} \rVert_2^2 \cdot
    \bigg( \alpha^{2t} \lVert \mat{v}^{\perp} \rVert_2^2 + \lVert \mat{v}^{\parallel} \rVert_2^2 + \gamma_t \lVert \mat{v} \rVert_2^2 \big( \gamma_t \alpha^2 (1 - \alpha^{t-1})^2 + 2 (\alpha - \alpha^t) \big) \bigg) \epsilon^2,
\end{align*}
where $\alpha = \frac{c} {1 + c}$ for a control regularization $c$.
$\gamma_t \vcentcolon = \max \limits_{s \leq t} \big(1 +  \kappa (\overline{\boldsymbol\theta}_s)^2 \big) \lVert \mat{I} - \overline{\boldsymbol\theta}_s^T \overline{\boldsymbol\theta}_s \rVert_2$,
\begin{itemize}
    \item $\overline{\boldsymbol\theta}_t \vcentcolon = \boldsymbol\theta_{t-1} \cdots \boldsymbol\theta_0, ~ t \geq 1$,
    \item $\overline{\boldsymbol\theta}_0 \vcentcolon = \mat{I}, ~ t = 0$.
\end{itemize}

Let 
$\delta_{\mat{x}_t} = \lVert \boldsymbol\theta_{t-1} \cdots \boldsymbol\theta_{0} \rVert_2^2 \cdot
    \bigg( \alpha^{2t} \lVert \mat{v}^{\perp} \rVert_2^2 + \lVert \mat{v}^{\parallel} \rVert_2^2 + \gamma_t \lVert \mat{v} \rVert_2^2 \big( \gamma_t \alpha^2 (1 - \alpha^{t-1})^2 + 2 (\alpha - \alpha^t) \big) \bigg)$ for $t \geq 1$,
and
$\delta_{\mat{x}_0} = 1$,
the linearization error $e_{t+1}$ can be upper bounded by
\begin{equation*}
    e_{t+1} \leq \lVert \boldsymbol\theta_t \rVert_2 e_t + (k_t \lVert \boldsymbol\theta_t \rVert_2 + 2 \beta_t) e_t^2 + (k_t \lVert \boldsymbol\theta_t \rVert_2 + 2 \beta_t) \cdot \delta_{\mat{x}_t} \cdot \epsilon^2.
\end{equation*}

Since $e_t$ is defined for $t \geq 1$,
the following derives a upper bound on $e_1$.
When $t = 1$,
recall the initial perturbation $\overline{\mat{x}}_{\epsilon, 0} - \mat{x}_0 = \mat{z}$,
\begin{align*}
    & \overline{\mat{x}}_{\epsilon, 1} - \mat{x}_{1} \\
    & = F_0 (\overline{\mat{x}}_{\epsilon, 0} + \mat{u}_{0}^{\mathcal{M}} (\overline{\mat{x}}_{\epsilon, 0}) ) - F_0 (\mat{x}_0), \nonumber \\
    & = \boldsymbol\theta_0 (\overline{\mat{x}}_{\epsilon, 0} + \mat{u}_{\mat{x}_0}^{\mathcal{M}} (\overline{\mat{x}}_{\epsilon, 0}) - \mat{x}_0) 
    + \frac{1} {2} F_0''(\mat{p})^{i,j,k} (\overline{\mat{x}}_{\epsilon, 0} + \mat{u}_{\mat{x}_0}^{\mathcal{M}} (\overline{\mat{x}}_{\epsilon, 0}) - \mat{x}_0)_j
    (\overline{\mat{x}}_{\epsilon, 0} + \mat{u}_0^{\mathcal{M}} - \mat{x}_0)_k, \nonumber \\
    & = \sq{\boldsymbol\theta_0 (\mat{z} + \mat{u}_{\mat{x}_0}^{\mathcal{M}} (\overline{\mat{x}}_{\epsilon, 0}))
    + \frac{1} {2} F_0''(\mat{p})^{i,j,k} (\mat{z} + \mat{u}_0^{\mathcal{M}}(\overline{\mat{x}}_{\epsilon, 0}))_j
    (\mat{z} + \mat{u}_0^{\mathcal{M}}(\overline{\mat{x}}_{\epsilon, 0}))_k,} \nonumber \\
    & \sq{= \boldsymbol\theta_0 (\mat{z} + \mat{u}_0^{\mathcal{M}}(\overline{\mat{x}}_{\epsilon, 0}) - \mat{u}_0^P(\overline{\mat{x}}_{\epsilon, 0}) + \mat{u}_0^P(\overline{\mat{x}}_{\epsilon, 0}))
    + \frac{1} {2} F_0''(\mat{p})^{i,j,k} (\mat{z} + \mat{u}_0^{\mathcal{M}}(\overline{\mat{x}}_{\epsilon, 0}))_j
    (\mat{z} + \mat{u}_0^{\mathcal{M}}(\overline{\mat{x}}_{\epsilon, 0}))_k,} \\
    & \sq{= \boldsymbol\theta_0 (\mat{I} - \mat{K}_{\mat{x}_0}) \mat{z} + \boldsymbol\theta_0 (\mat{u}_0^{\mathcal{M}}(\overline{\mat{x}}_{\epsilon, 0}) - \mat{u}_0^P(\overline{\mat{x}}_{\epsilon, 0}))
    + \frac{1} {2} F_0''(\mat{p})^{i,j,k} (\mat{z} + \mat{u}_0^{\mathcal{M}}(\overline{\mat{x}}_{\epsilon, 0}))_j
    (\mat{z} + \mat{u}_0^{\mathcal{M}}(\overline{\mat{x}}_{\epsilon, 0}))_k.}
\end{align*}
By following the same procedure as the derivation of $e_{t+1}$,
\begin{align*}
    e_1 \leq (k_{\mat{x}_0} \lVert \boldsymbol\theta_0 \rVert_2 + 2 \beta_0) \cdot \delta_{\mat{x}_0} \cdot \epsilon^2.
\end{align*}
\end{proof}

The following proposition solves the difference inequality of linearization error.
\begin{restatable}{proposition}{linearizationerror}
\label{proposition:linearization error}
If the perturbation satisfies
\begin{equation*}
    \epsilon^2 \leq \frac{1} {\bigg( \sum_{i=0}^{T-1} \delta_{\mat{x}_i} (k_{\mat{x}_i} \lVert \boldsymbol\theta_i \rVert_2 + 2 \beta_i) \prod_{j = i+1}^{T-1} (\lVert \boldsymbol\theta_j \rVert_2 + k_{\mat{x}_j} \lVert \boldsymbol\theta_j \rVert_2 + 2 \beta_j) \bigg)}.
\end{equation*}
for $t \leq T$, the linearization error can be upper bounded by
\begin{equation*}
    e_t \leq \bigg( \sum_{i=0}^{t-1} \delta_{\mat{x}_i} (k_{\mat{x}_i} \lVert \boldsymbol\theta_i \rVert_2 + 2 \beta_i) \prod_{j = i+1}^{t-1} (\lVert \boldsymbol\theta_j \rVert_2 + k_{\mat{x}_j} \lVert \boldsymbol\theta_j \rVert_2 + 2 \beta_j) \bigg) \epsilon^2.
\end{equation*}
\end{restatable}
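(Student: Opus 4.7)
The plan is to prove this by induction on $t$, using the one-step difference inequality from Proposition~\ref{prop: Difference inequality} together with the smallness assumption on $\epsilon$ to absorb the quadratic term $e_t^2$. To streamline notation, I would write $A_t = \lVert \boldsymbol\theta_t \rVert_2$, $B_t = k_t \lVert \boldsymbol\theta_t \rVert_2 + 2\beta_t$, and $D_t = \delta_{\mat{x}_t}$, so that the difference inequality becomes $e_{t+1} \leq A_t e_t + B_t e_t^2 + B_t D_t \epsilon^2$, with base case $e_1 \leq B_0 D_0 \epsilon^2$. The claimed bound then reads $e_t \leq C_t \epsilon^2$, where
\begin{equation*}
    C_t \vcentcolon = \sum_{i=0}^{t-1} D_i B_i \prod_{j=i+1}^{t-1} (A_j + B_j),
\end{equation*}
and the hypothesis on $\epsilon$ becomes $\epsilon^2 \leq 1/C_T$.

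For the base case $t=1$, the empty product equals $1$, so $C_1 = D_0 B_0$, which matches the second inequality in Proposition~\ref{prop: Difference inequality} directly. For the inductive step, I would assume $e_t \leq C_t \epsilon^2$ and show $e_{t+1} \leq C_{t+1} \epsilon^2$. Plugging the inductive hypothesis into the difference inequality yields
\begin{equation*}
    e_{t+1} \leq A_t C_t \epsilon^2 + B_t C_t^2 \epsilon^4 + B_t D_t \epsilon^2.
\end{equation*}
The key algebraic step is that $C_t \leq C_T$ (every term in the sum defining $C_{t+1}$ is nonnegative and, after expansion, contains the terms defining $C_t$ multiplied by the positive factor $A_t + B_t$ plus the nonnegative extra term $D_t B_t$; this observation also lets one verify monotonicity directly from the recursion $C_{t+1} = (A_t + B_t) C_t + B_t D_t$). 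Combined with the assumption $\epsilon^2 \leq 1/C_T$, this gives $C_t \epsilon^2 \leq 1$, so the quadratic term satisfies $C_t^2 \epsilon^4 = (C_t \epsilon^2)(C_t \epsilon^2) \leq C_t \epsilon^2$.

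Substituting this bound back yields $e_{t+1} \leq (A_t C_t + B_t C_t + B_t D_t)\epsilon^2 = \big((A_t + B_t) C_t + B_t D_t\big)\epsilon^2$, and a direct expansion shows that $(A_t + B_t) C_t + B_t D_t = C_{t+1}$, closing the induction. The main obstacle, and the only nontrivial analytic step, is controlling the nonlinear term $e_t^2$; everything else is a bookkeeping computation on the recursion. The smallness condition on $\epsilon^2$ is precisely what is needed to linearize the difference inequality into an affine recursion in $e_t$, which then admits the explicit closed-form solution $C_t$. If one wanted a slightly cleaner statement, one could replace $C_T$ in the denominator by $\max_{t \leq T} C_t$, but under the mild condition $A_t + B_t \geq 1$ (which typically holds for neural network layers with $\lVert \boldsymbol\theta_t \rVert_2 \geq 1$) the sequence $C_t$ is non-decreasing and the bound $C_t \leq C_T$ is automatic.
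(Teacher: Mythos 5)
Your proposal is correct and follows essentially the same route as the paper: induction on $t$ via the one-step difference inequality of Proposition~\ref{prop: Difference inequality}, using the smallness hypothesis $\epsilon^2 \le 1/C_T$ to dominate the quadratic term $e_t^2$ and collapse the recursion to the affine one $C_{t+1} = (A_t+B_t)C_t + B_t D_t$. Your parenthetical about needing $C_t \le C_T$ (and that it is only automatic when $A_t + B_t \ge 1$, otherwise one should use $\max_{t\le T} C_t$ in the denominator) is in fact a point the paper's own proof glosses over when it asserts $e_t \le e_T$ without justifying monotonicity, so your version is if anything slightly more careful.
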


\begin{proof}
We prove it by induction on $t$ up to some $T$, such that $t \leq T$.
We restrict the magnitude of initial perturbation $\lVert \mat{z} \rVert_2^2 \leq \epsilon_T$ for some constant $\epsilon_T$,
such that the error $e_t \leq 1$ for all $t \leq T$.
The expression of $\epsilon_T$ is derived later.

When $t = 1$,
\begin{align*}
    e_1 \leq (k_{\mat{x}_0} \lVert \boldsymbol\theta_0 \rVert_2 + 2 \beta_0) \cdot \delta_{\mat{x}_0} \cdot \epsilon^2,
\end{align*}
which agrees with Proposition \ref{prop: Difference inequality}.

Suppose that it is true for some $t \leq T-1$,
such that
\begin{equation*}
    e_t \leq \bigg( \sum_{i=0}^{t-1} \delta_{\mat{x}_i} (k_{\mat{x}_i} \lVert \boldsymbol\theta_i \rVert_2 + 2 \beta_i) \prod_{j = i+1}^{t-1} (\lVert \boldsymbol\theta_j \rVert_2 + k_{\mat{x}_j} \lVert \boldsymbol\theta_j \rVert_2 + 2 \beta_j) \bigg) \epsilon^2.
\end{equation*}
Then at $t + 1$, recall Proposition \ref{prop: Difference inequality}, given that $e_t \leq 1$ for all $t \leq T$,
\begin{align*}
    e_{t+1} & \leq \lVert \boldsymbol\theta_t \rVert_2 e_t + (k_t \lVert \boldsymbol\theta_t \rVert_2 + 2 \beta_t) e_t^2 + (k_t \lVert \boldsymbol\theta_t \rVert_2 + 2 \beta_t) \cdot \delta_{\mat{x}_t} \cdot \epsilon^2, \\
    & \leq (\lVert \boldsymbol\theta_t \rVert_2 + k_t \lVert \boldsymbol\theta_t \rVert_2 + 2 \beta_t) e_t + (k_t \lVert \boldsymbol\theta_t \rVert_2 + 2 \beta_t) \cdot \delta_{\mat{x}_t} \cdot \epsilon^2, \\
    & \leq (\lVert \boldsymbol\theta_t \rVert_2 + k_t \lVert \boldsymbol\theta_t \rVert_2 + 2 \beta_t) \bigg( \sum_{i=0}^{t-1} \delta_{\mat{x}_i} (k_{\mat{x}_i} \lVert \boldsymbol\theta_i \rVert_2 + 2 \beta_i) \prod_{j = i+1}^{t-1} (\lVert \boldsymbol\theta_j \rVert_2 + k_{\mat{x}_j} \lVert \boldsymbol\theta_j \rVert_2 + 2 \beta_j) \bigg) \epsilon^2
    \\
    & \;\;\;\;\; + (k_t \lVert \boldsymbol\theta_t \rVert_2 + 2 \beta_t) \cdot \delta_{\mat{x}_t} \cdot \epsilon^2, \\
    & = \bigg( \sum_{i=0}^t \delta_{\mat{x}_i} (k_{\mat{x}_i} \lVert \boldsymbol\theta_i \rVert_2 + 2 \beta_i) \prod_{j = i+1}^t (\lVert \boldsymbol\theta_j \rVert_2 + k_{\mat{x}_j} \lVert \boldsymbol\theta_j \rVert_2 + 2 \beta_j) \bigg) \epsilon^2.
\end{align*}

We have restricted the initial perturbation $\lVert \mat{z} \rVert_2^2 = \epsilon^2 \leq \epsilon_T$, for some constant $\epsilon_T$, such that $e_t \leq 1$, for all $t \leq T$.

For $t \leq T$,
\begin{align*}
    e_t & \leq e_T, \\
    & \leq \bigg( \sum_{i=0}^{T-1} \delta_{\mat{x}_i} (k_{\mat{x}_i} \lVert \boldsymbol\theta_i \rVert_2 + 2 \beta_i) \prod_{j = i+1}^{T-1} (\lVert \boldsymbol\theta_j \rVert_2 + k_{\mat{x}_j} \lVert \boldsymbol\theta_j \rVert_2 + 2 \beta_j) \bigg) \epsilon^2, \\
    & \leq \bigg( \sum_{i=0}^{T-1} \delta_{\mat{x}_i} (k_{\mat{x}_i} \lVert \boldsymbol\theta_i \rVert_2 + 2 \beta_i) \prod_{j = i+1}^{T-1} (\lVert \boldsymbol\theta_j \rVert_2 + k_{\mat{x}_j} \lVert \boldsymbol\theta_j \rVert_2 + 2 \beta_j) \bigg) \epsilon_T, \\
    & = 1,
\end{align*}

therefore,
\begin{equation*}
    \epsilon_T = \frac{1} {\bigg( \sum_{i=0}^{T-1} \delta_{\mat{x}_i} (k_{\mat{x}_i} \lVert \boldsymbol\theta_i \rVert_2 + 2 \beta_i) \prod_{j = i+1}^{T-1} (\lVert \boldsymbol\theta_j \rVert_2 + k_{\mat{x}_j} \lVert \boldsymbol\theta_j \rVert_2 + 2 \beta_j) \bigg)}.
\end{equation*}
\end{proof}

Proposition~\ref{proposition:linearization error} provides several intuitions.
\begin{itemize}[leftmargin=*]
    \item the linearization error is of $\mathcal{O}(\epsilon^2)$  when the data  perturbation is small, where $\epsilon$ is the magnitude of the data perturbation.
    \item the linearization error becomes smaller when the nonlinear transformation $F_t(\cdot)$ behaves more linearily ($\beta_t$ decreases), and the curvature of embedding manifold is smoother ($k_t$ decreases).
    Specifically, in the linear case, $\beta_t$ and $k_t$ become $0$, which results in no linearization error.
    \item the linearization becomes smaller when the initial perturbation lies in a lower dimensional manifold ($\delta_{\mat{x}_t}$ decreases).
\end{itemize}

\subsection{Error Estimation}
\label{sec:Error Estimation}
Now we reach the main theorem on the error estimation of $\lVert \overline{\mat{x}}_{\epsilon, t} - \mat{x}_t \rVert$.

\maintheoremNonlinear*
\begin{proof}
recall that $e_{t+1} = \lVert (\overline{\mat{x}}_{\epsilon, t+1} - \mat{x}_{t+1}) - \boldsymbol\theta_t (\mat{I} - \mat{K}_{\mat{x}_t}) \cdots \boldsymbol\theta_0 (\mat{I} - \mat{K}_{\mat{x}_0}) \mat{z} \rVert_2$,
\begin{align*}
    & \;\;\;\;\; \lVert \overline{\mat{x}}_{\epsilon, t + 1} - \mat{x}_{t + 1} \rVert_2 \\
    & = \lVert \overline{\mat{x}}_{\epsilon, t + 1} - \mat{x}_{t + 1} - 
    \boldsymbol\theta_t (\mat{I} - \mat{K}_{\mat{x}_t}) \cdots \boldsymbol\theta_0 (\mat{I} - \mat{K}_{\mat{x}_0}) \mat{z}  + \boldsymbol\theta_t (\mat{I}- \mat{K}_{\mat{x}_t}) \cdots \boldsymbol\theta_0 (\mat{I} - \mat{K}_{\mat{x}_0}) \mat{z} \rVert_2, \nonumber \\
    & \leq \lVert \boldsymbol\theta_t (\mat{I} - \mat{K}_{\mat{x}_t}) \cdots \boldsymbol\theta_0 (\mat{I} - \mat{K}_{\mat{x}_0}) \mat{z} \rVert_2
    + \lVert \overline{\mat{x}}_{\epsilon, t + 1} - \mat{x}_{t + 1} - 
    \boldsymbol\theta_t (\mat{I} - \mat{K}_{\mat{x}_t}) \cdots \boldsymbol\theta_0 (\mat{I} - \mat{K}_{\mat{x}_0}) \mat{z} \rVert_2, \\
    & = \lVert \boldsymbol\theta_t (\mat{I} - \mat{K}_{\mat{x}_t}) \cdots \boldsymbol\theta_0 (\mat{I} - \mat{K}_{\mat{x}_0}) \mat{z} \rVert_2 + e_{t+1}.
\end{align*}

Recall Theorem \ref{theorem: error estimation of linear system},
\begin{align*}
    & \;\;\;\;\; \lVert \boldsymbol\theta_t (\mat{I} - \mat{K}_{\mat{x}_t}) \cdots \boldsymbol\theta_0 (\mat{I} - \mat{K}_{\mat{x}_0}) \mat{z} \rVert_2 \nonumber \\
    & \leq \bigg( \lVert \overline{\boldsymbol\theta}_{t+1} \rVert_2^2 \cdot \bigg( \alpha^{2(t+1)} \lVert \mat{z}^{\perp} \rVert_2^2 + \lVert \mat{z}^{\parallel} \rVert_2^2 + \gamma_{t+1} \lVert \mat{z} \rVert_2^2 \big( \gamma_{t+1} \alpha^2 (1 - \alpha^{t})^2 + 2 (\alpha - \alpha^{t+1}) \big) \bigg) \bigg)^{\frac{1}{2}}, \nonumber \\
    & \leq \lVert \overline{\boldsymbol\theta}_{t+1} \rVert_2
    \cdot \bigg( \alpha^{t+1} \lVert \mat{z}^{\perp} \rVert_2 + \lVert \mat{z}^{\parallel} \rVert_2 
    + \lVert \mat{z} \rVert_2 \big( \gamma_{t+1} \alpha (1 - \alpha^{t}) + \sqrt{2 \gamma_{t+1} (\alpha - \alpha^{t+1})} \big) \bigg),
\end{align*}
where $\overline{\boldsymbol\theta}_{t+1} = \boldsymbol\theta_t \boldsymbol\theta_{t-1} \cdots \boldsymbol\theta_0$.

Recall Proposition \ref{proposition:linearization error} for the linearization error,
\begin{equation*}
    e_{t+1} \leq \bigg( \sum_{i=0}^t \delta_{\mat{x}_i} (k_{\mat{x}_i} \lVert \boldsymbol\theta_i \rVert + 2 \beta_i) \prod_{j = i+1}^t (\lVert \boldsymbol\theta_j \rVert_2 + k_{\mat{x}_j} \lVert \boldsymbol\theta_j \rVert_2 + 2 \beta_j) \bigg) \epsilon^2.
\end{equation*}
Therefore, for $t \geq 1$,
\begin{align*}
    \sq{\lVert \overline{\mat{x}}_{\epsilon, t + 1} - \mat{x}_{t + 1} \rVert_2}
    & \leq \sq{\lVert \overline{\boldsymbol\theta}_{t+1} \rVert_2 \bigg( \alpha^{t+1} \lVert \mat{z}^{\perp} \rVert_2 + \lVert \mat{z}^{\parallel} \rVert_2 
    + \lVert \mat{z} \rVert_2 \big( \gamma_{t+1} \alpha (1 - \alpha^{t}) + \sqrt{2 \gamma_{t+1} (\alpha - \alpha^{t+1})} \big) \bigg)} \\
    & \;\;\;\; + \bigg( \sum_{i=0}^t \delta_{\mat{x}_i} (k_{\mat{x}_i} \lVert \boldsymbol\theta_i \rVert_2 + 2 \beta_i) \prod_{j = i+1}^t (\lVert \boldsymbol\theta_j \rVert_2 + k_{\mat{x}_j} \lVert \boldsymbol\theta_j \rVert_2 + 2 \beta_j) \bigg) \epsilon^2.
\end{align*}
\end{proof}

\section{Additional Numerical Experiments}
\label{exp:additional numerical experiments}
\subsection{Linear Closed-loop Control}
Here, we consider the closed-loop control method in linear setting.
Specifically, the embedding manifolds are linear subspaces, and the embedding functions are orthogonal projections onto those linear subspaces.
We follow the same experimental setting as in Sec \ref{sec:numerical experiments} for baseline models, robustness evaluations and PMP parameters.
We perform principle component analysis on clean training data to obtain embedding subspaces and embedding functions.

\begin{table}[t]
\caption{CIFAR-10 accuracy measure: baseline model / linear control}
\begin{center}
\begin{tabular}{ p{1.9cm}|p{2.3cm}p{2.3cm}p{2.3cm}p{2.3cm} }
\hline
\multicolumn{5}{c}{$\ell_{\infty}:\epsilon=8 / 255$, $\ell_2:\epsilon=0.5$, $\ell_1:\epsilon=12$}\\
\hline
\multicolumn{5}{c}{Standard models}\\
\hline
& None & AA ($\ell_{\infty}$) & AA ($\ell_2$) & AA ($\ell_1$) \\
\hline
RN-18 & \textbf{94.71} / 87.04 & 0. / \textbf{59.98} & 0. / \textbf{73.01} & 0. / \textbf{73.04} \\
\hline
RN-34 & \textbf{94.91} / 87.13 & 0. / \textbf{62.64} & 0. / \textbf{75.28} & 0. / \textbf{74.89} \\
\hline
RN-50 & \textbf{95.08} / 87.83 & 0. / \textbf{61.63} & 0. / \textbf{75.37} & 0. / \textbf{75.39} \\
\hline
WRN-28-8 & \textbf{95.41} / 87.72 & 0. / \textbf{68.11} & 0. / \textbf{77.97} & 0. / \textbf{78.33} \\
\hline
WRN-34-8 & \textbf{94.05} / 88.06 & 0. / \textbf{50.1} & 0. / \textbf{67.16} & 0. / \textbf{66.64} \\
\hline
\hline
\multicolumn{5}{c}{Robust models}\\
\hline
RN-$18$ & \textbf{82.39} / 81.0 & 48.72 / \textbf{53.06} & 58.8 / \textbf{69.94} & 9.86 / \textbf{50.41} \\
\hline
RN-$34$ & \textbf{84.45} / 83.0 & 49.31 / \textbf{52.75} & 57.27 / \textbf{70.51} & 7.21 / \textbf{51.44} \\
\hline
RN-$50$ & \textbf{83.99} / 82.99 & 48.68 / \textbf{52.23} & 57.25 / \textbf{70.31} & 6.83 / \textbf{51.26} \\
\hline
WRN-$28$-$8$ & \textbf{85.09} / 84.04 & 48.13 / \textbf{51.09} & 54.38 / \textbf{70.0} & 5.38 / \textbf{54.56} \\
\hline
WRN-$34$-$8$ & \textbf{84.95} / 83.7 & 48.47 / \textbf{51.4} & 54.36 / \textbf{70.31} & 4.67 / \textbf{55.02} \\
\hline
\end{tabular}
\end{center}
\label{cifar10 oblivious control linear}
\end{table}

% \begin{table}[t]
% \caption{CIFAR-100 accuracy measure: baseline model / linear control}
% \begin{center}
% \begin{tabular}{ p{2.1cm}|p{2.3cm}p{2.3cm}p{2.3cm}p{2.2cm} }
% \hline
% \multicolumn{5}{c}{$\ell_{\infty}:\epsilon=8 / 255$, $\ell_2:\epsilon=0.5$, $\ell_1:\epsilon=12$}\\
% \hline
% WRN-28-10 & \textbf{56.96} / 41.70 & 24.97 / \textbf{31.06} & 29.54 / \textbf{37.37} & 3.24 / \textbf{29.12} \\
% \hline
% WRN-34-10 & \textbf{57.32} / 42.21 & 25.35 / \textbf{31.53} & 29.68 / \textbf{37.86} & 2.99 / \textbf{29.65} \\
% \hline
% WRN-76-10 & \textbf{57.58} / 49.83 & 24.84 / \textbf{32.13} & 27.81 / \textbf{42.14} & 2.41 / \textbf{36.03} \\
% \hline
% \end{tabular}
% \end{center}
% \label{cifar100 oblivious control linear}
% \end{table}

As shown in Table \ref{cifar10 oblivious control linear},
on CIFAR-10 dataset, applying linear control on both standard trained and robustly trained baseline models can consistently improve the robustness against various perturbations.
Notably, the controlled models have more than $60 \%$, $80 \%$ and $70 \%$ accuracies against autoattack measured by $\ell_{\infty}, \ell_2$ and $\ell_1$ norms respectively.

% On CIFAR-100, the robustness improvements shown in Table \ref{cifar100 oblivious control linear} agree with control of nonlinear setting.
% Consistent improvements have been achieved across all robustly trained baseline models.

\subsection{Robustness Improvement Under White-box Setting}
Here, we test our method in a fully white-box setting, where an attacker has full access to both pre-trained models and our control method.
We summarize our experimental settings below.
\begin{itemize}[leftmargin=*]
    \item {\bf Baseline models.}
    we use Pre-activated ResNet-$18$ (\textbf{RN-18}), -$34$ (\textbf{RN-34}), -$50$ (\textbf{RN-50}) as the testing benchmarks.
    All baseline models are trained with TRADES \citep{zhang2019theoretically}.
    \item {\bf Robustness evaluations.}
    In the white-box setting,
    the objective function of an attack algorithm is defined as follows,
    \begin{equation*}
    \tilde{\mat{x}}_{\epsilon} = \arg\max \limits_{\tilde{\mat{x}}_0 \sim \mathcal{B} (\mat{x}_0, \epsilon)} {\rm CE}(\tilde{\mat{x}}_T, \mat{q}), \quad {\rm s.t.}\; 
    \tilde{\mat{x}}_{t+1} = F_t(\tilde{\mat{x}}_{t} + \mat{u}_t^{\mathcal{M}}).
    \end{equation*}
    \item {\bf Embedding functions.}
    We use a $2$-layer denoising auto-encoder to realize the embedding function.
    In the white-box setting,
    we designing the embedding function to have access to both model and attack information.
    The training objective function of the embedding function at the $t^{th}$ layer is
    \begin{gather*}
        \mathcal{E}_t^{\ast} = \arg \min \limits_{\mathcal{E}_t}
        \max \limits_{\tilde{\mat{x}}_0 \sim \mathcal{B} (\mat{x}_0, \epsilon)}
        \frac{1} {N} \sum_{i=1}^N
        {\rm CE}(\tilde{\mat{x}}_{i, T}, \mat{q}_i)
        + \lVert \mathcal{E}_t (\tilde{\mat{x}}_t) - \mat{x}_t \rVert_2^2, \\
        {\rm s.t.}\; 
        \tilde{\mat{x}}_{t+1} = F_t \circ \mathcal{E}_t (\tilde{\mat{x}}_{t}),
        \;\;\;\; 
        \mat{x}_{t+1} = F_t (\mat{x}_{t}),
    \end{gather*}
    where $\mathcal{B}(\mat{x}, \epsilon)$ is a set of points centered at $\mat{x}$ with radius $\epsilon$ measured by the $\ell_{\infty}$ norm.
    The above objective function enforces the embedding function to encode an embedding manifold as a set of states that result in high robustness.
    The second term penalizes the magnitude of applied control adjustment.
\end{itemize}

\begin{table}[t]
\caption{Accuracy measure: baseline / controlled}
\begin{center}
\begin{tabular}{ p{3.2cm}|p{2.7cm}p{2.7cm}p{2.7cm}  }
\hline
\multicolumn{4}{c}{$\ell_{\infty}$ perturbation with $\epsilon = 8 / 255$}\\
\hline
& None & PGD & AA \\
\hline
RN-18 & \textbf{82.39} / 82.34 & 51.66 / \textbf{51.82} & 48.72 / \textbf{48.77} \\
\hline
RN-34 & \textbf{84.45} / 84.12 & 51.35 / \textbf{51.87} & 49.31 / \textbf{49.96} \\
\hline
RN-50 & \textbf{83.99} / 83.73 & 50.88 / \textbf{50.91} & 48.68 / \textbf{48.98} \\
\hline
% WRN-28-8 & \textbf{85.09} / 85.01 & 49.87 / \textbf{50.38} & 48.13 / \textbf{48.72} \\
% \hline
% WRN-34-8 & 84.95 / \textbf{84.96} & 50.16 / \textbf{50.67} & 48.47 / \textbf{48.99} \\
% \hline
\end{tabular}
\end{center}
\label{cifar10 white-box}
\end{table}

Although in practice, the control information is not released to the public, 
we evaluate the performance of the proposed framework in this worst case.
Table \ref{cifar10 white-box} shows that applying closed-loop control method can consistently improve the robustness of all adversarially pre-trained baseline models.
Specifically,
on average, $0.5 \% \sim 1 \%$ accuracy improvements have been achieved across all baseline models against auto-attack.
When the control information is released to an attacker, the robustness improvements are significantly decreased compared with that in the oblivious setting.
In this experiment, we consider simple $2$-layer convolutional auto-encoders.
In future works, we will employ embedding functions that have stronger expressive power, which allows the closed-loop control method to achieve better performance.

\newpage
\vskip 0.2in
\bibliography{jmlr}

\end{document}